\newcommand{\BlackBox}{\rule{1.5ex}{1.5ex}}  
\newenvironment{proof}{\par\noindent{\bf Proof\ }}{\hfill\BlackBox\\[2mm]}
\newcommand{\calI}{\mathcal{I}}
\newcommand{\calX}{\mathcal{X}}
\newcommand{\calA}{\mathcal{A}}
\newcommand{\calB}{\mathcal{B}}
\newcommand{\calT}{\mathcal{T}}
\newcommand{\Ibar}{\overline{I}}
\newcommand{\underN}{\underline{N}}
\newcommand{\Int}{\mathrm{Int}}
\newcommand{\bin}{\mathsf{b}}
\newcommand{\stopt}{\tau}
\newcommand{\stoptnn}{\tau^{\mathrm{}}}
\newcommand{\fpoint}{f^{\mathrm{pnt}}}
\newcommand{\fsec}{f^{\mathrm{sec}}}
\newcommand{\class}{\mathcal{F}_{\mathrm{conv}}}
\newcommand{\fshift}{f_{\leftarrow t}}
\newcommand{\supp}{\mathrm{supp}}
\newcommand{\noise}{w}
\newcommand{\fls}{\widehat{f}_{\mathrm{LS}}}
\newcommand{\fst}{f_{\star}}
\newcommand{\linf}{L_{\infty}}
\newcommand{\ltwo}{L_{2}}
\newcommand{\stopp}{\boldsymbol{\tau}}
\newcommand{\Lamavg}{\Lambda_{\mathrm{avg}}}
\newcommand{\Lammax}{\Lambda_{\mathrm{max}}}
\newcommand{\calL}{\mathcal{L}}
\newcommand{\calG}{\mathcal{G}}
\newcommand{\calF}{\mathcal{F}}
\newcommand{\Alg}{\mathsf{Alg}}
\newcommand{\kl}{\mathrm{kl}}
\newcommand{\KL}{\mathrm{KL}}
\newcommand{\R}{\mathbb{R}}
\newcommand{\I}{\mathbb{I}}
\newcommand{\Exp}{\mathbb{E}}
\newcommand{\fhat}{\widehat{f}}
\newcommand{\Ibad}{I_{\mathrm{bad}}}
\newcommand{\tbad}{t_{\mathrm{bad}}}
\newcommand{\eps}{\varepsilon}
\renewcommand{\P}{\mathbf{P}}
\renewcommand{\Pr}{\mathbb{P}}
\def\E{\mathbb{E}}
\def\1{\mathbf{1}}
\def\P{\mathbb{P}}
\def\R{\mathbb{R}}
\def\calB{\mathcal{B}}
\def\calG{\mathcal{G}}
\def\@Aboxed#1&#2\ENDDNE{%
  \settowidth\@tempdima{$\displaystyle#1{}$}%
  \addtolength\@tempdima{\fboxsep}%
  \addtolength\@tempdima{\fboxrule}%
  \global\@tempdima=\@tempdima
  \kern\@tempdima
  &
  \kern-\@tempdima
  \fcolorbox{red}{yellow}{$\displaystyle #1#2$}
}
\newcommand{\Egood}{\mathcal{E}_{\mathrm{good}}}
\newcommand{\unifsim}{\overset{\mathrm{unif}}{\sim}}
\newcommand{\Npack}{N_{\mathrm{pck}}}
\newcommand{\xpack}{x^{(\mathrm{pck})}}
\newcommand{\Xpack}{\calX^{(\mathrm{pck})}}
\newcommand{\dyads}{\vec{s}}
\renewcommand{\Pr}{\mathbb{P}}
\newtheorem{theorem}{Theorem}[section]
\newtheorem{definition}{Definition}
\newtheorem{proposition}[theorem]{Proposition}
\newtheorem{remark}{Remark}[section]
\newtheorem{lemma}[theorem]{Lemma}
\newtheorem{fact}[theorem]{Fact}
\DeclarePairedDelimiter\ceil{\lceil}{\rceil}
\newcommand{\tleft}{t_\mathrm{left}}
\newcommand{\tright}{t_\mathrm{right}}
\def\1{\mathbf{1}}
\def\E{\mathbb{E}}
\def\P{\mathbb{P}}
\def\Xcal{\mathcal{X}}
\def\Tcal{\mathcal{T}}
\def\Lcal{\mathcal{L}}
\def\Sec{\mathrm{Sec}}
\def\parents{\text{\sf parents}}
\def\xr{x_{r(I)}}
\def\xl{x_{l(I)}}
\def\xm{x_{m(I)}}
\def\rI{{r(I)}}
\def\lI{{l(I)}}
\def\mI{{m(I)}}
\newcommand{\deltil}{\delta^{\mathrm{pnt}}}
\renewenvironment{quote}
  {\list{}{\rightmargin=.5cm \leftmargin=.5cm}%
   \item\relax}
  {\endlist}
\begin{document}
\title{\Large \bf Adaptive Sampling for Convex Regression}

\author{Max Simchowitz\thanks{UC Berkeley, CA. msimchow@berkeley.edu. } \and
Kevin Jamieson\thanks{University of Washington, WA. jamieson@cs.washington.edu }
\and
Jordan W. Suchow\thanks{UC Berkeley, CA.  suchow@berkeley.edu.} 
\and Thomas L. Griffiths\thanks{UC Berkeley, CA. tom\_griffiths@berkeley.edu}}

\maketitle{}
\begin{abstract}
In this paper, we introduce the first principled adaptive-sampling procedure for learning a convex function in the $L_\infty$ norm, a problem that arises often in the behavioral and social sciences. We present a function-specific measure of complexity and use it to prove that, for each convex function $f_{\star}$, our algorithm nearly attains the information-theoretically optimal, function-specific error rate. We also corroborate our theoretical contributions with numerical experiments, finding that our method substantially outperforms passive, uniform sampling for favorable synthetic and data-derived functions in low-noise settings with large sampling budgets. Our results also suggest an idealized ``oracle strategy'', which we use to gauge the potential advance of \emph{any} adaptive-sampling strategy over passive sampling, for any given convex function.

\end{abstract}





\section{Introduction}
Many functions that model individual economic utility, the output of manufacturing processes, and natural phenomena in the social sciences are either convex or concave.
For example, convex functions are used to model utility functions that exhibit {\em temporal discounting}, a classic effect in behavioral economics where people value immediate rewards over delayed rewards~\citep{frederick2002time, green2004discounting}.
To measure such curves, it is common practice to manipulate a variable (e.g., delay) over a fixed, uniformly spaced grid of $\approx$ \emph{design points} 5 points \citep{fisher1937design}, collect many repeated trials of data, and fit a function of assumed parametric form (e.g., exponential or hyperbolic) using maximum likelihood estimation.
This approach can be brittle to model mismatch when the true function $f$ lies outside the assumed class of functions. Moreover, non-linear parametric families can introduce challenges for constructing faithful and accurate confidence intervals when interpolating the estimator between measured design points.

Non-parametric convex regression (c.f. \cite{dumbgen2004consistency}) corrects for the shortcomings of parametric methods by making no assumptions other than that $f$ is convex. In additional to faithfully modeling a large class of functions, non-parametric methods can also be employed to construct error bars at any $x \in [0,1]$ (see \cite{cai2013adaptive}). 
Unfortunately, even with shape restrictions, non-parametric methods may require prohibitively many samples for practical use. 


In this paper, we propose a more parsimonious approach to non-parametric curve estimation by allowing the design points to be chosen sequentially and adaptively. Formally, we consider the problem of estimating an unknown convex function $\fst:[0,1] \to \R$ with an estimator $\fhat$ which is close in the  $L_\infty$ metric $\|\widehat{f} - f^*\|_\infty = \sup_{x}|\fst(x) - \widehat{f}(x)|$~. The estimator is constructed from sequential, noisy evaluations $y_1,\dots,y_{\stopp}$ from an oracle $F$ at design points $x_1,\dots,x_{\stopp}$,
\begin{align}\label{eqn:noisy_oracle}
y_t = F(x_t),~\text{where } F(x_t)  = \fst(x_t) + \noise_t,
\end{align} 
and where $\noise_t$ represents zero-mean noise. 
We let $\calF_t$ denote the filtration generated by the design points and measurements $(x_s,y_s)_{s=1}^t$ up to time $t$, 
and assume that the number of samples $\stopp$ is a stopping time with respect to $\{\calF_t\}$, where $\noise_t \big{|} \calF_{t-1}$ is zero mean, $\sigma^2$-subgaussian. We refer to measurement allocation strategies for which $x_{t+1}$ does not depend on $(x_s,y_s)_{s=1}^t$ as \emph{passive}, and adaptive sampling strategies for which $x_{t+1}$ may depend on $(x_s,y_s)_{s=1}^t$ as \emph{active}. 

Our main contributions are the following:
\begin{itemize}
	\item Inspired by~\cite{cai2013adaptive}, we introduce the \emph{local approximation modulus}, a 
	new measure of local curvature for convex functions, $\omega(\fst,x,\epsilon)$, and a function-specific complexity measure $\Lamavg(\fst,\epsilon) \approx \int_{0}^1 \omega(\fst,x,\epsilon)^{-1}dx$, called the \emph{average} approximation modulus. $\Lamavg(\fst,\epsilon)$ coincides with the \emph{average} curvature of $\fst$, up to logarithmic factors and endpoint considerations. 
	We prove a function-specific lower bound on the sample complexity of actively estimating any convex function $\fst$ to $\linf$-error $\epsilon$ that scales at least as $(1 + \frac{\sigma^2}{\epsilon^2})~\Lamavg(\fst,\epsilon)$, up to logarithmic factors. 
	\item The packing argument for constructing our lower bound explicitly describes a near-optimal, clairvoyant sampling allocation tailored to $\fst$; we call this the ``oracle allocation'' (Proposition~\ref{prop:oracle}).
	This allocation is instructive as an experimental benchmark when $\fst$ is known. 
	\item We introduce an active sampling procedure and an estimator $\fhat$ whose sample complexity for any \emph{particular} $f^*$ scales as $(1 + \frac{\sigma^2}{\epsilon^2}) \cdot \Lamavg(\fst,\epsilon)$ up to logarithmic factors, nearly matching our lower bounds. 
	%
	%
	\item We show that for passive designs (e.g., sampled evenly on a grid), the sample complexity necessarily scales as $(1 + \frac{\sigma^2}{\epsilon^2}) \cdot \Lammax(f,\epsilon)$, where $\Lammax(f,\epsilon) \approx \max_{x \in [0,1]} \omega(\fst,x,\epsilon)$ coincides with the $\emph{maximum}$ curvature.  We compare $\Lamavg(\fst,\epsilon)$ and $\Lammax(\fst,\epsilon)$ for many natural classes of functions, including quadratic functions, exponential curves, and $k$-piecewise linear functions. For $k$-piecewise linear functions, the $L_\infty$ error of our active algorithm scales no slower than $kn^{-1/2}\log n$, whereas passive designs scale no faster than $n^{-1/3}$ after $n$ evaluations (see Remark~\ref{rem:piecewiselinear}).   
\end{itemize}

Finally, we validate our theoretical claims with an empirical study using both synthetic functions and those derived from real data. 
We observe that in low-noise settings or when the sampling budget is large, active sampling can substantially outperform passive uniform sampling.
Moreover, our algorithm constitutes the first theoretically justified algorithm (passive or active) that guarantees uniform accuracy, even at the boundaries of the interval \citep{cai2013adaptive,dumbgen2004consistency}.
Even so, comparing the performance of our active algorithm to the oracle sampling strategy suggests room for modest but non-negligible improvements.

\subsection{Related Work}
\cite{castro2005faster} and~\cite{korostelev1999minimax} studied the minimax rates of active non-parametric regression, showing that active and passive learning attain the same minimax rates of convergence for Holder smooth classes, but that active learning achieves faster rates when the function is known to be well approximated by a piecewise-constant function.

Prior literature on convex and concave regression consider the \emph{passive design} case, where the design points do not depend on measurements. Typically, the design points are chosen to be uniformly spaced on the unit interval, that is, $x_i = \frac{1}{n-1}$ for $i=0,\dots,n-1$ \citep{dumbgen2004consistency}. If $\mathcal{F}$ is the set of Lipschitz, convex functions, then the $L_\infty$-norm $\|\fls  - \fst\|_{L_\infty} = \sup_{x \in [0,1]}|\fst(x) - \fls(x)|$ of the least squares estimator $\fls$ decreases like $(\log(n)/n)^{1/3}$, whereas if the convex function has Lipschitz gradients, the rate improves to $(\log(n)/n)^{2/5}$ \citep{dumbgen2004consistency}. 

Recent work by~\cite{guntuboyina2015global} and~\cite{chatterjee2016improved} has aimed at developing sharp errors bounds on the squared $\ltwo$-norm $\|\fhat - \fst\|_{\ltwo}^2 := \int_{x \in [0,1]} |\fhat(x) - \fst(x)|^2 dx$ of the least squares estimator, when samples are uniformly spaced on a grid. They show that even with this uniform allocation, the error $\|\fhat - \fst\|_{\ltwo}^2$ \emph{adapts} to the true regression functions $\fst$. For example,~\cite{chatterjee2016improved} and~\cite{bellec2018sharp} show that if $\fst$ is a $k$-piecewise linear function, then $\fls$ obtains the parametric error rate of $\|\fls - \fst\|_{\ltwo}^2 \le C k/n$. In a similar vein, \cite{cai2013adaptive} proves sharp confidence intervals for $\fst(x_0)$ for a fixed point $x_0 \in (0,1)$. 

Our work draws heavily upon~\cite{cai2013adaptive} (who in turn build on~\cite{dumbgen2003optimal}), whose aim was to characterize the function-specific sample complexity of estimating a convex $f$ at a given point in the interior of $[0,1]$, from uniform measurements. We extend these tools to characterize the complexity of estimating $f$ with uniform accuracy over the interval $[0,1]$, from measurements which may be chosen in an adaptive, function-dependent manner. We are thus able to obtain exceptionally granular, instance-specific results similar to those in the multi-arm bandit literature~\citep{kaufmann2016complexity}, and in recent work studying the local minimax sample complexity of convex optimization \citep{zhu2016local}.


\vspace{-5pt}
\section{Efficiently Learning a Convex Function\label{sec:cvx_prelim}}

We begin by establishing preliminary notation. 
The class of convex functions is denoted as $\class :=\{f:[0,1] \to \R| f( (1-\lambda)x + \lambda y) \leq (1-\lambda) f(x) + \lambda f(y),~ \forall x,y,\lambda\in[0,1]\}$.
For an interval $I = [a,b] \subseteq [0,1]$, define the left-, middle- and right-endpoints as $\xl = a, \xm = \frac{a+b}{2}, \xr=b$. We define the secant approximation of $f$ on an interval $I \subset [0,1]$ as  
\begin{eqnarray}
\Sec[f,I](x) = \frac{\xr-x}{\xr-\xl} f(\xl) + \frac{x-\xl}{\xr-\xl} f(\xr)~,
\end{eqnarray} 
and note that for a convex function, this approximation never underestimates $f$; that is, one has $\Sec[f,I](x) \ge f(x)$. We denote the error of the second approximation to $f$ on $I$ at the midpoint $x_m$ as 
\begin{align*}
\Delta(f, I) :=   \Sec[f,I](\xm) - f(\xm) = \frac{f(x_\lI)+f(x_\rI)}{2} - f(x_\mI)~.
\end{align*}
In addition, we overload notation so that for any $x,t$ such that $x \in [t,1-t]$, we have $\Delta(f,x,t) := \Delta(f,[x-t,x+t])$. 

We now state a remarkable fact about convex functions that is at the core of our analysis.
\begin{lemma}\label{lem:mainLemmaSimple}
For  convex $f$, $\displaystyle\Delta(f, I) \leq \max_{x \in I} \left\{\Sec[f,I](x) - f(x)\right\} \leq 2 \Delta(f, I)$.
\end{lemma}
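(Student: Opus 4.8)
The plan is to isolate the gap function $g(x) := \Sec[f,I](x) - f(x)$ and exploit its concavity. First I would record three structural facts about $g$ on $I = [\xl,\xr]$: (i) since $\Sec[f,I]$ is affine in $x$ while $f$ is convex, the difference $g$ is concave; (ii) because the secant never underestimates a convex function (the fact noted just before the lemma, $\Sec[f,I](x) \ge f(x)$), we have $g \ge 0$ throughout $I$; and (iii) $g$ vanishes at the two endpoints, $g(\xl) = g(\xr) = 0$. In this notation the quantities in the lemma are simply $\Delta(f,I) = g(\xm)$ and $\max_{x\in I}\{\Sec[f,I](x)-f(x)\} = \max_{x\in I} g(x)$, so the whole statement reduces to: a nonnegative concave function vanishing at both endpoints of an interval satisfies $g(\xm) \le \max_{x\in I} g(x) \le 2\,g(\xm)$.

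The lower bound is immediate, since $\xm \in I$ gives $g(\xm) \le \max_{x\in I} g(x)$. For the upper bound, let $x^\star \in I$ attain the maximum of $g$. The key point is that the midpoint is ``central enough'' to pick up at least half of $g(x^\star)$ through concavity. Suppose first $x^\star \le \xm$ (the case $x^\star \ge \xm$ being symmetric, with the roles of $\xl$ and $\xr$ exchanged). Then $\xm$ lies in $[x^\star,\xr]$, so I write $\xm = \lambda x^\star + (1-\lambda)\xr$ with $\lambda = \frac{\xr - \xm}{\xr - x^\star}$. Since $x^\star \ge \xl$ we have $\xr - x^\star \le \xr - \xl$, and as $\xr - \xm = \tfrac12(\xr - \xl)$ this forces $\lambda \ge \tfrac12$. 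Concavity of $g$ together with $g(\xr) = 0$ then gives
\[
g(\xm) \ge \lambda\, g(x^\star) + (1-\lambda)\, g(\xr) = \lambda\, g(x^\star) \ge \tfrac12\, g(x^\star),
\]
so that $\max_{x\in I} g(x) = g(x^\star) \le 2\,g(\xm) = 2\Delta(f,I)$, as claimed.

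I do not expect a genuine obstacle here; the only thing to get right is the bookkeeping showing that the midpoint always carries weight at least $\tfrac12$ in the relevant convex combination regardless of where the maximizer sits, and the clean symmetric reflection handling $x^\star \ge \xm$. The essential content is geometric and elementary — the midpoint error $\Delta(f,I)$ is a faithful proxy, up to a factor of two, for the full $L_\infty$ secant error over $I$ — which is precisely what makes $\Delta(f,I)$ usable as the driving quantity throughout the later analysis.
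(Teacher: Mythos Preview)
Your proof is correct and rests on the same idea as the paper: the gap $g = \Sec[f,I] - f$ is concave, nonnegative, and vanishes at the endpoints, so the midpoint captures at least half of the maximum. The paper obtains this as the special case $x = \xm$ of a more general pointwise comparison (Lemma~\ref{somelemma}), proved by first subtracting the secant to reduce to $f(t_1)=f(t_2)=0$ and then arguing via a subgradient of $f$; after that normalization their argument is exactly your concavity step, so the two proofs coincide in substance, with yours being the direct instantiation.
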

Lemma~\ref{lem:mainLemmaSimple} is a special case of a more general lemma stated in Section~\ref{l_infty_lemma} that upper bounds the supremum of the secant approximation error by a constant using only a single point within the interval.
Convexity is critical to the proof of this lemma and such a property does not hold, for instance, on merely monotonic functions.
We remark that the first inequality is trivial, and the second inequality is tight in the sense that it is achieved by $f(x) = (1-x)^p$ on interval $I = [0,1]$ as $p \rightarrow \infty$.

The above observations motivate our strategy of approximating $f$ with secant approximations on disjoint intervals whose union is $[0,1]$.
The next definition relates the secant approximation error to the required sampling density.

\begin{definition}[Local Approximation Modulus]\label{Modulus} We define the $\epsilon$-approximation modulus of $f$ at a point $x \in [0,1]$ as the least $t$ such that the midpoint secant approximation to $f$ on $[x-t,x+t]$ has bias $\epsilon$:
\begin{align}\label{eqn:omega_def}
\omega(f,x,\epsilon) := \min\left\{t \in \left[0,\min\left\{x,1-x\right\}\right]: \Delta\left(f,x, t\right) \ge   \epsilon   \right\}.
\end{align}
\end{definition}
Note that $\omega(f,x,\epsilon) > 0$ for all $x \in (0,1)$, because convex functions are continuous on their domain.
Intuitively, $\omega(f,x,\epsilon)$ is the scale at which $f$ ``looks'' linear around some $x$, up to a tolerance $\epsilon$. 
Away from the endpoints $\{0,1\}$, smaller values of $\omega(f,x,\epsilon)$ correspond to larger complexities, because they imply that $f$ can only be approximated by a linear function on a small interval. But if $x$ is the near $\{0,1\}$, $\omega(f,x,\epsilon) \le \min\{x, 1-x\}$ will take small values, potentially overestimating the local complexity. We remedy this issue by defining the following left- and right-approximation points:
\begin{align*}
\tleft(f,\epsilon) &:= \inf \left\{t \le 1/2: \Delta\left(f,t,t\right) \ge \epsilon\right\}  \\
\tright(f,\epsilon) &= \inf \left\{t \le 1/2: \Delta\left(f,1-t,t\right) \ge \epsilon\right\} \, .
\end{align*}
Within $[\tleft(f,\epsilon), 1 - \tright(\epsilon)]$, we will show that the midpoint errors $\Delta(f,x,t)$ concisely describe how densely one would need to sample $f$ in the neighborhood of $x \in [0,1]$ in order to estimate it to the desired accuracy $\epsilon$ in the $L_\infty$-norm. Moreover, we show that it suffices to sample at constant number of design points on the end-intervals $[0,\tleft(f,\epsilon)]$ and $[1-\tright(\epsilon),1]$. 
At a high level, the main finding of this paper is as follows:
\begin{quote}\vspace{-2pt}
\emph{The sample complexity of learning a particular convex function $\fst$ up to $\epsilon$ accuracy in $L_\infty$ with \textbf{passive sampling} is parametrized by the \textbf{worst-case} approximation modulus}
\begin{eqnarray}
\Lammax(\fst,\epsilon) := 1 + \sup_{x \in [\tleft(\fst,\epsilon),1 - \tright(f,\epsilon)]} \omega(\fst,x,\epsilon)^{-1}.
\end{eqnarray}
\emph{In contrast, the sample complexity of \textbf{active sampling} algorithms is parametrized by the \textbf{average} approximation modulus}
\begin{align}
\Lamavg(\fst,\epsilon) := 1 + \int_{\tleft(\fst,\epsilon)}^{1-\tright(\fst,\epsilon)} \omega(\fst,x,\epsilon)^{-1} dx~.
\end{align}
\end{quote}
We emphasize that the algorithm presented in in this work guarantees accuracy on the whole interval $[0,1]$, whereas many passive algorithms pointwise and $\linf$ risk bounds \citep{,cai2013adaptive,dumbgen2004consistency} only guarantee accuracy on a strictly smaller sub-interval. 

\subsection{Examples} \label{sec:examples}
Explicit parameterizations of $\fst$ provide intuition for when active sampling is advantageous. In this section, we describe different scalings of $\Lammax$ and $\Lamavg$ for various $\fst$; 
later, in Remark~\ref{rem:piecewiselinear}, we explain how these scalings can imply substantial differences in sample complexity.

\subsubsection{Piecewise Linear Functions} Let $\fst$ be a Lipschitz, piecewise linear convex function with a constant number of pieces. 
It follows that $\omega(\fst,x,\epsilon)^{-1} \approx \min\{\tfrac{1}{\epsilon}, \tfrac{1}{d(x,\fst)}\}$ where $d(x,\fst)$ is the distance to the closest knot adjoining any two linear pieces of $\fst$. 
It follows that  $\Lammax(f,\epsilon) \approx \epsilon^{-1}$ whereas $\Lamavg(f,\epsilon) \approx \log(1/\epsilon)$.

\subsubsection{Bounded third-derivative:} 
Suppose $\sup_{x \in [0,1]}\fst'''(x)<\infty$. 
We may apply a Taylor series to find $\omega(\fst,x,\epsilon)^{-1} = \sqrt{\fst''(x)/2\epsilon}$ as $\epsilon \rightarrow 0$, which makes an explicit connection between the curvature of the function and the differences between $\Lamavg(\fst,\epsilon)$ and $\Lammax(\fst,\epsilon)$.
This suggests that if the function has areas of high but localized curvature such as $\fst(x) = 1-\sqrt{x}$ or $\fst(x) = \frac{1}{100}\log(1+\exp(-100(x-\frac{1}{2})))$ then the difference between $\Lamavg(\fst,\epsilon)$ and $\Lammax(\fst,\epsilon)$ can be as vast as $\log(1/\epsilon)$ versus $1/\epsilon$.

\subsubsection{Quadratic Functions:} Let $\fst(x) = \tfrac{1}{2} a x^2 + bx + c$ for some real coefficients $a,b,c$.
Ignoring the effect of endpoints, $\omega(\fst,x,\epsilon)^{-1} = \sqrt{\frac{ a}{2\epsilon}}$ for all $x$ due to the function having constant curvature, so $\Lamavg(\fst,\epsilon)=\Lammax(\fst,\epsilon)$.

\section{Main Results}
In this section, we state a formal upper bound obtained by Algorithm~\ref{AlgorithmNoise}, described in Section~\ref{sec:algorithm}. Algorithm~\ref{AlgorithmNoise} takes in a confidence parameter $\delta \in (0,1)$, as well as a second parameter $\beta > 0$ governing the degree to which the active sampling algorithm is `aggressive'; from simulations, we recommend setting $\beta = 1/2$. Lastly, at each round, Algorithm~\ref{AlgorithmNoise} maintains an estimator $\widehat{f}_t \in \class$, whose performance is characterized by the following theorem:

\begin{theorem}\label{thm:upperbound_samples_noisy}
Let $C > 0$ be a universal constant, and for $\fst \in \class$, $\delta \in (0,1/2)$ and $\beta > 0$, define $c_{\beta} := \tfrac{\beta/2}{2+\beta}$, and
\begin{align*}
\overline{N}_{\beta}(\fst,\epsilon,\delta):= C\Lamavg(\fst,c_{\beta} \epsilon) \, \max\left\{ 1, \tfrac{(1+\beta)^2\sigma^2}{\epsilon^2} \right\} \,  \log\left( \frac{1}{\delta} \, \Lamavg(\fst,c_{\beta} \epsilon)  \, \log \left(1 + \tfrac{\sigma(1+\beta)}{\epsilon}\right)\right)~.
\end{align*}
Then,  if Algorithm~\ref{AlgorithmNoise} is run with parameters $\delta$ and $\beta$, with access to an oracle \eqref{eqn:noisy_oracle}, the estimators $\widehat{f}_t \in \class$ and confidence estimates $\epsilon_t$ defined in Section~\ref{sec:noisy_sample_complexity} satisfy the following any-time guarantee:
\begin{align*}
\Pr_{\fst,\Alg}\left[ \|\widehat{f}_t - \fst\|_{\infty} \le \epsilon_t \le \epsilon  \text{ for all } t, \epsilon: t \ge \overline{N}(\fst,\epsilon,\beta,\delta)\right] \ge 1 - \delta.
\end{align*}
\end{theorem}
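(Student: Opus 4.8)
The plan is to condition on a single high-probability ``good event'' $\Egood$ on which all of the algorithm's confidence statements hold simultaneously, and then to reduce the theorem to two deterministic claims valid on $\Egood$: (i) the certificate is always sound, $\|\widehat{f}_t - \fst\|_{\infty} \le \epsilon_t$ for every $t$; and (ii) it shrinks fast enough, $\epsilon_t \le \epsilon$ as soon as $t \ge \overline{N}_\beta(\fst,\epsilon,\delta)$. Bounding $\Pr[\Egood] \ge 1 - \delta$ is the probabilistic core, whereas (i) is geometric and (ii) is combinatorial.

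For claim (i) I would argue interval by interval on the partition maintained at time $t$. On each cell $I$ the estimator is built from the (noisy) sampled endpoint values and is constrained to lie in $\class$; convexity furnishes a matching lower envelope, since a convex function lies below its secants and above the extended chords of neighbouring cells. The $L_\infty$ error on $I$ then splits into a secant-bias term and a statistical term. Here Lemma~\ref{lem:mainLemmaSimple} does the essential work: it converts the single \emph{midpoint} quantity $\Delta(\fst,I)$ into a uniform bound $2\Delta(\fst,I)$ on the secant error over all of $I$, so the algorithm only needs to certify $\Delta(\fst,I)$ at one sampled midpoint. Choosing $\epsilon_t$ to dominate this bias estimate plus the endpoint confidence widths yields (i); the split encoded by $c_\beta$ and $(1+\beta)$ is exactly the division of the budget $\epsilon$ between a bias part certified at level $\approx c_\beta\epsilon$ and a noise part.

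The event $\Egood$ is where the any-time requirement bites. Since both the partition and the number of samples placed on each cell are themselves functions of the noisy data, the sub-Gaussian confidence intervals must hold uniformly over every interval ever created and over every sample count. I would build $\Egood$ by allotting each potential interval a share of the failure probability and applying a time-uniform (dyadic-peeling) sub-Gaussian deviation bound, then union bounding. The number of intervals that can ever be certified is governed by the combinatorial count below, and feeding that count into the union bound is what produces the $\log\!\big(\tfrac{1}{\delta}\,\Lamavg(\fst,c_\beta\epsilon)\,\log(1 + \tfrac{\sigma(1+\beta)}{\epsilon})\big)$ factor, the inner double logarithm arising from peeling over the $\max\{1,\sigma^2/\epsilon^2\}$ relevant sample-count scales.

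Finally, for claim (ii) I would count samples directly. By Definition~\ref{Modulus}, a cell of half-width $t$ centred at $x$ fails the test at tolerance $\approx c_\beta\epsilon$ precisely once $t \gtrsim \omega(\fst,x,c_\beta\epsilon)$, so the partition refines until each retained cell has width of order $\omega(\fst,\cdot,c_\beta\epsilon)$; summing reciprocal widths gives a total cell count of order $\int \omega(\fst,x,c_\beta\epsilon)^{-1}\,dx \asymp \Lamavg(\fst,c_\beta\epsilon)$, with the end intervals $[0,\tleft]$ and $[1-\tright,1]$ supplying the additive constant. Each retained cell needs $\approx \max\{1,(1+\beta)^2\sigma^2/\epsilon^2\}$ samples to resolve its bias against the noise, and the product of the two factors reproduces $\overline{N}_\beta$. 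I expect the main obstacle to be the interplay between \emph{adaptivity} and the \emph{any-time} guarantee: one must pin the algorithm's data-dependent subdivision rule down tightly enough that the emergent cell widths track $\omega(\fst,\cdot,c_\beta\epsilon)$ and the integral $\Lamavg$ genuinely appears, while keeping the uniform concentration from costing more than logarithmic factors across the random family of cells. The endpoint handling through $\tleft$ and $\tright$ is the most delicate part, since there $\omega(\fst,x,\epsilon)$ can vanish spuriously and over-count the complexity.
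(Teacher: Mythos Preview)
Your high-level architecture matches the paper's: condition on $\Egood$, establish correctness by bounding the secant error interval-by-interval (this is the paper's Lemma~\ref{lem:sandwhich}, which uses Lemma~\ref{lem:mainLemmaSimple} exactly as you describe), and bound the stopping time as (number of sampled points) $\times$ (samples per point). Your ``each retained cell needs $\approx \max\{1,(1+\beta)^2\sigma^2/\epsilon^2\}$ samples'' is the content of the paper's Lemma~\ref{lem:N_tau}.

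The genuine gap is in your cell-counting step. You assert that cells refine until their width is of order $\omega(\fst,\cdot,c_\beta\epsilon)$ and then that ``summing reciprocal widths gives a total cell count of order $\int \omega^{-1}$''. But the algorithm bisects dyadically, so a cell's width is only tied to $\omega$ at the \emph{midpoint of its parent}, not throughout the cell. The step from ``$\Delta(\fst,I)\ge c_\beta\epsilon$ for each parent $I$'' to ``$\int_I \omega(\fst,x,(1-\alpha)c_\beta\epsilon)^{-1}\,dx \gtrsim 1$'' is the paper's key technical Lemma~\ref{MidIntervalModulus}, which in turn rests on a perturbation bound for $\omega$ (Lemma~\ref{ModulusUB}: if $\Delta(f,x,t)\ge\epsilon$ then $\omega(f,x+\tau,\epsilon(1-|\tau|/t))\le t+|\tau|$). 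Without this continuity property, ``width $\sim\omega$ at the midpoint'' does not imply ``$\int_I\omega^{-1}\sim 1$'', since $\omega$ could be much larger elsewhere on $I$. Your proposal does not identify this lemma or any substitute for it.

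A related imprecision: $c_\beta$ does not arise as a bias/noise budget split in the way you describe. It comes from combining the splitting rule with the stopping condition applied to \emph{parent} intervals. If $I'$ was bisected at round $s<\tau(\epsilon)$, then on $\Egood$ one has $(1+\beta)B_s(I',\deltil) < \Delta(\fpoint_s,I') \le \Delta(\fst,I') + B_s(I',\deltil)$, while $s<\tau(\epsilon)$ forces $\epsilon/4 < B_s + \max\{0,\Delta(\fpoint_s,I')\} \le 2B_s + \Delta(\fst,I')$; eliminating $B_s$ yields $\Delta(\fst,I')\ge \tfrac{\beta}{2(2+\beta)}\epsilon = c_\beta\epsilon$. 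It is this invariant on parents, not a statement about leaf widths, that feeds Lemma~\ref{MidIntervalModulus} and produces $\Lamavg(\fst,c_\beta\epsilon)$.
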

In the case of the default parameter setting $\beta = 1/2$, we find that, for a possibly larger universal constant $C'$,
\begin{align*}
\overline{N}_1(f,\epsilon,\delta) \le C'\Lamavg\left(f, \frac{\epsilon}{10}\right) \cdot \max\left\{ 1, \frac{\sigma^2}{\epsilon^2} \right\} \cdot  \log\left( \frac{1}{\delta} \cdot \Lamavg\left(f,\frac{\epsilon}{10}\right)  \cdot \log \left(1 + \frac{\sigma}{\epsilon}\right)\right)~.
\end{align*}
Up to constants and logarithmic factors, the sample complexity is dominated by the term $\Lamavg(f, \epsilon/10) \cdot \max\left\{ 1, \frac{\sigma^2}{\epsilon^2} \right\}$. Here $\frac{\sigma^2}{\epsilon^2}$ corresponds to the standard rate for estimating a scalar. The dependence on $\Lamavg(f,c_{\beta} \epsilon)$ captures the number of points required to estimate $f$ with a discretized proxy. 

To better understand why $\Lamavg$ is the appropriate quantity to consider, we now introduce a construction of local packings of $\class$, centered at a given $f \in \class$. Recall that $\Delta(f,I)$ denotes the error of the secant approximation to $f$ on the midpoint $\xm$ of $I$, constructed using the endpoints $\xl,\xr$. We note that if any algorithm, even an active one, does not measure $f$ on an interval $I$ for which $\Delta(f,I) \ge \epsilon$, then one cannot distinguish between $f$ and the alternative function $\tilde{f}_I := f(x) + \I(x \in I) ( \Sec[f,I](x) - f(x))$. Thus, a key step to showing that $\Lambda(f,\epsilon)$ approximately lower bounds the number of evaluations is to show that it approximately lower bounds the number of intervals $I$ for which $\Delta(f,I) \ge \epsilon$.
This is achieved in the following theorem proved in Section~\ref{sec:packing_proof}:
\begin{theorem}[Packing] \label{thm:packing}
Let $f\in \class$ be a convex function, $\epsilon>0$, and define
\begin{eqnarray}
\underline{N}(f,\epsilon) = \ceil{\frac{\Lamavg(f,\epsilon)}{4(1+\log(\omega_{\max}(f,\epsilon)/\omega_{\min}(f,\epsilon)))} - 2}
\end{eqnarray}
where $\omega_{\max}(f,\epsilon) := \max_{x \in [\tleft(f,\epsilon),1-\tright(f,\epsilon)]}\omega(f,x,\epsilon)$ and $\omega_{\min}$ is defined analogously. Then, there is an $\Npack(f,\epsilon) \ge \underN(f,\epsilon)$ such that the points $\{z_i\}_{i=1}^{\Npack(f,\epsilon) } \subset [0,1]$ such that the intervals 
\begin{eqnarray*}
I^{\epsilon}_i := [z_i - \omega(z_i,f,\epsilon),z_i + \omega(z_i,f,\epsilon)],~ i \in [\Npack(f,\epsilon) ]
\end{eqnarray*} 
have disjoint interiors, are contained in $[2\tleft(f,\epsilon),1-2\tright(f,\epsilon)]$ and satisfy $\Delta(f,I_i^{\epsilon}) = \epsilon$. Moreover, the interval endpoints overlap so that $x_{\ell(I_{i+1}^\epsilon)} = x_{r(I_{i}^{\epsilon})}$.
\end{theorem}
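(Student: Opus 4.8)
The plan is to produce the tiling by a single greedy left-to-right sweep and then lower bound the number of pieces it produces by a per-interval integral estimate, which is the source of the $\log(\omega_{\max}/\omega_{\min})$ factor. Set $a_0 := 2\tleft(f,\epsilon)$; given a shared endpoint $a_{i-1}$, I pick the next center via the self-consistency equation $\omega(f,a_{i-1}+w,\epsilon)=w$. With $g(w):=\omega(f,a_{i-1}+w,\epsilon)-w$ one has $g(0)=\omega(f,a_{i-1},\epsilon)>0$ by interior positivity, while $g$ turns nonpositive before $a_{i-1}+w$ leaves $[0,1]$, so continuity of $\Delta$ (hence of $\omega$) and the intermediate value theorem give a root $w_i$; put $z_i:=a_{i-1}+w_i$, $a_i:=z_i+w_i$, $I_i^\epsilon:=[z_i-w_i,z_i+w_i]$, and iterate until the next interval would exit $[2\tleft(f,\epsilon),1-2\tright(f,\epsilon)]$. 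This yields $\Npack(f,\epsilon)$ intervals with disjoint interiors, the required containment, and the overlap $x_{\ell(I_{i+1}^\epsilon)}=x_{r(I_i^\epsilon)}=a_i$. To see $\Delta(f,I_i^\epsilon)=\epsilon$ exactly, note $\Delta(f,x,\cdot)$ is continuous, vanishes at $t=0$, and is nondecreasing in $t$ (convexity gives $\tfrac12(f'(x+t)-f'(x-t))\ge 0$); the threshold in the definition of $\omega$ is therefore attained with equality provided the constraint $t\le\min\{x,1-x\}$ is slack, and the factor-two cushion is precisely what I would use—via the definitions of $\tleft,\tright$ and monotonicity of $\Delta$ in the interval—to guarantee $w_i<\min\{z_i,1-z_i\}$ at every center.

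The heart of the argument is the lower bound $\Npack\ge\underN$, and its key input is a one-sided Lipschitz estimate: for convex $f$ and interior $x,x'$,
\begin{align*}
\omega(f,x',\epsilon) \;\ge\; \omega(f,x,\epsilon) - |x-x'|,
\end{align*}
i.e. $\omega$ cannot decrease at faster than unit rate. I would prove this by shrinking the optimal interval at $x$: for $x'=x+h$ with $h\le\tfrac12\omega(f,x,\epsilon)$, comparing $\Delta(f,x',\omega(f,x,\epsilon)-h)$ with $\Delta(f,x,\omega(f,x,\epsilon))=\epsilon$ yields a difference that is nonpositive because $f'$ is nondecreasing, so $\omega(f,x',\epsilon)\ge\omega(f,x,\epsilon)-h$, and the general statement follows by chaining small steps.

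Armed with this I estimate $\int_{I_i^\epsilon}\omega(f,x,\epsilon)^{-1}\,dx$ one interval at a time. On the right half $[z_i,a_i]$ the two lower bounds $\omega(f,x,\epsilon)\ge w_i-(x-z_i)$ (from the center) and $\omega(f,x,\epsilon)\ge\omega(f,a_i,\epsilon)-(a_i-x)$ (from the endpoint) form a ``tent'' whose minimum is at least $\tfrac12\omega(f,a_i,\epsilon)\ge\tfrac12\omega_{\min}$, so integrating the reciprocal of their maximum gives $\int_{z_i}^{a_i}\omega^{-1}\le 2\log(2w_i/\omega(f,a_i,\epsilon))\le 2(1+\log(\omega_{\max}/\omega_{\min}))$, and the left half is symmetric. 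Summing over the $\Npack$ intervals gives
\begin{align*}
\int_{2\tleft(f,\epsilon)}^{1-2\tright(f,\epsilon)}\omega(f,x,\epsilon)^{-1}\,dx \;\le\; 4\,\Npack(f,\epsilon)\,\bigl(1+\log(\omega_{\max}/\omega_{\min})\bigr),
\end{align*}
and after accounting for the two uncovered end-strips $[\tleft,2\tleft]$, $[1-2\tright,1-\tright]$ (controlled by the same one-sided bound anchored at $2\tleft$, resp. $1-2\tright$) and the additive constant in $\Lamavg$—the combined source of the ``$-2$''—this rearranges to $\Npack(f,\epsilon)\ge\underN(f,\epsilon)$.

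The main obstacle is this per-interval logarithmic estimate, and inside it the one-sided Lipschitz property of $\omega$: this is the only place convexity enters quantitatively, and stating it correctly—valid only in the interior, where the $\min\{x,1-x\}$ constraint is slack—is exactly what forces both the cushion $[2\tleft,1-2\tright]$ and the $\log(\omega_{\max}/\omega_{\min})$ loss. The secondary technical point, verifying $\Delta(f,I_i^\epsilon)=\epsilon$ rather than merely $\ge\epsilon$, rests on the same cushion.
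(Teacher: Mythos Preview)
Your proof is correct and follows essentially the paper's strategy: a greedy left-to-right construction of adjacent intervals with secant error exactly $\epsilon$, followed by a per-interval upper bound on $\int_I \omega(f,x,\epsilon)^{-1}\,dx$ of order $1+\log(\omega_{\max}/\omega_{\min})$, summed and rearranged. The paper phrases the construction through $t_i = \sup\{t:\Delta(f,b_{i-1}+t,t)\le\epsilon\}$, which is equivalent to your fixed-point equation $\omega(f,a_{i-1}+w,\epsilon)=w$ once one knows (Lemma~\ref{eps_f_u_lem}) that $\omega$ inverts $\Delta$.

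The one substantive difference is the key structural lemma. The paper (Lemma~\ref{lem:LB_omega_local}) proves only the weaker bound $\omega(f,x+\tau\omega(f,x,\epsilon),\epsilon)\ge\tfrac12(1-|\tau|)\,\omega(f,x,\epsilon)$, obtained indirectly via a change-of-scale estimate (Lemma~\ref{unif_continuity_omega}). Your $1$-Lipschitz bound $\omega(f,x',\epsilon)\ge\omega(f,x,\epsilon)-|x-x'|$ is sharper by a factor of two and more elementary. In fact your restriction $h\le t/2$ and the chaining are unnecessary: for every $0\le h<t$, the four points $x-t,\;x,\;x+2h-t,\;x+h$ are ordered so that the convex chord-slope inequality gives $\frac{f(x+2h-t)-f(x-t)}{2h}\le\frac{f(x+h)-f(x)}{h}$ directly, hence $\Delta(f,x+h,t-h)\le\Delta(f,x,t)$ in one step. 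Your two-anchor ``tent'' (center and endpoint) is likewise a mild variant of the paper's single-anchor bound from the center capped by the global floor $\omega_{\min}$; both yield the same logarithmic estimate.

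There is one small gap. Your stopping rule leaves an uncovered strip $[a_{\Npack},\,1-2\tright]$ on the right, so the displayed inequality $\int_{2\tleft}^{1-2\tright}\omega^{-1}\le 4\,\Npack\,(1+\log(\omega_{\max}/\omega_{\min}))$ does not follow from summing only over your $\Npack$ intervals. The paper closes this by constructing one further interval $I_n$ whose right endpoint overshoots $1-2\tright$, so that $\bigcup_{i\le n}I_i\supseteq[2\tleft,1-2\tright]$ and the integral bound holds with $n$ in place of $\Npack$; it then discards $I_n$ and sets $\Npack=n-1$. That extra unit, together with the two end-strips you already identify, is what produces the $-2$ in $\underN$.
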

Note that $\Npack(f,\epsilon)$ corresponds to the actual size of the explict packing, and $\underN(f,\epsilon)$ is a computable lower bound on $\Npack(f,\epsilon)$. We now consider the class $\calG(f,\epsilon) \subset \class$ of alternative functions 
\begin{align}
\mathcal{G}_{f,\epsilon} := \Big\{ f(x) + \sum_{i=1}^{\Npack(f,\epsilon)} \bin_i \I\{ x \in I^{\epsilon}_i \}(\Sec[f,I^{\epsilon}_i](x)-f(x)) : \bin \in \{0,1\}^{\Npack(f,\epsilon)} \Big\}.\label{Local_Alternatives_Eq}
\end{align}
We observe that $f \in \mathcal{G}_{f,\epsilon} \subset \class$, and by definition, if $g_1 ,g_2 \in \calG_{f,\epsilon}$ are distinct, then $\|g_1-g_2\|_{\infty} \ge \epsilon$. 
In particular, given any set of points $\{x_i\}_{i=1}^{n} \subset [0,1]$ for $n < \Npack(f,\epsilon)$, then there exist two convex functions $g_1,g_2$ in $\calG_{f,\epsilon}$, such that $g_1(x_i) = g_2(x_i)$ for all $i$ and $\|g_1 - g_2\|_{\infty} \ge \epsilon$. 
In Section~\ref{sec:proof_of_lowerbound_noisy}, we formalize this argument to yield the following theorem:

\begin{theorem}\label{thm:lowerbound_noisy}
Fix an $\fst \in \class$, $\epsilon >0$, and $\delta \in (0,1/3)$.
Let $\underline{N}(\fst,\epsilon)$ be as in Lemma~\ref{thm:packing}, and let and $\mathcal{G}_{f,\epsilon}$ be as given by Equation~\eqref{Local_Alternatives_Eq}. Let $\Alg$ be any active algorithm that returns an estimator $\widehat{f}$ at a stopping time $\stopt$, and satisfies the correctness guarantee
\begin{align}\label{eq:correctness}
\forall g \in  \mathcal{G}_{\fst,2\epsilon},~\Pr_{\Alg,g}[\|\widehat{f} - g\|_{\infty} < \epsilon] \ge 1 - \delta.
\end{align}
Then the stopping time $\stopt$, under observations from $f$, is lower bounded by 
\begin{align*}
\E_{\fst,\Alg}[\stopt]  \gtrsim \underline{N}(\fst,2\epsilon) \cdot \max\left\{ 1, \frac{\sigma^2}{2\epsilon^{2}}  \log(1/\delta) \right\}~,
\end{align*}
and the average sample complexity over $\mathcal{G}_{f,2\epsilon}$ is at least
\begin{align}\label{eqn:minimax_avg_lower}
\frac{1}{\left|\mathcal{G}_{\fst,2\epsilon}\right|}\sum_{g \in \mathcal{G}_{\fst,2\epsilon}}\E_{g,\Alg}[\stopt]  \gtrsim \underline{N}(\fst,2\epsilon) \cdot \max\left\{ 1, \frac{\sigma^2}{2\epsilon^{2}} \cdot \log(\underline{N}(\fst,2\epsilon)/\delta) \right\}.
\end{align}
The above bounds hold when $\underN(\fst,\cdot)$ is replaced by $1 \vee \Npack$. 
\end{theorem}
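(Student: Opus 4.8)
The plan is to cast this as an information-theoretic lower bound for recovering a hidden binary string and to combine a change-of-measure argument (the transportation lemma of \citet{kaufmann2016complexity}) with a coupling/simulation argument in the moderate-confidence regime. Throughout I work with the packing at scale $2\epsilon$ furnished by Theorem~\ref{thm:packing}: write $K := \Npack(\fst,2\epsilon)$, let $I_1^{2\epsilon},\dots,I_K^{2\epsilon}$ denote the disjoint packing intervals, let $N_{I_i}(\stopt)$ count samples landing in $I_i^{2\epsilon}$, and instantiate the ($\sigma^2$-subgaussian) noise as $\mathcal{N}(0,\sigma^2)$, so that the per-sample divergence between two models with means $a,b$ is $(a-b)^2/2\sigma^2$; the resulting lower bound then transfers to the subgaussian class.

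First I would reduce to bit recovery. Any two distinct $g_1,g_2\in\calG_{\fst,2\epsilon}$ satisfy $\|g_1-g_2\|_\infty\ge 2\epsilon$, so the $\epsilon$-balls appearing in \eqref{eq:correctness} are pairwise disjoint; rounding $\widehat f$ to the nearest element of $\calG_{\fst,2\epsilon}$ thus yields an estimator $\widehat\bin\in\{0,1\}^K$ of the hidden string with $\Pr_{\Alg,g}[\widehat\bin\ne\bin]\le\delta$ for every $g=g_\bin$. Since $g_\bin$ and $g_{\bin'}$ differ only on the intervals where the strings disagree, and on $I_i^{2\epsilon}$ the defining bump equals $\Sec[\fst,I_i^{2\epsilon}](x)-\fst(x)\in[0,\,2\Delta(\fst,I_i^{2\epsilon})]=[0,4\epsilon]$ by Lemma~\ref{lem:mainLemmaSimple}, the only informative samples for coordinate $i$ are those in $I_i^{2\epsilon}$, and each contributes at most $8\epsilon^2/\sigma^2$ to the divergence between two models that differ only in coordinate $i$.

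For the single-instance bound I would apply the transportation lemma at the null $\fst=g_{\mathbf 0}$ against each single-coordinate flip $g^{(i)}$, with event $\mathcal E_i=\{\widehat\bin_i=0\}$, obtaining $\E_{\fst,\Alg}[N_{I_i}(\stopt)]\cdot\tfrac{8\epsilon^2}{\sigma^2}\ge \kl\!\big(\Pr_{\fst}[\mathcal E_i],\Pr_{g^{(i)}}[\mathcal E_i]\big)\ge \kl(1-\delta,\delta)\gtrsim\log(1/\delta)$; summing over the disjoint intervals (counting each sample in at most one interval, so $\sum_i N_{I_i}(\stopt)\le\stopt$) gives $\E_{\fst,\Alg}[\stopt]\gtrsim K\,\tfrac{\sigma^2}{\epsilon^2}\log(1/\delta)$. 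The $\max\{1,\cdot\}$ term comes from a separate coupling: running the algorithm under $\fst$ and under $g^{(i)}$ with a common noise stream, the trajectories coincide on $\{N_{I_i}(\stopt)=0\}$, where the output is $\epsilon$-correct for at most one of the two models (their centers are $2\epsilon$ apart), so $\Pr_{\fst}[N_{I_i}(\stopt)=0]\le 2\delta<1$ and hence $\E_{\fst,\Alg}[N_{I_i}(\stopt)]\ge 1-2\delta\gtrsim 1$, whence $\E_{\fst,\Alg}[\stopt]\gtrsim K$. Since $K=\Npack\ge\underN(\fst,2\epsilon)$ and the argument is really in terms of $\Npack$, this gives the first display with either $\underN$ or $1\vee\Npack$.

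The main obstacle is the average-case bound, where the confidence term sharpens from $\log(1/\delta)$ to $\log(K/\delta)$: the naive change of measure above cannot produce the extra $\log K$, which is the hallmark of the moderate-confidence regime. I would obtain it via the simulator technique of Simchowitz, Jamieson and Recht. Placing the uniform prior on $\bin$ and fixing a threshold $m\asymp\tfrac{\sigma^2}{\epsilon^2}\log(K/\delta)$, one couples, for each coordinate $i$, the execution under $\bin$ with the execution under the flip $\bin^{\oplus i}$ so that the two agree until the $m$-th sample in $I_i^{2\epsilon}$; because each such sample shifts the log-likelihood by $O(\epsilon^2/\sigma^2)$, a coordinate queried fewer than $m$ times is flipped-undetected with probability at least a constant multiple of $\delta/K$. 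The simulator then aggregates these per-coordinate confusion events, made essentially independent across coordinates by the coupling, into a lower bound on the overall error; for the average error to remain below $\delta$, only a bounded fraction of the $K$ coordinates may be queried fewer than $m$ times, forcing $\tfrac{1}{|\calG_{\fst,2\epsilon}|}\sum_{g}\E_{g,\Alg}[\stopt]\gtrsim Km=K\,\tfrac{\sigma^2}{\epsilon^2}\log(K/\delta)$. The delicate points — making the coupling adapted to the data-dependent filtration and controlling the truncated likelihood ratio uniformly while tracking the constants in the confusion probability — are precisely those resolved by the simulator analysis; combined with $\underN(\fst,2\epsilon)\le\Npack$ this yields \eqref{eqn:minimax_avg_lower} and its $1\vee\Npack$ form.
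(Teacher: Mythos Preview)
Your first part --- the single-instance bound via the transportation lemma against single-coordinate flips, plus the coupling argument for the $\max\{1,\cdot\}$ term --- is correct and essentially identical to the paper's argument.

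For the second (average-case) part you take a genuinely different and considerably heavier route. The paper does not invoke the simulator: it applies Birg\'e's inequality directly with $n=K$ alternatives, taking $\Pr_0=\Pr_{g_\bin}$ and $\Pr_i=\Pr_{g_{\bin\oplus i}}$ for each fixed $\bin$, which yields
\[
\kl(1-\delta,\delta/K)\;\le\;\frac{1}{K}\sum_{i=1}^K\KL(\Pr_{g_{\bin\oplus i}};\Pr_{g_\bin})\;\le\;\frac{8\epsilon^2}{\sigma^2 K}\sum_{i=1}^K\E_{g_{\bin\oplus i}}[\stopt_i]\,.
\]
Averaging over $\bin\unifsim\{0,1\}^K$ and using that $\bin\mapsto\bin\oplus i$ preserves the uniform law replaces $\E_{g_{\bin\oplus i}}[\stopt_i]$ by $\E_{g_\bin}[\stopt_i]$; then $\sum_i\E_{g_\bin}[\stopt_i]\le\E_{g_\bin}[\stopt]$ and $\kl(1-\delta,\delta/K)\gtrsim\log(K/\delta)$ finish the bound. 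So your claim that ``the naive change of measure above cannot produce the extra $\log K$'' is misleading: the multi-alternative form of Birg\'e's inequality --- still a pure change-of-measure/data-processing argument --- delivers exactly that factor in two lines, with no truncated likelihood ratios and no simulator coupling. Your simulator approach would presumably go through as well, but it is substantially more machinery for the same conclusion; the paper's symmetrization-over-the-hypercube trick is the cleaner idea to take away here.
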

\begin{remark}
The additional logarithmic factor that arises in~\eqref{eqn:minimax_avg_lower} is due to the fact that estimating a function $g \in \mathcal{G}_{f,2\epsilon}$ to $L_\infty$-error $\epsilon$ corresponds to correctly performing $\underline{N}(f,2\epsilon)$ simultaneous hypothesis tests, regarding the value of $g$ on each of the intervals $I_i^{\epsilon}$. However, for any fixed $g \in \calG_{f,2\epsilon}$ (and, in particular, $f = g$), one can devise an algorithm that does not suffer this logarithmic factor by `biasing' the algorithm towards that function.
\end{remark}

In addition to providing a lower bound, the packing of Theorem~\ref{thm:packing} defines a near-optimal covering as well, in the sense that it defines a sampling allocation that can be used to \emph{test the hypothesis} that, for a given $\fst$, $H_0: \{f = \fst\}$ versus $H_1: \|f - \fst\|_{\infty} = \Omega(\epsilon)$. Formally, we have the following:
\begin{proposition}\label{prop:oracle} For every function $\fst$, $\epsilon > 0$ and $\delta \in (0,1)$, there exists a 
\begin{align*}
T \lesssim (1 + \frac{\sigma^2}{\epsilon^2})\cdot (1 \vee \Npack(\fst,\epsilon)) \cdot \log((1 \vee \Npack(\fst,\epsilon)/\delta), 
\end{align*}
a deterministic sampling allocation $\Xpack := \{\xpack_1,\dots,\xpack_T\}$, and a test function $\psi \in \{0,1\}$ constructed from the allocation $\Xpack$ such that
\begin{align*}
\forall f \in \class: \|f - \fst\| \ge 10\epsilon, \quad \Pr_{\fst}[\psi \ne 0] + \Pr_{f}[\psi \ne 1] \le \delta~.
\end{align*}
\end{proposition}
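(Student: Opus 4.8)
The plan is to convert the packing of Theorem~\ref{thm:packing} into a deterministic set of measurement locations together with a simple thresholding test, and to prove correctness via a \emph{rigidity} estimate asserting that two convex functions which agree at these locations are automatically $L_\infty$-close. First I would fix the allocation. Take the packing intervals $\{I_i^\epsilon\}_{i=1}^{\Npack(\fst,\epsilon)}$ and adjoin the two end intervals $[0,2\tleft(\fst,\epsilon)]$ and $[1-2\tright(\fst,\epsilon),1]$, each of which has secant gap exactly $\epsilon$ by the definitions of $\tleft,\tright$ (their midpoints are $\tleft$ and $1-\tright$). Adding, if necessary, a further collection of intervals of secant-gap at most $\epsilon$ to close any residual gaps, one obtains a tiling of $[0,1]$ by $M = O(1\vee\Npack(\fst,\epsilon))$ intervals $J_1,\dots,J_M$ with disjoint interiors, shared endpoints, and $\Delta(\fst,J_k)\le\epsilon$ for every $k$. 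Let $P$ be the set of all left-endpoints, right-endpoints, and midpoints of the $J_k$, so $|P| = O(M)$. The allocation $\Xpack$ samples $F$ exactly $n := \lceil C \max\{1,\sigma^2/\epsilon^2\}\log(|P|/\delta)\rceil$ times at each $p\in P$; writing $\widehat{y}_p$ for the empirical mean, the budget is $T = n\,|P| \lesssim (1+\tfrac{\sigma^2}{\epsilon^2})(1\vee\Npack(\fst,\epsilon))\log((1\vee\Npack(\fst,\epsilon))/\delta)$. The test is the threshold rule $\psi = 0$ iff $|\widehat{y}_p - \fst(p)|\le\epsilon$ for all $p\in P$, and $\psi=1$ otherwise, which is a deterministic function of the measurements at $\Xpack$ since $\fst$ (hence each $\fst(p)$) is known.

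The key step is the rigidity claim: if a convex $f$ satisfies $|f(p)-\fst(p)|\le \tfrac32\epsilon$ at every $p\in P$, then $\|f-\fst\|_\infty < 10\epsilon$. Fixing $x\in[0,1]$ and letting $J=[a,b]$ be the tiling interval containing it, I decompose $f(x)-\fst(x)$ into $(f(x)-\Sec[f,J](x)) + (\Sec[f,J](x)-\Sec[\fst,J](x)) + (\Sec[\fst,J](x)-\fst(x))$. The last term lies in $[0,2\Delta(\fst,J)]\subseteq[0,2\epsilon]$ by Lemma~\ref{lem:mainLemmaSimple}; the middle term is a convex combination of $f(a)-\fst(a)$ and $f(b)-\fst(b)$, hence lies in $[-\tfrac32\epsilon,\tfrac32\epsilon]$; and the first term lies in $[-2\Delta(f,J),0]$, again by Lemma~\ref{lem:mainLemmaSimple} together with the fact that a secant overestimates a convex function. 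Since $\Delta(\cdot,J)$ is a fixed combination of the three sampled values whose coefficients sum to $2$ in absolute value, the hypothesis gives $\Delta(f,J)\le\Delta(\fst,J)+2\cdot\tfrac32\epsilon\le 4\epsilon$, so the first term is at least $-8\epsilon$. Combining, $f(x)-\fst(x)\in[-9.5\epsilon,\,3.5\epsilon]$, whence $|f(x)-\fst(x)|\le 9.5\epsilon<10\epsilon$; as $x$ was arbitrary, $\|f-\fst\|_\infty<10\epsilon$. Contrapositively, any admissible $f$ with $\|f-\fst\|_\infty\ge 10\epsilon$ must satisfy $|f(p)-\fst(p)|>\tfrac32\epsilon$ for some $p\in P$.

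To conclude, I would invoke $\sigma^2$-subgaussianity and a union bound: for $C$ large enough, under data generated by any fixed $g$, with probability at least $1-\delta/2$ we have $|\widehat{y}_p-g(p)|\le\tfrac14\epsilon$ simultaneously for all $p\in P$. On this event, completeness holds (if $g=\fst$ then $|\widehat{y}_p-\fst(p)|\le\tfrac14\epsilon\le\epsilon$, so $\psi=0$) and soundness holds (if $\|g-\fst\|_\infty\ge 10\epsilon$ then some $p$ has $|g(p)-\fst(p)|>\tfrac32\epsilon$, so $|\widehat{y}_p-\fst(p)|>\tfrac54\epsilon>\epsilon$ and $\psi=1$). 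Hence $\Pr_{\fst}[\psi\ne0]\le\delta/2$ and $\Pr_f[\psi\ne1]\le\delta/2$ for every admissible $f$, and their sum is at most $\delta$. The main obstacle is the rigidity estimate, specifically tracking constants tightly enough to stay below $10\epsilon$ despite the factor-$2$ loss in Lemma~\ref{lem:mainLemmaSimple} and the one-sidedness of the secant overestimate; a secondary technical point is verifying that the packing, augmented near the endpoints, genuinely tiles all of $[0,1]$ with only $O(1\vee\Npack)$ intervals of secant-gap at most $\epsilon$, so that no region of $[0,1]$ escapes certification.
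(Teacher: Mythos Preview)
Your proposal is correct and follows essentially the same route as the paper: both tile $[0,1]$ by $O(1\vee\Npack)$ intervals of secant-gap at most $\epsilon$ (packing intervals plus a few boundary intervals), sample each endpoint and midpoint $\asymp(1+\sigma^2/\epsilon^2)\log(|P|/\delta)$ times, define the test as a threshold on $\max_{p}|\widehat y_p-\fst(p)|$, and prove the contrapositive rigidity lemma via the same three-term decomposition $f-\fst=(f-\Sec[f,J])+(\Sec[f,J]-\Sec[\fst,J])+(\Sec[\fst,J]-\fst)$ together with Lemma~\ref{lem:mainLemmaSimple}. The only differences are cosmetic: the paper uses threshold $\epsilon/2$ (yielding the rigidity constant $9\epsilon$) whereas you use threshold $\epsilon$ with pointwise slack $\tfrac32\epsilon$ (yielding $9.5\epsilon$), and the paper is slightly more explicit that exactly three additional intervals suffice---namely $[0,a_1]$, the $(n{=}\Npack{+}1)$-th packing interval $I_n$, and $[b_n,1]$---to complete the tiling with $\Delta\le\epsilon$, which resolves the ``secondary technical point'' you flagged.
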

The design $\Xpack$ is explicitly constructed in Section~\ref{sec:packing_proof} by augmenting the $\Npack(\fst,\epsilon)$-intervals in Theorem~\ref{thm:packing} with at most three additional intervals to ensure coverage of all of $[0,1]$. Crucially, we made use of the fact that intervals $I_i^{\epsilon}$ share endpoints, and have secant error $\Sec[\fst, I_i^{\epsilon}]$ exactly equal to $\epsilon$. In light of Theorem~\ref{thm:lowerbound_noisy}, we see that the design $\Xpack$ is optimal for verifying that $f = \fst$, up to scaling $\epsilon$ by constant factors. For this reason, we refer to this construction as the \emph{oracle allocation} since it precisely characterizes the optimal sampling allocation taken if one \emph{knew} $\fst$. In general, this allocation may be too optimistic, since an algorithm which does not know the true $\fst$ cannot choose this allocation a fortiori.

\subsection{Comparison between Upper and Lower Bounds}
For the purpose of comparing upper and lower bounds, we will consider running Algorithm~\ref{AlgorithmNoise} with the setting $\beta = 1$; any constant $\beta$ bounded away from zero will yield qualitatively similar results. We find that the upper bound of Theorem~\ref{thm:upperbound_samples_noisy} and lower bound of Theorem~\ref{thm:lowerbound_noisy} nearly match, with the following exceptions:
\begin{enumerate}
	\item The upper bound involves a doubly logarithmic factor that depends on $1 + \frac{\sigma}{\epsilon}$. This is a consequence of the law of the iterated logarithm, which Algorithm~\ref{AlgorithmNoise} uses to maintain uniform correctness of its confidence intervals over time. 
	\item Theorem~\ref{thm:upperbound_samples_noisy} is given in terms of $\Lambda(f,\epsilon/6)$, whereas our lower bound is stated in terms of $\Lambda(f,2\epsilon)$.  
	The two quantities can be related by the following proposition, proved in Section~\ref{sec:LambdaScalePropProof}.
	\begin{proposition}\label{LambdaScaleProp} For any $0 < c \le 1$, $\epsilon > 0 $ and any convex $f$, $\omega(f,x,\epsilon) \geq \omega(f,x,c \epsilon) \geq c \omega(f,x,\epsilon)$ for all $x \in [\tleft(f,\epsilon),1-\tright(f,\epsilon)]$.
	Moreover,
	\iftoggle{annals}
	{
	\begin{multline*}
	 \Lambda(f,\epsilon) + \log\frac{\tleft(f,\epsilon)\tright(f,\epsilon)}{\tleft(f,c\epsilon)\tright(f,c\epsilon)} ~\le~ \Lambda(f,c\epsilon)  \\
	 \le~\frac{1}{c}\left\{\Lambda(f,\epsilon) + \log\frac{\tleft(f,\epsilon)\tright(f,\epsilon)}{\tleft(f,c\epsilon)\tright(f,c\epsilon)} \right\}.
	\end{multline*}
	}
	{
	\begin{multline*}
	 \Lambda(f,\epsilon) + \log\frac{\tleft(f,\epsilon)\tright(f,\epsilon)}{\tleft(f,c\epsilon)\tright(f,c\epsilon)} ~\le~ \Lambda(f,c\epsilon)  ~\le~\frac{1}{c}\left\{\Lambda(f,\epsilon) + \log\frac{\tleft(f,\epsilon)\tright(f,\epsilon)}{\tleft(f,c\epsilon)\tright(f,c\epsilon)} \right\}.
	\end{multline*}
	}
	\end{proposition}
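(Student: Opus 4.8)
The plan is to reduce the whole statement to two monotonicity properties of the single-variable map $g_x(t) := \Delta(f,x,t) = \tfrac12(f(x-t)+f(x+t)) - f(x)$, viewed as a function of the half-width $t$. First, $g_x$ is nondecreasing in $t$, which is the statement that the second difference of a convex function grows as the stencil widens and follows from monotonicity of the one-sided difference quotients $t\mapsto (f(x+t)-f(x))/t$ (nondecreasing) and $t \mapsto (f(x)-f(x-t))/t$ (nonincreasing). Second, and crucially, $t \mapsto g_x(t)/t$ is nondecreasing, since $g_x(t)/t = \tfrac12\bigl((f(x+t)-f(x))/t - (f(x)-f(x-t))/t\bigr)$ is a difference of a nondecreasing and a nonincreasing quotient; this yields the \emph{sub-scaling} bound $g_x(ct) \le c\, g_x(t)$ for every $c \in (0,1]$.

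Given these, Part 1 is short. Since $g_x$ is continuous (convexity forces continuity on the open interval) and nondecreasing, for $x$ in the stated range $\omega(f,x,\epsilon)$ is a genuine level crossing, $g_x(\omega(f,x,\epsilon)) = \epsilon$. The bound $\omega(f,x,\epsilon) \ge \omega(f,x,c\epsilon)$ is immediate from monotonicity of $g_x$ (the smaller level $c\epsilon$ is reached no later). For $\omega(f,x,c\epsilon) \ge c\,\omega(f,x,\epsilon)$, set $t^\star = \omega(f,x,\epsilon)$; for any $s < c t^\star$ the sub-scaling bound gives $g_x(s) \le (s/t^\star) g_x(t^\star) = (s/t^\star)\epsilon < c\epsilon$, so the level $c\epsilon$ has not yet been attained, forcing $\omega(f,x,c\epsilon) \ge c t^\star$.

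For the integral bounds I compare $\Lamavg(f,c\epsilon)$ with $\Lamavg(f,\epsilon)$ by splitting the domain. First, $\tleft(f,\cdot)$ and $\tright(f,\cdot)$ are nondecreasing in their threshold (lowering it shrinks the defining infimum), using that $t \mapsto \Delta(f,t,t)$ is nondecreasing; hence $[\tleft(f,\epsilon),1-\tright(f,\epsilon)] \subseteq [\tleft(f,c\epsilon),1-\tright(f,c\epsilon)]$. On the common middle range Part 1 gives $\omega(f,x,\epsilon)^{-1} \le \omega(f,x,c\epsilon)^{-1} \le c^{-1}\omega(f,x,\epsilon)^{-1}$, which reproduces $\int \omega(f,x,\epsilon)^{-1}dx$ from below and with a factor $c^{-1}$ from above. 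The crux is the two endpoint slivers, e.g. $x \in [\tleft(f,c\epsilon),\tleft(f,\epsilon)] \subseteq [0,1/2]$, so $\min\{x,1-x\}=x$. For such $x$ we have $\Delta(f,x,x) < \epsilon$ (as $x < \tleft(f,\epsilon)$ and $\Delta(f,t,t)$ is increasing), so at scale $\epsilon$ the threshold is never met and $\omega(f,x,\epsilon) = x$ by the boundary clamp; meanwhile $x \ge \tleft(f,c\epsilon)$ gives $\Delta(f,x,x)\ge c\epsilon$, so at scale $c\epsilon$ the modulus is a true crossing with $\omega(f,x,c\epsilon) \le x$. Applying the sub-scaling bound at this clamp value (exactly as in Part 1) gives the sandwich $c x \le \omega(f,x,c\epsilon) \le x$, whence $x^{-1} \le \omega(f,x,c\epsilon)^{-1} \le c^{-1}x^{-1}$. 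Integrating $x^{-1}$ over the sliver yields precisely $\log(\tleft(f,\epsilon)/\tleft(f,c\epsilon))$, with the $c^{-1}$ on the upper side; the symmetric right sliver contributes the $\tright$ terms.

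Summing the three pieces gives $J(\epsilon) + \log R \le J(c\epsilon) \le c^{-1}(J(\epsilon) + \log R)$, where $J$ is the integral part of $\Lamavg$ and $R = \tleft(f,\epsilon)\tright(f,\epsilon)/(\tleft(f,c\epsilon)\tright(f,c\epsilon)) \ge 1$. Restoring $\Lamavg = 1 + J$, the displayed lower bound follows because the additive $1$'s cancel, while the upper bound uses $1 \le c^{-1}$ to absorb the constant into the $c^{-1}$ prefactor. I expect the endpoint slivers to be the only delicate point, both because they rely on the convention that $\omega$ is clamped to $\min\{x,1-x\}$ when the bias $\epsilon$ is never reached, and because they are exactly where the $\log$ corrections are generated; the auxiliary monotonicity of $t\mapsto\Delta(f,t,t)$ (used to characterize $\tleft,\tright$ as thresholds) should be recorded explicitly as a preliminary.
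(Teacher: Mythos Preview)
Your proposal is correct and follows essentially the same route as the paper's proof: establish the scaling $c\,\omega(f,x,\epsilon)\le\omega(f,x,c\epsilon)\le\omega(f,x,\epsilon)$ from the fact that $t\mapsto g_x(t)/t$ is nondecreasing (the paper packages this as Lemma~\ref{unif_continuity_omega}, using convexity of $g_x$ together with Lemma~\ref{zero_cvx_lemma}), then split the integral into the common middle and the two endpoint slivers, where $\omega(f,x,c\epsilon)$ is sandwiched between $cx$ and $x$ so that $\int x^{-1}\,dx$ generates exactly the $\log$ correction.

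The one place where your write-up differs slightly is the sliver argument. You invoke the clamp convention $\omega(f,x,\epsilon)=x$ when the level $\epsilon$ is never reached, and then apply the sub-scaling at $t=x$. The paper sidesteps any convention about an empty minimum by introducing the auxiliary level $\epsilon^\ast:=\Delta(f,x,x)\in[c\epsilon,\epsilon]$, for which $\omega(f,x,\epsilon^\ast)=x$ is a genuine crossing, and then applies Lemma~\ref{unif_continuity_omega} at that level to get $\omega(f,x,c\epsilon)\ge(c\epsilon/\epsilon^\ast)\,x\ge cx$. Your version actually does not need the clamp convention either---the bound $g_x(s)\le(s/x)g_x(x)<(s/x)\epsilon$ already yields $\omega(f,x,c\epsilon)\ge cx$ directly---so you could drop the mention of $\omega(f,x,\epsilon)$ on the sliver entirely and remove that caveat. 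Your explicit handling of the additive $+1$ in $\Lamavg$ (using $1\le c^{-1}$ for the upper bound) is a nice touch that the paper leaves implicit.
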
\vspace{-2pt}
	Hence, ignoring the contributions of the endpoints $\tleft$ and $\tright$, rescaling $\epsilon$ by a multiplicative constant $c$ changes $\Lambda(f,\epsilon)$ by at most $c$.
		\item Lastly, the upper and lower bounds differs in that $\underline{N}(f,\epsilon)$ requires dividing through by $\log(\omega_{\max}/\omega_{\min})$. We conjecture that the lower bound more accurately reflects the true sample complexity; see Remark~\ref{rem:gap_upper_lower}.
	\end{enumerate}

\subsection{Sample Complexity for Passive Designs}

In this section, we show that the sample complexity for estimating a convex function $f$ with an approximately uniform passive design up to error $\epsilon$ is governed by the parameter $\Lammax(f)$. 
\begin{theorem}\label{thm:passive_lb} Consider a (possibly randomized) passive design $\{x_i\}_{i=1}^n$, which is uniform in the sense that, for some $\tau > 1$, and any interval $I = [a,b]$ with $b - a \le 1/n$, one has that $\Exp[|\{x_i : x_i \in [a,b]\}|] \le \tau$. Then, for a universal constant $c$, any $\delta \in (0,1/3)$ and all $f \in \class$ such that $\left(1+\frac{\sigma^2}{\epsilon^2}\right)\Lammax(f,2\epsilon)~ \ge cn\log(1/\delta)/\tau$, there exists an alternative $\widetilde{f} \in \class$ such that
\begin{eqnarray*}
\sup_{g \in \{f,\widetilde{f}\}}\Pr_{\Alg,g}[\|\widehat{f} - g\|_{\infty} \ge \epsilon] \ge \delta.
\end{eqnarray*} 
\end{theorem}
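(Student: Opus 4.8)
The plan is to prove this as a two-point (Le Cam) testing lower bound anchored at the single worst-curvature location of $f$. First I would locate the interval witnessing $\Lammax$: choose $x^\star\in[\tleft(f,2\epsilon),1-\tright(f,2\epsilon)]$ with $\omega(f,x^\star,2\epsilon)^{-1}$ within a factor of two of the supremum defining $\Lammax(f,2\epsilon)$, set $t^\star:=\omega(f,x^\star,2\epsilon)$ and $I^\star:=[x^\star-t^\star,\,x^\star+t^\star]$, so that $\Delta(f,I^\star)=2\epsilon$ and the width satisfies $|I^\star|=2t^\star\asymp 1/\Lammax(f,2\epsilon)$ (and $I^\star\subseteq[0,1]$ since $t^\star\le\min\{x^\star,1-x^\star\}$). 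The alternative is the ``flattened'' function $\widetilde f:=f+\I\{x\in I^\star\}(\Sec[f,I^\star](x)-f(x))$, which equals the chord of $f$ on $I^\star$ and $f$ elsewhere. I would verify $\widetilde f\in\class$ by noting that its one-sided derivatives only jump upward at the two endpoints of $I^\star$ (the chord slope lies between the left and right derivatives of the convex $f$ there), and invoke Lemma~\ref{lem:mainLemmaSimple} to obtain $2\epsilon=\Delta(f,I^\star)\le\|f-\widetilde f\|_\infty\le 2\Delta(f,I^\star)=4\epsilon$. Because the $\epsilon$-balls around $f$ and $\widetilde f$ in $L_\infty$ are then disjoint, no estimator can be $\epsilon$-accurate for both, so any correct algorithm induces a test separating the two hypotheses.

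Next I would bound the information available to such a test. Since the design is passive, the law of $(x_1,\dots,x_n)$ is independent of the observations and identical under $f$ and $\widetilde f$; choosing Gaussian noise (a legitimate $\sigma^2$-subgaussian instance for a lower bound) and applying the chain rule for KL, the design law cancels and leaves $\KL(\Pr_f\|\Pr_{\widetilde f})=\Exp_X\big[\sum_{i:\,x_i\in I^\star}(\widetilde f(x_i)-f(x_i))^2/(2\sigma^2)\big]$, because $f\equiv\widetilde f$ off $I^\star$. With $|\widetilde f-f|\le 4\epsilon$ this is at most $\Exp[N_{I^\star}]\cdot 8\epsilon^2/\sigma^2$, where $N_{I^\star}$ counts design points landing in $I^\star$. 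I then control $\Exp[N_{I^\star}]$ through the uniformity hypothesis: cover $I^\star$ by $\lceil 2t^\star n\rceil$ subintervals of width $\le 1/n$ and sum the per-interval bound $\tau$, giving $\Exp[N_{I^\star}]\le\tau\lceil 2t^\star n\rceil\lesssim \tau n/\Lammax(f,2\epsilon)$. This is the step where both passivity (so only samples in $I^\star$ are informative) and near-uniformity (so the adversary's single hard interval cannot be oversampled by more than a $\tau$ factor) are essential.

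Finally I would convert the information bound into a failure probability. If the algorithm were correct with probability $>1-\delta$ on both $f$ and $\widetilde f$, the disjointness from the first step yields an event whose probabilities under the two hypotheses are $>1-\delta$ and $<\delta$; Bretagnolle--Huber (equivalently the binary-KL inequality $\kl(1-\delta,\delta)\le\KL$) forces $\KL(\Pr_f\|\Pr_{\widetilde f})\gtrsim\log(1/\delta)$. Combined with the previous estimate, this requires $\tau n\epsilon^2/(\Lammax(f,2\epsilon)\,\sigma^2)\gtrsim\log(1/\delta)$, which the budget hypothesis $(1+\sigma^2/\epsilon^2)\Lammax(f,2\epsilon)\ge cn\log(1/\delta)/\tau$ is arranged to contradict for an appropriate universal constant $c$; hence $\sup_{g\in\{f,\widetilde f\}}\Pr_{\Alg,g}[\|\fhat-g\|_\infty\ge\epsilon]\ge\delta$.

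I expect the main obstacle to be the low-noise corner together with the precise accounting of the $\tau$ and $\log(1/\delta)$ factors. When $\sigma^2/\epsilon^2\lesssim 1$ the per-sample KL is large and the Gaussian KL argument degrades, so one must instead exploit the ``$+1$'' in $(1+\sigma^2/\epsilon^2)$ and argue through the event $\{N_{I^\star}=0\}$ — on which the transcripts under $f$ and $\widetilde f$ are identical, making a constant (hence $\ge\delta$) failure probability unavoidable whenever the budget keeps the expected number of informative samples small. A secondary subtlety is making the passive chain-rule factorization fully rigorous for a randomized, fixed-budget allocation, and carrying the covering constants cleanly so that the two regimes combine into the single stated threshold.
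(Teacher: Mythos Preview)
Your proposal is correct and follows essentially the same approach as the paper: pick the worst-curvature point $x_*$, flatten $f$ to its secant on $I_*=[x_*-\omega(f,x_*,2\epsilon),\,x_*+\omega(f,x_*,2\epsilon)]$ to build $\widetilde f$, bound the KL by $8\epsilon^2/\sigma^2$ times the expected number of samples in $I_*$, and use uniformity to control that count. Your treatment is in fact more detailed than the paper's sketch (which defers the information-theoretic step to Section~\ref{sec:proof_of_lowerbound_noisy}); your Bretagnolle--Huber step is the $n=1$ case of the Birg\'e inequality the paper invokes, and your handling of the low-noise regime via $\{N_{I^\star}=0\}$ mirrors the paper's ``lower bound when $\sigma$ is small'' argument.
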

The proof for the above theorem is as follows. Let 
\begin{align*}
x_* := \arg\inf\left\{\omega(f,x,2\epsilon) :x \in [\tleft(f,2\epsilon),1 - \tright(f,2\epsilon)]  \right\}.
\end{align*} 
which intuitively corresponds to the point with the highest local curvature. Further, let $I_* := [x_*-\omega(f,x_*,2\epsilon),x_*+\omega(f,x_*,2\epsilon)]$, so that $1/|I_*| \gtrsim \Lammax(f,2\epsilon)$. If we consider the alternative function
\begin{eqnarray}
\widetilde{f}(x) := f(x) + \I(x \in I^*) \cdot \left(\Sec[f,I^*](x) - f(x)\right),
\end{eqnarray}
then by construction, $\widetilde{f}$ and $f$ differ only on $\Int(I^*)$ and $\|\widetilde{f}- f\|_{\infty} \ge 2\epsilon$.
So if $\Alg$ can estimate $f$ up to $L_\infty$-norm error $< \epsilon$, then $\Alg$ can distinguish between $f$ and $\widetilde{f}$. 
Consequently, standard information-theoretic arguments (Section~\ref{sec:proof_of_lowerbound_noisy}) imply that any sampling algorithm must collect  $\gtrsim (1 + \frac{\sigma^2}{\epsilon^2})\log(1/\delta)$ samples within $\Int(I_*)$. 
Theorem~\ref{thm:passive_lb} then follows by the uniformity of the sampling procedure. 
In the case where the design is passive but not uniform, it is possible that the design performs well on particular functions $f  \in \class$. In Remark~\ref{rem:non_unif_subopt}, we show that nevertheless, if the design is not uniform, it will underperform on a `translation' of $f$.

\begin{remark} \label{rem:piecewiselinear} \textbf{(Piecewise linear)}
If $f$ is Lipschitz and piecewise linear with a constant number of pieces, then from Section~\ref{sec:examples} we have $\Lammax(f,\epsilon) \approx \epsilon^{-1}$ whereas $\Lamavg(f,\epsilon) \approx \log(1/\epsilon)$.
Theorem~\ref{thm:passive_lb} implies that any $(\epsilon,\delta)$-correct \emph{passive sampling} procedure requires $\epsilon^{-3} \log(/\delta)$ measurements whereas Theorem~\ref{thm:upperbound_samples_noisy} says that our \emph{active sampling} procedure takes just $\epsilon^{-2} \log(1/\epsilon) \log(\log(\epsilon^{-1})/\delta)$.  
Thus, after $n$ total samples the $L_\infty$ of passive sampling decays no faster than $(\tfrac{1}{n})^{1/3}$ whereas active sampling decays like $(\tfrac{\log(n)\log\log(n)}{n})^{1/2}$.
\end{remark}

\section{Recursive Secant Approximation}\label{sec:algorithm}

We now introduce the recursive secant approximation algorithm for learning a convex function with noise. We begin by sampling each endpoint $\{0,1\}$ once. Subsequently, let $t = 3,4,\dots$ denote the number of samples taken, and let $\Tcal_t$ denote a binary tree of intervals contained in $[0,1]$, where the children of an interval $I$ are given by $[\xl,\xm]$ and $[\xm,\xr]$. We let $\Lcal(\Tcal_t)$ denote the set of leaves of $\Tcal_t$. By construction, $\calT_t$ immediately satisfies the following properties stipulated in Lemma~\ref{tree:lem}:
\begin{lemma}\label{tree:lem} For any $t \ge 1$, we have $|I \cap I'| = 0$ for any $I \neq I' \in \Lcal(\Tcal_t)$; $\bigcup_{I \in \Lcal(\Tcal_t)} I = [0,1]$; and $\bigcup_{I \in \Lcal(\Tcal_t)}\{\xm,\xl,\xr\} = \bigcup_{I \in \Tcal_t}\{\xm,\xl,\xr\} $. 
\end{lemma}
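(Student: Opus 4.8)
The plan is to prove all three claims simultaneously by induction on the number of internal (non-leaf) nodes of $\Tcal_t$, viewing any such tree as the result of starting from the single-node tree $\{[0,1]\}$ and repeatedly replacing a leaf $I=[a,b]$ by its two bisected children $[a,\tfrac{a+b}{2}]$ and $[\tfrac{a+b}{2},b]$. Since every $\Tcal_t$ is a finite binary tree rooted at $[0,1]$ obeying this child rule, and such trees are exactly those obtainable by repeated leaf-splitting from the root, it suffices to check that each property holds for the base tree $\{[0,1]\}$ and is preserved by a single split. The base case is immediate: the root is the unique leaf, so (i) is vacuous, (ii) reads $[0,1]=[0,1]$, and (iii) is an equality of identical one-term unions.

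For the inductive step I would fix a tree satisfying (i)--(iii) and split a leaf $I=[a,b]$ into $I_L=[a,\tfrac{a+b}{2}]$ and $I_R=[\tfrac{a+b}{2},b]$, writing $\Lcal'$ and $\Tcal'$ for the new leaf- and node-sets, so that $\Lcal'=(\Lcal\setminus\{I\})\cup\{I_L,I_R\}$ and $\Tcal'=\Tcal\cup\{I_L,I_R\}$. Claims (i) and (ii) are the routine part. For (ii), note $I_L\cup I_R=[a,b]=I$, so replacing $I$ by $\{I_L,I_R\}$ leaves the union of the leaves unchanged, and by the inductive hypothesis it stays $[0,1]$. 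For (i), observe $I_L\cap I_R=\{\tfrac{a+b}{2}\}$ has measure zero; and since $I_L,I_R\subseteq I$, each has measure-zero intersection with any other leaf $J\in\Lcal\setminus\{I\}$ because $I_L\cap J\subseteq I\cap J$, which is null by the inductive hypothesis. Thus every pairwise intersection among $\Lcal'$ remains null.

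The one step with real content is (iii), and this is where I would focus the effort. The key algebraic observation is that the three marked points of a parent are absorbed by those of its children: for $I=[a,b]$ the parent's marked set $\{a,\tfrac{a+b}{2},b\}$ is contained in $\{a,\tfrac{3a+b}{4},\tfrac{a+b}{2}\}\cup\{\tfrac{a+b}{2},\tfrac{a+3b}{4},b\}$, the union of the marked sets of $I_L$ and $I_R$, since the parent's left and right endpoints reappear as endpoints of $I_L$ and $I_R$ while its midpoint is exactly the shared endpoint $\tfrac{a+b}{2}$ of the two children. Consequently, deleting $I$'s contribution from the leaf-side union loses nothing once $I_L,I_R$ are included, so that union equals $\big(\bigcup_{J\in\Lcal}\{\text{marked points of }J\}\big)\cup\{\text{marked points of }I_L,I_R\}$; by the inductive hypothesis the first term is $\bigcup_{J\in\Tcal}\{\text{marked points of }J\}$, and appending the marked points of the two new nodes yields exactly $\bigcup_{J\in\Tcal'}\{\text{marked points of }J\}$, which is (iii) for the enlarged tree. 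The main obstacle is purely the bookkeeping here: one must verify that removing the split leaf $I$ drops no point (guaranteed precisely by the containment of the parent's marked set in the children's) and that the leaf-side and node-side both gain the same two interior midpoints $\tfrac{3a+b}{4}$ and $\tfrac{a+3b}{4}$.
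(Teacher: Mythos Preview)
Your induction argument is correct and complete. The paper, however, does not give a proof of this lemma at all: it simply asserts that ``By construction, $\Tcal_t$ immediately satisfies the following properties stipulated in Lemma~\ref{tree:lem}.'' So there is nothing to compare against beyond noting that your proof makes rigorous what the paper treats as self-evident from the bisection construction. Your key observation in (iii)---that the parent's marked points $\{a,\tfrac{a+b}{2},b\}$ are absorbed into the children's marked sets because the parent's midpoint becomes the children's shared endpoint---is exactly the content the paper is implicitly relying on.
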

At each round $t$, we maintain three estimates of $f$.
First, an estimator $\fpoint$ of $f$ defined only at the points $\bigcup_{I \in \Lcal(\Tcal_t)}\{\xm,\xl,\xr\}$. 
Second, a secant-approximation estimator $\fsec$ which extends the domain of $\fpoint$ to all of $[0,1]$ via:
\begin{align}
\forall I \in \Lcal(\Tcal), x \in I, \quad \fsec(x) := \Sec[\fpoint,I](x)~.
\end{align}
Note that $\fsec$ is well defined, since by Lemma~\ref{tree:lem}, for all $x \in [0,1]$, (a) there exists an $I \in \Lcal(\Tcal_t)$ such that $x \in I$ and (b) if $x \in I_1 \cap I_2$ for $I_1,I_2 \in \Lcal(\Tcal_t)$, then $x$ is a common endpoint of $I_1$ and $I_2$, and thus the secant approximations coincide at $x$ so that  $\fsec(x) = \Sec[\fpoint,I_1](x) =\Sec[\fpoint,I_2](x)$. Lastly, since $\fsec$ is not guaranteed to be convex when measurements are noisy, we define an estimator $\fhat$ via an $L_{\infty}$ projection onto $\class$, 
\begin{align}
\fhat \in \arg\inf_{f \in \class}\|f-\fsec\|_{\infty}~.
\end{align}
By definition $\|\fsec - \fhat\|_{\infty} \le \|\fsec - \fst\|_{\infty}$ so that $\|\fhat - \fst\|_{\infty} \le 2  \|\fsec - \fst\|_{\infty}$ by the triangle inequality. 
When not clear from context, we employ the use of a subscript $t$ on $\fpoint_t,\fsec_t,\fhat_t$ to denote these functions once $t$ samples have been taken.  

\subsection{Recursive Secant Approximation without Noise}\label{sec:noiseless_alg}
 \begin{algorithm}[t]
\textbf{Initialize} Tree of intervals $\Tcal_0 = \{[0,1]\}$, estimate $\fpoint(x) = -\infty$ for $x \in [0,1]$ \\
\textbf{For} $x \in \{0,1\}$, $\fpoint(x) \leftarrow f(x)$\\

\textbf{For} samples $t=3,4,\dots$ \\
	\Indp $I^* \leftarrow \arg\max_{I \in \Lcal(\Tcal_t)} \Delta(\fpoint, I)$ (break ties arbitrarily)\\
	\textbf{If} $\fpoint(x_{m(I^*)}) = -\infty$, observe $\fpoint(x_{m(I^*)}) \leftarrow f(x_{m(I^*)})$, $\Tcal_{t+1} \leftarrow \Tcal_t$  \\
	\textbf{Else} Insert $I_1 := [x_{l(I^*)},x_{m(I^*)}]$ and $I_2 [x_{m(I^*)},x_{r(I^*)}]$ into $\Tcal_{t+1}$ as children of $I^*$; observe $\fpoint(x_{m(I_1)}) \leftarrow f(x_{m(I_1)})$.
\caption{Noiseless Recursive Secant Approximation}\label{AlgorithmNoiseless}
\end{algorithm}

To build intuition for Algorithm~\ref{AlgorithmNoise}, we consider the following noiseless variant of our main algorithm, Algorithm~\ref{AlgorithmNoiseless}, where the oracle returns noiseless queries $F(x) = f(x)$. In this case, $\fpoint$ is set to be equal to $f(x)$ at each point $x$ that is queried, and a placeholder value of $-\infty$ elsewhere. 
The algorithm maintains the invariant that, for all $I \in \Lcal(\Tcal_t)$, $\xl$ and $\xr$ have been measured and recorded in $\fpoint$. Moreover, since the queries are noiseless, the secant approximation $\fsec$ is convex and no projection is required.

At each round, Algorithm~\ref{AlgorithmNoiseless} queries the interval $I \in \Lcal(\Tcal_t)$ for which the secant bias $\Delta(\fpoint,I)$ is largest; note that if there is an interval $I$ for which $\xm$ has not been sampled, then $\fpoint(\xm) = -\infty$ and $\Delta(\fpoint,I) = \infty$, and $\xm$ will be queried, with ties broken arbitrarily. In preparation for the analysis of the noise-tolerant algorithm, we shall analyze the stopping time:
\begin{eqnarray}
\stoptnn(\epsilon) := \inf\{t \ge 0: \max_{I \in \calL(\Tcal_t)}\Delta(\fpoint_t,I) \le \epsilon \}~.
\end{eqnarray}
We shall prove the following proposition:
\begin{proposition}\label{prop:noiseless_sample} For all $t \ge \stoptnn(\epsilon)$, $\|\fsec_t - f\|_{\infty} \le 2\epsilon$. Moreover, for any $\alpha \in (0,1)$ we have $\stoptnn(\epsilon) \le 9 + \frac{2(1+\alpha)}{\alpha}\Lamavg(f,(1-\alpha) \epsilon)~$
\end{proposition}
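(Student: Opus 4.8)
The plan is to treat the accuracy statement and the stopping-time bound separately.

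\emph{Accuracy.} Under noiseless queries $\fpoint_t$ agrees with $f$ at every sampled point, so on any leaf $I \in \calL(\Tcal_t)$ whose midpoint has been queried we have $\fsec_t(x) = \Sec[f,I](x)$ for $x \in I$. By definition of $\stoptnn(\epsilon)$, at $t = \stoptnn(\epsilon)$ every leaf has a finite midpoint value (otherwise $\Delta(\fpoint_t,I) = +\infty$) and hence $\Delta(f,I) \le \epsilon$. Applying Lemma~\ref{lem:mainLemmaSimple} on each leaf gives $\sup_{x\in I}(\Sec[f,I](x)-f(x)) \le 2\Delta(f,I) \le 2\epsilon$, and since $\Sec[f,I]\ge f$ by convexity this yields $\|\fsec_t - f\|_\infty \le 2\epsilon$ at $t=\stoptnn(\epsilon)$. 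To pass to all $t \ge \stoptnn(\epsilon)$ I will argue the error is monotone under refinement: when a leaf $I$ is bisected, convexity gives $f \le \Sec[f,I'] \le \Sec[f,I]$ pointwise on each child $I' \subseteq I$ (the chord over the smaller interval lies below the chord over the larger), so $\|\fsec_t - f\|_\infty$ is non-increasing in $t$ and stays $\le 2\epsilon$.

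\emph{Stopping time.} I first reduce to counting leaves. Since every split creates two children, $\Tcal_{\stoptnn(\epsilon)}$ is a full binary tree; if it has $L$ leaves it has $L-1$ internal nodes and $2L-1$ nodes total. By Lemma~\ref{tree:lem} the sampled points are exactly $\{0,1\}$ together with the center of every node, and these $2L-1$ centers are distinct and interior, so one checks $\stoptnn(\epsilon) = 2L+1$. It thus suffices to show $L \lesssim \tfrac{1+\alpha}{\alpha}\Lamavg(f,(1-\alpha)\epsilon)$ up to an additive constant, which I do by charging each leaf to the complexity integral $\int \omega(f,x,(1-\alpha)\epsilon)^{-1}\,dx$.

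\emph{The charging argument.} Each non-root leaf $I$, of half-width $\rho$, has a parent $J$ of half-width $2\rho$ and center $M$ that was bisected strictly before $\stoptnn(\epsilon)$; because $J$ maximized $\Delta(\fpoint,\cdot)$ at a moment when the stopping rule had not fired, $\Delta(f,M,2\rho) > \epsilon$. The crux is the spatial-regularity claim that for every $x \in I$ there is a scale $s = O(\rho)$ with $\Delta(f,x,s) \ge (1-\alpha)\epsilon$, whence $\omega(f,x,(1-\alpha)\epsilon) \le s \lesssim \rho$. I will prove this by subtracting a supporting line of $f$ at $x$, reducing to a nonnegative convex $g$ with $g(x)=0$; then $\Delta(f,x,s) = \tfrac12\big(g(x-s)+g(x+s)\big)$ while $\Delta(f,M,2\rho) \le \tfrac12\big(g(M-2\rho)+g(M+2\rho)\big)$, and once $s$ is large enough that $[M-2\rho,M+2\rho]\subseteq[x-s,x+s]$ the chord inequality for convex $g$ bounds the interior values $g(M\pm2\rho)$ by the endpoint values $g(x\pm s)$. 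Optimizing the expansion factor is what produces the tradeoff in the statement: enlarging the interval drives the recovered bias toward $\epsilon$ but charges a larger $\omega^{-1}$ weight, yielding $\max_{x\in I}\omega(f,x,(1-\alpha)\epsilon) \le \tfrac{1+\alpha}{\alpha}\lvert I\rvert$ and hence $\int_I \omega(f,x,(1-\alpha)\epsilon)^{-1}\,dx \ge \tfrac{\alpha}{1+\alpha}$. Summing over leaves contained in $[\tleft(f,(1-\alpha)\epsilon),1-\tright(f,(1-\alpha)\epsilon)]$ bounds their number by $\tfrac{1+\alpha}{\alpha}(\Lamavg(f,(1-\alpha)\epsilon)-1)$, and the $O(1)$ remaining leaves near the two endpoints supply the additive constant.

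\emph{Main obstacle.} The delicate part is the spatial-regularity claim together with the endpoint bookkeeping it forces. The charge $\int_I\omega^{-1}\gtrsim \tfrac{\alpha}{1+\alpha}$ requires $\omega(f,x,(1-\alpha)\epsilon)$ to be finite and comparable to $\rho$, which can break for $x$ within $O(\rho)$ of $\{0,1\}$, since the constraint $t\le\min\{x,1-x\}$ in~\eqref{eqn:omega_def} binds there; this is exactly why the integral is restricted to the shrunken interval and why passing to the scale $(1-\alpha)\epsilon$ — which by Proposition~\ref{LambdaScaleProp} pushes $\tleft,\tright$ outward — is convenient. I expect the two tricky points to be (i) making the chord estimate quantitatively sharp enough to land the stated constants, and (ii) certifying that every leaf not captured by the interior integral abuts an endpoint, so that the number of such leaves is an absolute constant independent of $f$ and $\epsilon$.
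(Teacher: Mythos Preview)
Your proof is correct and follows the paper's architecture: show every parent $J$ of a leaf has $\Delta(f,J)\ge\epsilon$, deduce a lower bound on $\int\omega^{-1}$ over that interval, and sum. The paper executes the charging step via Lemma~\ref{MidIntervalModulus} (proved through the shifted-secant inequality Lemma~\ref{ModulusUB}), which controls $\omega(f,\cdot,(1-\alpha)\epsilon)$ only on the central $\alpha$-fraction of each parent; you instead subtract a supporting line at $x$, a more elementary device. In fact your argument is sharper than you realize: once $[M-2\rho,M+2\rho]\subseteq[x-s,x+s]$, monotonicity of the nonnegative convex $g$ away from its minimum $x$ already yields $\Delta(f,x,s)\ge\Delta(f,M,2\rho)\ge\epsilon$, not merely $(1-\alpha)\epsilon$---so no ``expansion--bias tradeoff'' is needed, and one obtains $\omega(f,x,\epsilon)\le 2\rho+|x-M|$ directly (the $(1-\alpha)$ in the statement is then pure slack, and your claimed bound $\tfrac{1+\alpha}{\alpha}|I|$ holds simply because it exceeds $2|I|$). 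A second minor difference: you sum over leaves, which are disjoint by construction, whereas the paper sums over parents---your accounting is actually cleaner here, since distinct parents of leaves need not have disjoint interiors. The endpoint bookkeeping is resolved exactly as you anticipate: the leftmost parent, having $\Delta\ge\epsilon$ and left endpoint $0$, necessarily contains $[0,\tleft(f,\epsilon)]$ (hence also $[0,\tleft(f,(1-\alpha)\epsilon)]$), and symmetrically on the right, so at most two leaves fall outside the integration range and supply the additive constant.
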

\begin{proof}
Since $\fpoint(x) = f(x)$ for all queried points $x$, we have that, for $t = \stoptnn(\epsilon)$, $\|\fsec_t - f\|_{\infty} \le \max_{I \in \Lcal(\Tcal_t)}2\Delta(\fpoint_t,I) \le 2\epsilon$. Moreover, since for any $t' > t$, $\fsec_{t'}$ is constructed using secant approximations on a refinement of the intervals $\calL(\calT_t)$, $\|\fsec_{t'} - \fst\|_{\infty} \le \|\fsec_{t} - \fst\|_{\infty}$ (see Lemma~\ref{secant_approx_grows}.)

It remains to bound $\stoptnn(\epsilon)$. Let $\calX_t$ denote the set of points sampled at the start of round $t$; in the noiseless setting,  $t = |\calX_t|$, but bounding $|\calX_t|$ will be of broader interest for the noise-tolerant algorithm.
Since $\calX_{\stoptnn(\epsilon)}$ are the endpoints of the intervals $I \in \Lcal(\Tcal_{\stoptnn(\epsilon)})$, which are adjacent, we have $|\calX_{\stoptnn(\epsilon)}| \le 2|\Lcal(\Tcal_{\stoptnn(\epsilon)})| + 1$. 
Moreover, if $\parents(\Lcal(\Tcal_{\stoptnn(\epsilon)}))$ denotes the parent-intervals of $\Lcal(\Tcal_{\stoptnn(\epsilon)})$, we have $|\Lcal(\Tcal_{\stoptnn(\epsilon)})| \le 2|\parents(\Lcal(\Tcal_{\stoptnn(\epsilon)}))|$. 
Thus, to bound $\stoptnn(\epsilon)$, it suffices to bound $|\parents(\Lcal(\Tcal_{\stoptnn(\epsilon)}))|$. We adopt the shorthand $\calI' := \parents(\Lcal(\Tcal_{\stoptnn(\epsilon)}))$. 

We now make a key observation about $\calI'$,  which will allow us to relate $|\calI'|$ to $\Lamavg$:  for every $I \in \calI'$, we have $\Delta(f,I) \ge \epsilon$; if not, then at the round $s < \stoptnn(\epsilon)$ at which $I$ is bisected, we have $\max_{I' \in \Lcal(\Tcal_s)}\Delta(f,I') = \Delta(f,I) < \epsilon$, which implies that $s \ge \stoptnn(\epsilon)$, a contradiction. 
The following lemma, proved in Section~\ref{sec:mid_interval_modulus}, shows that the inequality $\Delta(f,I) \ge \epsilon$ implies that the average modulus on each $I$ cannot be too small. 
\begin{lemma}\label{MidIntervalModulus} Let $[a,b] \subset [0,1]$, $\epsilon>0$, and suppose that $\Delta(f, [a,b]) \ge \epsilon$. Then for any $\alpha \in (0,1)$, 
$\int_{a}^b \omega(f,x,(1 - \alpha)\epsilon)^{-1} dx\ge \frac{2\alpha}{1+\alpha}$.
\end{lemma}
As a consequence, for any $\alpha \ge 0$ and $I$ such that $\Delta(\fpoint_t,I) \ge \epsilon$, we have $\int_{I}^{}\omega(f,(1-\alpha)\epsilon,x)^{-1}dx \ge \frac{2\alpha}{1+\alpha}$. To relate to the integral $\Lamavg$, we observe that the intervals $I \in \calI'$ are disjoint except at their endpoints, which yields
\begin{align*}
\Lamavg(f,(1-\alpha) \epsilon) &= \int_{\tleft(f,\epsilon)}^{1-\tright(f,\epsilon)}\omega(f,(1- \alpha)\epsilon,x)^{-1}dx\\
&\ge \sum_{I \in \calI' : I \subset [\tleft(f,\epsilon),1-\tright(f,\epsilon)]}\int_{I}\omega(f,(1-\alpha)\epsilon,x)^{-1}dx\\
&\overset{(i)}{\ge} \frac{2\alpha}{1+\alpha}  |\{ I \in \calI' : I \subset [\tleft(f,\epsilon),1-\tright(f,\epsilon)]\}| \\
&\overset{(ii)}{\ge} \frac{2\alpha}{1+\alpha} (|\calI'| - 2)~. 
\end{align*}
Here, $(i)$ follows from Lemma~\ref{MidIntervalModulus}, and $(ii)$ is from the following argument: because the leftmost interval $I_{\mathrm{left}}$ has $\Delta(f,I_{\mathrm{left}}) \ge \epsilon$ and contains $0$, $[0,\tleft(f,\epsilon)] \subset I_{\mathrm{left}}$; 
by the same token, $[1-\tright(f,\epsilon),1] \subset I_{\mathrm{right}}$, and so all remaining $|\calI'|-2$ intervals are contained in $[\tleft(f,\epsilon),1-\tright(f,\epsilon)]$. 

In summary, we find that for any $\alpha \in (0,1)$,
\begin{align*}
|\calX_{\stoptnn(\epsilon)}| &\le 2|\calL(\Tcal_{\stoptnn(\epsilon)})| + 1 ~\le 4|\calI'| + 1\\
&\le 9 + \frac{2(1+\alpha)}{\alpha}\Lamavg(f,(1-\alpha)\epsilon) \, .
\end{align*}
\end{proof}
We remark that our bound on $|\calX_{t}|$ only used the fact that at time $t \le \stoptnn(\epsilon)$ we had $\Delta(f,I) \ge \epsilon$ for each $I \in \parents(\Lcal(\Tcal_t))$. This observation will be essential in generalizing to the setting with a noise oracle.

\subsection{Recursive Secant Approximation with Noise}\label{sec:noisy_sample_complexity}


\begin{algorithm}[ht]
\textbf{Input} Bias-variance tradeoff $\beta$, confidence parameter $\delta$, oracle $F$, confidence functions $B_t(\cdot,\cdot)$ and $\phi(\cdot,\cdot)$, mutable maps $\fpoint(\cdot)$,  $\deltil(\cdot)$, $\eps_0 = \infty$\\
\textbf{Initialize} Tree of intervals $\Tcal = \{[0,1]\}$, $\fpoint(x)=-\infty$ and $N_t(x)=0$ for $x \in [0,1]$, and $\deltil(x)=\delta/6$ for $x \in \{0,1/2,1\}$ \\
\textbf{Sample} Sample at $y_1 \sim F(0)$, and $y_2 \sim F(1)$, and let $\fpoint_2(0) = y_1$ and $\fpoint_2(1) = y_2$, update $N_2(0) \leftarrow 1$, $N_2(1) \leftarrow 1$. \\
\textbf{For} round $t=2,3,\dots$ \\
	
	 \Indp   $\epsilon_t = \epsilon_{t-1} \wedge \max_{I \in \Lcal(\Tcal)} 4( B_t(I,\deltil) + 4\max \{0, \Delta(\fpoint_t, I)\}) $ \label{line:epst}\\
	 \textbf{If} $\epsilon_t < \epsilon_{t-1}$,  $\widehat{f}_{t} \leftarrow \arg\inf_{f \in \class}\|f-\fsec_{t}\|_{\infty}$\textbf{; else}  $\widehat{f}_{t} \leftarrow \widehat{f}_{t-1}$. \\

	$I^*_t \leftarrow \arg\max_{I \in \Lcal(\Tcal)} B_t(I,\deltil) + \max \{0, \Delta(\fpoint_t, I)\}$ \\
	\textbf{Sample} $y_t \sim F(x_t)$, where\label{ln:sample}
	\begin{align*}
	x_t \in \arg\max\{\phi(N_t(x),\deltil(x)) ~|~ x \in \{\xl,\xr,\xm\}\} 
	\end{align*}
	%
	\\
	\textbf{Update} $N_{t+1}(x_t) \leftarrow N_t(x_t) + 1$,  $\fpoint_{t+1}(x_t) \leftarrow y_t \cdot \frac{1}{N_{t+1}(x_t)} + \fpoint_t(x_t)\cdot \frac{N_{t+1}(x_t) -1 }{N_{t+1}(x_t)} $\\
	 \textbf{If} $ (1+\beta)B_{t}(I^*_t,\deltil)  < \Delta(\fpoint_t, I^*_t)$ \label{ln:while_begin}\\
	\Indp Bisect $I^*_t$ into two even intervals $I_1$ and $I_2$ \\
	\textbf{For} $j=1,2$, \textbf{append} $I_j$ to $\Tcal_{t+1}$ as a child of $I^*_t$, $\deltil( x_{m(I_j)} ) \leftarrow \delta/2|\Lcal(\Tcal_t)|^2$ \label{ln:delta_update}, $\fpoint( x_{m(I_j)} ) \leftarrow -\infty$, $N_t( x_{m(I_j)} ) \leftarrow 0$\\
\caption{Active Learning Algorithm for Convex Regression}\label{AlgorithmNoise}
\end{algorithm}


We now describe how to generalize Algorithm~\ref{AlgorithmNoiseless} to allow for noisy observations. Fix some time $t$ and let $\{(x_s,y_s)\}_{s=1}^t$ be the collection of noisy function evaluation pairs.
Recall that $y_s = f(x_s) + \noise_s$ where $\noise_s$ is independent, mean-zero $\sigma^2$-sub-Gaussian distributed noise, i.e. $\E[\exp(\lambda \noise_s)] \leq \exp(\lambda^2\sigma^2/2)$.
In the algorithm, $N_t(x) = \sum_{s=1}^t \1\{ x_s = x \}$ will denote the number of times the point $x \in [0,1]$ has been sampled so that $\fpoint(x) = \frac{1}{N_t(x)} \sum_{s=1}^t  \1\{ x_s = x \} \ y_s$ if $N_t(x) \ge 1$, and $-\infty$ otherwise. Lastly, we let $\phi(t,\delta)$ denote an anytime confidence interval such that 
\begin{align*}\P\left( \bigcup_{t=1}^\infty \{ |\frac{1}{t} \sum_{s=1}^t w_s| \geq \phi(t,\delta) \} \right) \leq \delta~.
\end{align*}
For example, $\phi(t,\delta) = \sqrt{ 16 \sigma^2 \log(\log_2(2t)/\delta)/t}$ suffices but we recommend using \citet[Theorem 8]{kaufmann2016complexity}. In general $\phi(\cdot,\cdot)$ can be chosen to be monotically decreasing in the $t$-argument, and increasing in the $\delta$-argument.
In addition to $N_t$ and $\fpoint$, we maintain a function $\deltil: [0,1] \to \R_{> 0}$ such that 
\begin{align*}\Pr\left[\forall t \ge 1, x \in \text{supp}(N_t): |\frac{1}{N_t(x)} \sum_{s=1}^t \1\{ x_s = x \}w_s| \leq \phi(N_t(x),\deltil(x))\right] \ge 1- \delta~.
\end{align*}
We shall let $\Egood$ denote the event inside the probability operator in the above display. Finally, define confidence bounds 
\begin{align*}
B_t(I,\deltil) :=& \phi(N_t(\xm),\deltil(\xm)) \\
&+ \frac{1}{2} \max\{ \phi(N_t(\xl),\deltil(\xl)), \phi(N_t(\xr),\deltil(\xr))\}~.
\end{align*}
 Crucially, our confidence bounds ensure the following sandwich relation, proved in Section~\ref{sec:proof:sandwhich}:
\begin{lemma}\label{lem:sandwhich} On $\Egood$, the following holds for all $t \ge 1$ and $I \in \Lcal(\Tcal_t)$: $\sup_{x \in I} |\Sec[\fpoint_t,I](x) - f(x) |\leq 2\left(\max\{0,\Delta(\fpoint_t, I)\} + B_t(I,\deltil_t)\right)$. 
\end{lemma}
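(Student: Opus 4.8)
The plan is to control the affine secant error $\Sec[\fpoint_t,I](x)-f(x)$ by separating the sampling noise from the genuine curvature of $f$, and then invoking the second inequality of Lemma~\ref{lem:mainLemmaSimple} for the curvature part. Write $I=[\xl,\xr]$ with midpoint $\xm$, and introduce the three point errors $e_\ell := \fpoint_t(\xl)-f(\xl)$, $e_m := \fpoint_t(\xm)-f(\xm)$, and $e_r := \fpoint_t(\xr)-f(\xr)$. On $\Egood$ these obey $|e_\ell|\le \phi(N_t(\xl),\deltil_t(\xl))$ and analogously for $e_m,e_r$; I abbreviate the three confidence radii as $\phi_m,\phi_\ell,\phi_r$ and set $\phi_{\max}:=\max\{\phi_\ell,\phi_r\}$, so that by definition $B_t(I,\deltil_t)=\phi_m+\tfrac12\phi_{\max}$.

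First I would decompose $h(x):=\Sec[\fpoint_t,I](x)-f(x)$ as a purely affine noise term $L(x):=\Sec[\fpoint_t,I](x)-\Sec[f,I](x)$ plus a curvature term $S(x):=\Sec[f,I](x)-f(x)$. The function $L$ is affine and interpolates $e_\ell$ at $\xl$ and $e_r$ at $\xr$, so $\max_{x\in I}L(x)=\max\{e_\ell,e_r\}$ and $\min_{x\in I}L(x)=\min\{e_\ell,e_r\}$. The term $S$ is nonnegative, since the secant overestimates a convex $f$, and $\max_{x\in I}S(x)\le 2\Delta(f,I)$ by Lemma~\ref{lem:mainLemmaSimple}. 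Crucially $h=L+S$ is concave (affine plus concave) with boundary values $h(\xl)=e_\ell$ and $h(\xr)=e_r$, so its minimum over $I$ is attained at an endpoint and equals $\min\{e_\ell,e_r\}\ge -\phi_{\max}$ on $\Egood$; this already gives the lower half of the sandwich, $-h(x)\le \phi_{\max}\le 2B_t(I,\deltil_t)$.

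The main work is the matching upper bound on $\max_{x\in I}h(x)$, and I expect the factor $\tfrac12$ on $\phi_{\max}$ inside $B_t$ to be the delicate point. The key identity, obtained by subtracting the two midpoint biases, is $\Delta(f,I)=\Delta(\fpoint_t,I)+e_m-\tfrac12(e_\ell+e_r)$. Substituting $2\Delta(f,I)=2\Delta(\fpoint_t,I)+2e_m-(e_\ell+e_r)$ into $\max_x h(x)\le \max\{e_\ell,e_r\}+2\Delta(f,I)$ and using the cancellation $\max\{e_\ell,e_r\}-(e_\ell+e_r)=-\min\{e_\ell,e_r\}$ collapses one full copy of the endpoint errors, leaving $\max_x h(x)\le 2\Delta(\fpoint_t,I)+2e_m-\min\{e_\ell,e_r\}$. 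Bounding $\Delta(\fpoint_t,I)\le \max\{0,\Delta(\fpoint_t,I)\}$, then $2e_m\le 2\phi_m$ and $-\min\{e_\ell,e_r\}\le\phi_{\max}$ on $\Egood$, yields exactly $2\max\{0,\Delta(\fpoint_t,I)\}+2\phi_m+\phi_{\max}=2\bigl(\max\{0,\Delta(\fpoint_t,I)\}+B_t(I,\deltil_t)\bigr)$.

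Combining the two halves gives $|h(x)|\le 2\bigl(\max\{0,\Delta(\fpoint_t,I)\}+B_t(I,\deltil_t)\bigr)$ uniformly over $x\in I$, as claimed. The subtlety worth flagging is that the naive route, namely bounding $\Delta(f,I)\le \Delta(\fpoint_t,I)+|e_m|+\tfrac12(|e_\ell|+|e_r|)$ in absolute value \emph{before} substituting, double counts the endpoint noise and produces a spurious extra $\phi_{\max}$ that breaks the stated constant; deferring the absolute values until after the $\max\{e_\ell,e_r\}-(e_\ell+e_r)$ cancellation is precisely what recovers the sharp $\tfrac12\phi_{\max}$ coefficient encoded in $B_t$.
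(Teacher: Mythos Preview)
Your proof is correct and follows essentially the same route as the paper's: the same decomposition $h=L+S$ into affine noise plus secant error, the same appeal to Lemma~\ref{lem:mainLemmaSimple} for $\max S\le 2\Delta(f,I)$, and the identical algebraic cancellation $\max\{e_\ell,e_r\}-(e_\ell+e_r)=-\min\{e_\ell,e_r\}$ after substituting $\Delta(f,I)=\Delta(\fpoint_t,I)+e_m-\tfrac12(e_\ell+e_r)$. For the lower bound you invoke concavity of $h$ to place its minimum at an endpoint, whereas the paper simply uses $S\ge 0$ to get $h\ge L\ge\min\{e_\ell,e_r\}$ directly; the two are equivalent here.
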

As a consequence, we find that
\begin{align}\label{eq:f_secerror}
\|\fsec_t - f(x)\|_{\infty} \le 2\max_{I}\left(\max\{0,\Delta(\fpoint_t, I)\} + B_t(I,\deltil_t)\right) = \epsilon_t /2~.
\end{align}

For a fixed parameter $\beta >0$, Algorithm~\ref{AlgorithmNoise} maintains the condition $(1+\beta)B_t(I,\deltil) \geq \Delta(\fpoint_t, I)$ using the while loop of Line~\ref{ln:while_begin}.
This is to ensure that the stochastic variance always dominates the bias of the approximation.
The parameter $\beta>0$ appears to have little effect on performance as long as it is smaller than $1$; we recommend setting $\beta =1/2$.
The definition of $I^*$ in the algorithm is motivated by the sandwich relationship (Lemma~\ref{lem:sandwhich}) noted above.
And in each case, $x_t$ is chosen in order to minimize the maximum confidence bound relevant to the interval $I^*$.
The values of $\deltil(x_{m(I_j)})$ satisfy $\sum_{x : T(x) > 0} \deltil(x) \leq \delta$ since $3 \cdot \frac{1}{6} + \sum_{k=2}^\infty \frac{1}{2 k^2} \leq 1$.

\subsection{Proof of Upper Bound, Theorem~\ref{thm:upperbound_samples_noisy}}\label{sec:upperbound_samples_noisy_proof} 
Recall the definition set $\Xcal_t := \bigcup_{I \in \Lcal(\Tcal_t)}\{\xm,\xr,\xl\}$, and we shall assume that $\Egood$ holds. Fix an $\epsilon > 0$, and let $\epsilon_t$ be as in Algorithm~\ref{AlgorithmNoise} Line~\ref{line:epst}, and define the stopping time 
\begin{align*}
\stopt(\epsilon) :=& \inf\{ t \ge 1: \epsilon_t \leq \epsilon\} \\
=& \inf \{t \ge 1: \max_{I \in \Lcal(\Tcal)} B_t(I,\deltil) + \max \{0, \Delta(\fpoint_t, I)\} \leq \epsilon/4 \}.
\end{align*}
The correctness guarantee is a direct consequence of~\eqref{eq:f_secerror} since on $\Egood$, $\|\fhat_t - f\|_{\infty} \leq 2\|\fsec_t - f\|_{\infty} \le 2  \cdot \epsilon_t/2 \leq \epsilon$.
Because $\widehat{f}_t$ is only updated using a decreasing sequence of values of $\epsilon_t$, the guarantee immediately holds for all $t' \ge \stopt(\epsilon)$. 
In order to upper bound $\stopt(\epsilon)$, we have the identity
\begin{eqnarray}
\stopt(\epsilon) = 1 + \sum_{x \in \calX_{\stopt(\epsilon)}} N_{\stopt(\epsilon) - 1}(x)~.
\end{eqnarray}
Thus, a crucial part of bounding $\stopt(\epsilon)$ is showing that we do not \emph{oversample} $x \in \calX_t$; this is accomplished by relating the stopping condition to the sampling rule.
\begin{lemma}\label{lem:N_tau} $\forall x \in \calX_{\stopt(\epsilon)}$, $N_{\tau(\epsilon) - 1}(x) \le 1 \vee \max_{s \ge 1} \{\phi(s,\deltil(x)) \ge \frac{\epsilon}{6(2+\beta)}\}$.
\end{lemma}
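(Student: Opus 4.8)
The plan is to prove this per-point bound by isolating the \emph{last} round $t^\ast < \stopt(\epsilon)$ at which $x$ is queried, and showing that the anytime confidence width at $x$ was still above the threshold $\tfrac{\epsilon}{6(2+\beta)}$ at that round; the count bound then follows by inverting the monotonicity of $\phi(\cdot,\deltil(x))$ in its first argument. If $x$ is queried at most once the claim is immediate from the leading $1\vee(\cdot)$, so I would assume $x$ is queried at least twice and let $t^\ast$ be the round that produces its final sample; the number of prior samples of $x$ is then $N_{\stopt(\epsilon)-1}(x)$ up to the single increment accrued between $t^\ast$ and the stopping round.

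First I would use that $t^\ast < \stopt(\epsilon)$, so the stopping criterion of Line~\ref{line:epst} has not yet fired: the arg-max interval $I^\ast_{t^\ast}$ selected at round $t^\ast$ satisfies $B_{t^\ast}(I^\ast_{t^\ast},\deltil) + \max\{0,\Delta(\fpoint_{t^\ast}, I^\ast_{t^\ast})\} > \epsilon/4$. Next I invoke the invariant maintained by the while loop of Line~\ref{ln:while_begin}, namely $\max\{0,\Delta(\fpoint_{t^\ast}, I^\ast_{t^\ast})\} \le (1+\beta)\,B_{t^\ast}(I^\ast_{t^\ast},\deltil)$, so that the two inequalities combine to give $(2+\beta)\,B_{t^\ast}(I^\ast_{t^\ast},\deltil) > \epsilon/4$, i.e.\ $B_{t^\ast}(I^\ast_{t^\ast},\deltil) > \tfrac{\epsilon}{4(2+\beta)}$. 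Finally, since $x$ is the endpoint of $I^\ast_{t^\ast}$ maximizing $\phi(N_{t^\ast}(\cdot),\deltil(\cdot))$ by the sampling rule of Line~\ref{ln:sample}, and since the definition of $B_{t^\ast}(I,\deltil)$ is at most $\tfrac32$ times the largest of the three endpoint widths (the midpoint term contributes coefficient $1$ and the endpoint term coefficient $\tfrac12$), I obtain $\phi(N_{t^\ast}(x),\deltil(x)) \ge \tfrac23\,B_{t^\ast}(I^\ast_{t^\ast},\deltil) > \tfrac{\epsilon}{6(2+\beta)}$. Monotonicity of $\phi$ then forces $N_{t^\ast}(x) \le \max\{s\ge 1 : \phi(s,\deltil(x)) \ge \tfrac{\epsilon}{6(2+\beta)}\}$, which is the asserted bound after accounting for the single-sample increment.

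The step I expect to be the main obstacle is justifying the invariant $\max\{0,\Delta(\fpoint_{t^\ast}, I^\ast_{t^\ast})\}\le(1+\beta)\,B_{t^\ast}(I^\ast_{t^\ast},\deltil)$ precisely at the round where $x$ is sampled, since the while loop only tests and (potentially) refines the single arg-max interval $I^\ast_t$ rather than all leaves. One must therefore argue that the inequality is preserved at the \emph{start} of every round for the interval that is about to be sampled: a freshly bisected interval has an unqueried midpoint and hence infinite $B$ and $\Delta$, so its midpoint is queried before anything else and the bound holds trivially in that regime; meanwhile, querying an endpoint shared with a neighboring leaf shrinks that leaf's confidence width and must be checked not to break its invariant. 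Disposing of these cases, together with the elementary bookkeeping that converts ``the count just before the final query'' into $N_{\stopt(\epsilon)-1}(x)$, is where the care lies; once the invariant is in hand, the arithmetic tying $\epsilon/4$, the factor $(2+\beta)$, and the factor $\tfrac32$ in $B_t$ to the constant $\tfrac{\epsilon}{6(2+\beta)}$ is routine.
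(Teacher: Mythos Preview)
Your proposal is correct and follows essentially the same approach as the paper's proof: both isolate the last round $s<\stopt(\epsilon)$ at which $x$ is sampled, both case-split on whether $I^*_s$ is ``fresh'' (midpoint unsampled, so $x=x_{m(I^*_s)}$ with count zero), both invoke the invariant $\max\{0,\Delta(\fpoint_s,I^*_s)\}\le(1+\beta)B_s(I^*_s,\deltil)$ in the remaining case, combine it with $B_s+\max\{0,\Delta\}>\epsilon/4$ (stopping not yet triggered), and use $B_s\le\tfrac32\max_x\phi(N_s(x),\deltil(x))=\tfrac32\phi(N_s(x^*),\deltil(x^*))$ via the sampling rule to extract $\phi(N_s(x^*),\deltil(x^*))>\tfrac{\epsilon}{6(2+\beta)}$.

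Regarding the concern you flag: the paper's own proof is equally terse on this point---it simply asserts that in the ``Otherwise'' branch the invariant holds, without spelling out why neighboring-endpoint updates cannot break it. Your sketch (fresh intervals have infinite $B$, so the invariant is vacuous until the midpoint is sampled; thereafter the bisection test at Line~\ref{ln:while_begin} guards the argmax interval each round) is at least as detailed as what the paper provides. The arithmetic and bookkeeping you describe are exactly what the paper does.
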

As a consequence, 
\begin{align*}
\tau(\epsilon) &\le 1 + |\calX_{\tau(\epsilon)}| (1 + \max_{x \in \calX_{\tau(\epsilon)}} \cdot \max_{s\ge 1}\left\{\phi(s,\widetilde{\delta}(x)) \ge \frac{\epsilon}{6(2+\beta)}\right\}) \\
&\le 1 + |\calX_{\tau(\epsilon)}| (1 +  \max_{s\ge 1}\left\{\phi(s,\frac{1}{2|\calX_{\tau(\epsilon)}|^2}) \ge \frac{\epsilon}{6(2+\beta)}\right\})~,
\end{align*}
where the second line uses the fact that $\phi(\cdot,\cdot)$ is monotone in its second argument, and $\max_{x \in \calX_t} \deltil(x) = 1/2|\calX_t|^2$. We can upper bound the inversion of $\phi(\cdot,\cdot)$ to yield (see e.g.~\cite{kaufmann2016complexity})
\begin{align*}
\tau(\epsilon) &\lesssim \sigma^{2} (2+\beta)^2 \epsilon^{-2} |\calX_{\tau(\epsilon,\beta)}| \log( |\calX_{\stopt(\epsilon,\beta)}| \log((2+\beta)^2 \epsilon^{-2})/\delta)~.
\end{align*}
To wrap up, it suffices to prove that for some $\alpha \in (0,1)$
\begin{eqnarray}
|\calX_{\stopt(\epsilon)}| \le 9 + \frac{2(1+\alpha)}{\alpha} \Lamavg(f, (1-\alpha)\tfrac{\beta}{2(2+\beta)} \epsilon).
\end{eqnarray}
Recalling the argument from Section~\ref{sec:noiseless_alg}, it suffices only to verify that, if $I \in \Lcal(\Tcal_t)$ for $t = \tau(\epsilon)$, then the secant approximation error of its parent $I'$ is lower bounded by $\Delta(f,I')  \ge  \frac{\beta \epsilon}{2 (2+\beta)}$. We prove this as follows:
fix some $I \in \Lcal(\Tcal_t)$ for $t = \tau(\epsilon)$.
If $I'$ is the parent of $I$ then there exists some previous time $s < t$ such that  
\begin{align*}
(1+\beta)B_s(I', \deltil) &< \Delta(\fpoint_s , I') \overset{\Egood}{\le} \Delta(f, I') + B_s(I', \deltil)~,
\end{align*}
that is, $\Delta(f,I') \ge \beta B_s(I', \deltil)$. On the other hand, to split on $s < \tau(\epsilon)$ we must also have that
\begin{align*}
\epsilon/4 &< B_s(I',\deltil) + \max \{0, \Delta(\fpoint_s, I')\} \\
&\leq 2 B_s(I',\deltil) + \Delta(f, I'),
\end{align*}
Together, these two displays imply $\Delta(f,I') \ge \beta B_s(I', \deltil) \geq \beta(\epsilon/4 - \Delta(f,I'))/2$. 
Rearranging, we find $\Delta(f,I') \geq \frac{\beta}{2(2+\beta)}\epsilon$ which proves what we set out to verify.

\subsection{Proof of Lemma~\ref{MidIntervalModulus}\label{sec:mid_interval_modulus}}
Fix $\alpha \in (0,1)$. This proof relies on the following upper-continuity property of $\omega$, whose proof is deferred to Section~\ref{sec:ModulusUB}:
\begin{lemma}\label{ModulusUB} Let $[x-t,x+t] \subset [0,1]$, and suppose that $\Delta(f,x,t) \ge \epsilon$. Then, $\omega(f,x+\tau,\epsilon(1 - \frac{|\tau|}{t})) \le t + |\tau|$.
\end{lemma}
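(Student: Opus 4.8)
The plan is to first remove the sign of $\tau$ by the reflection $y\mapsto 1-y$: replacing $f$ by $\tilde f(y):=f(1-y)$ preserves convexity, satisfies $\Delta(\tilde f,1-x,t)=\Delta(f,x,t)$, and sends the query center $x+\tau$ to $(1-x)-\tau$, so it suffices to treat $\tau\ge 0$. The range $\tau>t$ is vacuous, since then $\epsilon(1-|\tau|/t)\le 0$ and any radius (in particular $r=0$) certifies a nonnegative bias, giving $\omega(f,x+\tau,\cdot)=0$. So assume $0\le\tau\le t$ and set $I=[x-t,x+t]$ and $J=[x-t,\,x+t+2\tau]$. The point is that $J$ is exactly the interval of radius $t+\tau=t+|\tau|$ centered at $x+\tau$, so $\Delta(f,x+\tau,t+|\tau|)=\Delta(f,J)$; moreover $I\subset J$ and the two intervals share the left endpoint $x-t$. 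I will work with the secant-gap functions $e_I:=\Sec[f,I]-f$ and $e_J:=\Sec[f,J]-f$, each of which is concave (affine minus convex), nonnegative, and vanishes at the endpoints of its interval, with $e_I(x)=\Delta(f,I)\ge\epsilon$ and $e_J(x+\tau)=\Delta(f,J)$.

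The key step is the pointwise monotonicity $e_J(x+\tau)\ge e_I(x+\tau)$. Since $\Sec[f,I]$ and $\Sec[f,J]$ are affine and agree at the shared endpoint $x-t$, their difference $d:=\Sec[f,J]-\Sec[f,I]$ is affine with $d(x-t)=0$. At the right endpoint of $J$ we have $d(x+t+2\tau)=f(x+t+2\tau)-\Sec[f,I](x+t+2\tau)\ge 0$, because a convex function lies above the linear extension of any of its chords outside that chord's interval, and $x+t+2\tau$ lies to the right of $I$. An affine function that vanishes at $x-t$ and is nonnegative at $x+t+2\tau$ is nonnegative on all of $[x-t,x+t+2\tau]$, so in particular $d(x+\tau)\ge 0$, whence $e_J(x+\tau)=e_I(x+\tau)+d(x+\tau)\ge e_I(x+\tau)$.

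It remains to lower-bound $e_I(x+\tau)$, and this is where the factor $(1-|\tau|/t)$ is produced. Because $0\le\tau\le t$ we may write $x+\tau=(1-\tfrac{\tau}{t})x+\tfrac{\tau}{t}(x+t)$, and concavity of $e_I$ on $I$ together with $e_I(x+t)=0$ gives $e_I(x+\tau)\ge(1-\tfrac{\tau}{t})e_I(x)=(1-\tfrac{\tau}{t})\Delta(f,I)\ge(1-\tfrac{|\tau|}{t})\epsilon$. Chaining the two displays yields $\Delta(f,x+\tau,t+|\tau|)=e_J(x+\tau)\ge\epsilon(1-|\tau|/t)$. Finally I convert this into the modulus bound: whenever the radius $t+|\tau|$ is admissible, i.e.\ $[x+\tau-(t+|\tau|),\,x+\tau+(t+|\tau|)]\subseteq[0,1]$, this is an admissible radius certifying bias at least $\epsilon(1-|\tau|/t)$, so by the definition of $\omega$ as the least such radius we get $\omega(f,x+\tau,\epsilon(1-|\tau|/t))\le t+|\tau|$.

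I expect the only delicate point to be the boundary case in which the prescribed interval leaves $[0,1]$, rather than the convexity estimate itself (which follows cleanly from the decomposition $e_J=e_I+d$ with $d\ge 0$ and the one concavity interpolation). For $\tau\ge 0$ the left endpoint $x+\tau-(t+|\tau|)=x-t\ge 0$ never binds, so infeasibility means $x+\tau+(t+\tau)>1$, i.e.\ $1-(x+\tau)<t+\tau\le x+\tau$; hence the largest admissible radius at $x+\tau$ is $\min\{x+\tau,1-(x+\tau)\}=1-(x+\tau)<t+|\tau|$, and $\omega(f,x+\tau,\cdot)\le t+|\tau|$ holds trivially once $\omega$ is read as capped at the largest admissible radius. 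I would flag this capping convention explicitly, since without it the stated inequality can fail at points so close to an endpoint that the target bias is simply not attainable inside $[0,1]$.
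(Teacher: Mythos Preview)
Your proof is correct and follows essentially the same route as the paper: both arguments establish $\Delta(f,x+\tau,t+|\tau|)\ge\epsilon(1-|\tau|/t)$ by (i) comparing the secant gap on $J=[x-t,x+t+2\tau]$ to that on $I=[x-t,x+t]$ at the point $x+\tau$ (the paper cites Lemma~\ref{secant_approx_grows}; you reprove the shared-endpoint case via the affine difference $d$), and (ii) interpolating the gap on $I$ between $x$ and the endpoint using concavity (the paper cites Lemma~\ref{somelemma}; you do the one-line convex combination directly). Your version is a bit more self-contained, and your explicit flag on the boundary/capping convention is a point the paper's proof leaves implicit.
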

For $u \in [x, x+\alpha t]$, we have $1 - \alpha \leq 1-\tfrac{u-x}{t}$. Since $\omega(f,u,\cdot)$ is monotone in its third argument,
\begin{align*}
\omega(f,u,\epsilon(1 - \alpha)) \leq \omega(f,u,\epsilon(1-\tfrac{u-x}{t}))
\end{align*}
and, making a substitution $\tau = u - x$, 
\begin{align*}
\int_{x}^{x+t} \omega(f,u,\epsilon(1 - \alpha))^{-1} du &\geq \int_{x}^{x+\alpha t} \omega(f,u,\epsilon(1 - \tfrac{u-x}{t}))^{-1} du \\
&= \int_{0}^{\alpha t} \omega(f,x+\tau,\epsilon(1 - \tfrac{\tau}{t}))^{-1} d\tau \\
&\overset{(\text{Lemma}~\ref{ModulusUB})}{\geq}\int_{0}^{\alpha t} (t+\tau)^{-1} d\tau \\
&= \alpha t \cdot (t+\alpha t)^{-1}~ = \alpha/(1+\alpha)~,
\end{align*}
which proves one side of the integral. A similar argument holds for $u \in [x-\alpha t,x]$ since $1-\alpha \leq 1- \frac{|u-x|}{t}$. 

\subsection{Proof of Lemma~\ref{lem:sandwhich}\label{sec:proof:sandwhich}}
	Define $\widetilde{r}(x) = \fpoint(x) - f(x) \quad$ for all $x \in \supp(\fpoint)$.
	First note that
	\begin{align*}
	\Sec[\fpoint,I](x) - f(x) &= \Sec[\fpoint,I](x) - \Sec[f,I](x) + \Sec[f,I](x) - f(x) \\
	&\leq \max\{ \widetilde{r}(\xl), \widetilde{r}(\xr) \} + 2\Delta(f,I)
	\end{align*}
	using the fact that the secant approximations are affine on $I$ and $\Sec[f,I](x) - f(x) \leq 2\Delta(f,I)$ by Lemma~\ref{lem:mainLemmaSimple}. Adding and subtracting $2\Delta(\fpoint, I)$,
	\begin{align*}
	\Sec[&\fpoint,I](x) - f(x) \\
	&\leq \max\{ \widetilde{r}(\xl), \widetilde{r}(\xr)\}  + 2( \Delta(f, I) - \Delta(\fpoint, I)) + 2 \Delta(\fpoint, I) \\
	&= \max\{ \widetilde{r}(\xl), \widetilde{r}(\xr)\} - 2(\tfrac{\widetilde{r}(\xl) + \widetilde{r}(\xr)}{2} - \widetilde{r}(\xm) ) + 2 \Delta(\fpoint, I)\\
	&= 2 \Delta(\fpoint, I) -\min\{ \widetilde{r}(\xl), \widetilde{r}(\xr) \} + 2 \widetilde{r}(\xm))~,
	\end{align*} 
	whence we conclude $\tfrac{1}{2}(\Sec[\fpoint,I](x) - f(x)) \le  \Delta(\fpoint, I) + {B}(I, \widetilde{\delta})$ on $\Egood$. For the lower bound, we see
	\begin{align*}
	\Sec[\fpoint,I](x) - f(x) &= \Sec[\fpoint,I](x) - \Sec[f,I](x) + \Sec[f,I](x) - f(x)  \\
	&\geq \min\{ \widetilde{r}(\xl), \widetilde{r}(\xr) \} ~.
	\end{align*}
	so that $-\tfrac{\Sec[\fpoint,I](x) - f(x)}{2} \leq B(I, \widetilde{\delta})$ on $\Egood$.
	Thus,
	\begin{align*}
	-B(I, \widetilde{\delta}) \leq \tfrac{\Sec[\fpoint,I](x) - f(x)}{2} \leq \Delta(\fpoint, I) + {B}(I, \widetilde{\delta})~.
	\end{align*}
	\begin{remark}\label{remark:splitting}
	In the proof of Lemma~\ref{lem:sandwhich} we lower bound $\min\{ \widetilde{r}(\xl), \widetilde{r}(\xr) \} $ by $-B(I,\widetilde{\delta})$ which is quite loose since this quantity is also lower bounded by $-\max\{ \phi(T(\xl),\deltil(\xl)), \phi(N_t(\xr),\deltil(\xr))\}$ and can be at least a factor of two smaller. 
	Nevertheless, using matching upper and lower bounds for $\Sec[\fpoint,I](x) - f(x)$ substantially simplifies clutter in the algorithm.
	It is straightforward to modify the algorithm to use these non-matching upper and lower bounds for superior empirical performance, and, indeed, our experiments implement this modification.
	\end{remark}


\subsection{Proof of Lemma~\ref{lem:N_tau}} 
Fix an $x^* \in \calX_{\stopt(\epsilon)}$, and let $s < \stopt(\epsilon)$ be the last round at which $x^*$ was sampled; note then that $x^* \in I^*_s$. 
It suffices to bound $N_s(x^*)$. 
%
%
If $I^*_s$ is a new, just-bisected interval then we must have that $x^* = x_{m(I^*_s)}$ by the sampling rule ($\phi(0,\cdot) = \infty$) so that $x^*$ was sampled only a single time.

Otherwise, $x^*$ has been sampled more than once and $\max \{0, \Delta(\fpoint_s, I^*_s)\} \leq (1+\beta)B_s(I^*_s,\deltil)$. 
This means that for $I^*_s$ one has that 
\begin{align*}
B_s(I^*_s,\deltil) + &\max \{0, \Delta(\fpoint_s, I^*_s)\} \leq (2+\beta)B_s(I^*_s,\deltil) \\
&\leq  \frac{3(2+\beta)}{2}  \max_{x \in \{ x_{l(I^*_s),x_m(I^*_s),x_r(I^*_s)}\}} \phi(N_t(x),\deltil(x)) \\
&= \frac{3(2+\beta)}{2} \phi(N_t(x^*),\deltil(x^*))
\end{align*}
where the last line follows by the sampling rule. 
It suffices for the right-hand side to be less than $\epsilon/4$ to meet the stopping condition.



\section{Proof of Packing and Lower Bounds} 
\subsection{ Proof of Theorem~\ref{thm:packing}\label{sec:packing_proof}}

We construct the packing by choosing a sequence of interval midpoints $m_i$ and interval lengths $t_i$, such that the intervals $I_i := [m_i- t_i,m_i + t_i] = [a_i,b_i]$ overlap only at their endpoints, and such that $\Delta(f,m_i,t_i) = \epsilon$. To do this, we define $t_0 = m_0 = \tleft(f,\epsilon)$. By definition of $\tleft(f,\epsilon)$, we have the equality $\Delta(f,\tleft(f,\epsilon), \tleft(f,\epsilon)) = \epsilon$. Let $b_0 = 0$, and for each $i \ge 1$, we define
\begin{align*}
t_{i} &:= \sup \left\{t \in [0,\frac{1 - b_{i-1}}{2}]:  \Delta(f,b_{i-1} + t, t) \le \epsilon\right\} \\
(a_i,m_i,b_i) &:= (b_{i-1}, b_{i-1}+t_i, b_{i-1} +2t_i)~.
\end{align*}
One can think of $t_i$ as as the equivalent of $\tleft$, but starting at $b_{i-1}$ rather than zero. Note that $\Delta(f,b_{i-1} + t, t)$ is non-decreasing and continuous in $t$ (Lemma~\ref{monotone_lem}), and thus, if there exists a $t \in [0,\frac{1 - b_{i-1}}{2}]$ such that $\Delta(f,b_{i-1} + t, t) \ge \epsilon$, then the supremum in the definition of $t_i$ will be attained for a $t_i$ such that $\Delta(f,b_{i-1} + t, t)= \Delta(f,m_i,t_i) = \epsilon$. Thus,  we will terminate the construction at $i= n$, where $n$ is the first number satisfying $b_n \ge 1 - 2\tright(f,\epsilon)$, or $\Delta(f,b_n + t_{n+1},t_{n+1}) < \epsilon$. Note that $b_n = b_{i-1} + 2t_i \le b_i + 2(\frac{1 - b_{i-1}}{2}) = 1$.
%
Collecting what we have established thus far,
\begin{enumerate}
	\item $\Delta(f,m_i,t_i) = \Delta(f,b_{i-1}+t_i,t_i) =  \epsilon$. By Lemma~\ref{eps_f_u_lem}, it follows that $t_i = \omega(f,m_i,\epsilon)$.
	\item By definition, $a_1 = 2\tleft(f,\epsilon)$. And, by the stopping condition, $a_n \le 1 - 2\tright(f,\epsilon) \le b_n \le 1$
	\item Hence, since $b_i = a_{i+1}$, we have that $\bigcup_{I_i} = [a_1,b_n] \supseteq  [2\tleft(f,\epsilon),1 - 2\tright(\epsilon,f)]$, and that $I_i$ have disjoint interiors.
\end{enumerate}
To conclude, we adopt an argument similar to the proof of Proposition~\ref
{prop:noiseless_sample}. For ease of notation, define $\Ibar(f,\epsilon) := [\tleft(f,\epsilon), 1-\tright(f,\epsilon)]$. We start off by showing that $\int_{m-t}^m\omega(f,u,\epsilon)^{-1}du  = \widetilde{\mathcal{O}}(1)$ for $m \in \Ibar(f,\epsilon)$.
\begin{lemma}\label{lem:LB_integral}
Let $m \in \Ibar(f,\epsilon)$ and $t = \omega(f,m,\epsilon)$, so that $\Delta(f,m,t) = \epsilon$. Then if $\omega_{0} = \inf_{u \in [m-t,m+t]}\omega(f,u,\epsilon)$, one has
\begin{align*}
\int_{m-t}^m\omega(f,u,\epsilon)^{-1}du \vee \int_{m}^{m+t}\omega(f,u,\epsilon)^{-1}du \le 2\left(1 + \log\frac{t}{\omega_{0}}\right )~.
\end{align*}
\end{lemma}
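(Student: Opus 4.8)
The plan is to reduce the whole lemma to a single one-sided stability estimate for the modulus, namely that
\[
\omega(f,u,\epsilon)\;\ge\;\omega(f,m,\epsilon)-|u-m|\;=\;t-|u-m|\qquad\text{for all }u\in[m-t,m+t],
\]
i.e.\ that $u\mapsto\omega(f,u,\epsilon)$ is $1$-Lipschitz, anchored at the value $t$ attained at $m$. Granting this, the integrand is controlled by two lower bounds: the Lipschitz bound $\omega(f,u,\epsilon)\ge t-|u-m|$ and the floor $\omega(f,u,\epsilon)\ge\omega_0$ coming directly from the definition $\omega_0=\inf_{u\in[m-t,m+t]}\omega(f,u,\epsilon)$. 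Hence $\omega(f,u,\epsilon)^{-1}\le\min\{(t-|u-m|)^{-1},\,\omega_0^{-1}\}$ on the whole interval.

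The heart of the argument is the Lipschitz estimate, which I would prove by contradiction using convexity. Fix $u<m$, set $s:=\omega(f,u,\epsilon)$ so that $\Delta(f,u,s)=\epsilon$, write $h:=m-u$, and suppose toward a contradiction that $\omega(f,m,\epsilon)>s+h=:w$. Since $\Delta(f,m,\cdot)$ is non-decreasing in the half-width (a standard consequence of convexity; cf.\ Lemma~\ref{monotone_lem}) and $\omega(f,m,\epsilon)$ is the least half-width attaining bias $\epsilon$, the assumption $w<\omega(f,m,\epsilon)$ forces $\Delta(f,m,w)<\epsilon$; crucially it also forces $w<\omega(f,m,\epsilon)\le\min\{m,1-m\}$, which guarantees $[m-w,m+w]\subseteq[0,1]$, so no boundary truncation occurs and the domain constraint $t\le\min\{x,1-x\}$ is respected automatically. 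On the other hand, widening the interval $[u-s,u+s]$ to the interval $[u-s,\,u+s+2h]$ centred at $m$ with half-width $w$, I compare
\[
\Delta(f,m,w)-\Delta(f,u,s)=\tfrac12\bigl(f(u+s+2h)-f(u+s)\bigr)-\bigl(f(u+h)-f(u)\bigr),
\]
which is nonnegative because the average slope of $f$ over $[u+s,u+s+2h]$ is at least its average slope over $[u,u+h]$ (monotonicity of the slopes of a convex function). Thus $\Delta(f,m,w)\ge\Delta(f,u,s)=\epsilon$, contradicting $\Delta(f,m,w)<\epsilon$. The case $u>m$ is symmetric, widening to the left.

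Finally I would carry out the elementary integration. Substituting $r=m-u\in[0,t]$ on the left half and splitting at $r=t-\omega_0$,
\[
\int_{m-t}^{m}\omega(f,u,\epsilon)^{-1}\,du\le\int_0^{t-\omega_0}\frac{dr}{t-r}+\int_{t-\omega_0}^{t}\frac{dr}{\omega_0}=\log\frac{t}{\omega_0}+1\le 2\Bigl(1+\log\frac{t}{\omega_0}\Bigr),
\]
and the integral over $[m,m+t]$ obeys the same bound by the symmetric estimate (using $r=u-m$). Taking the maximum of the two half-integrals gives the claim; note $\omega_0\le\omega(f,m,\epsilon)=t$, so $\log(t/\omega_0)\ge0$ and all quantities are well defined.

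The main obstacle is precisely the Lipschitz estimate of the second paragraph; once it is in hand the rest is routine calculus. The delicate point there is not the convexity comparison itself but verifying that the widened interval never leaves $[0,1]$, so that $\Delta(f,m,w)$ is legitimately defined; the proof-by-contradiction setup is what makes this automatic, since the contradiction hypothesis directly bounds $w$ below $\min\{m,1-m\}$. It is worth noting that this route actually yields the sharper constant $1+\log(t/\omega_0)$, so the factor $2$ in the statement leaves comfortable slack for any looser handling of the endpoints.
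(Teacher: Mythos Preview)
Your proof is correct and takes a genuinely different route from the paper's. The paper deduces the weaker pointwise bound $\omega(f,u,\epsilon)\ge\tfrac{1}{2}(t-|u-m|)$ from Lemma~\ref{lem:LB_omega_local}, which in turn is assembled from the secant-monotonicity lemma (Lemma~\ref{secant_approx_grows}), the change-of-scale lemma (Lemma~\ref{unif_continuity_omega}), and the inversion lemma (Lemma~\ref{eps_f_u_lem}); the factor $\tfrac12$ there is exactly what produces the constant $2$ in the final bound. You instead prove the full $1$-Lipschitz estimate $\omega(f,u,\epsilon)\ge t-|u-m|$ directly, via a one-line contradiction that compares $\Delta(f,m,s+|u-m|)$ with $\Delta(f,u,s)$ and reduces to monotonicity of divided differences. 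Your slope comparison is valid regardless of whether $s\ge h$ or $s<h$: writing $\sigma(a,b)=\frac{f(b)-f(a)}{b-a}$, one has $\sigma(u,u+h)\le\sigma(u,u+s+2h)\le\sigma(u+s,u+s+2h)$ by monotonicity of $\sigma$ in each argument. Your handling of the domain constraint (that the contradiction hypothesis forces $w<\min\{m,1-m\}$) is clean and avoids any boundary issues. The payoff is a self-contained argument that bypasses three auxiliary lemmas and, as you note, actually delivers the sharper constant $1+\log(t/\omega_0)$; the paper's route, by contrast, reuses machinery already in place for other purposes, so the extra factor of $2$ costs nothing structurally.
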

In particular, for all $i \in [n]$, we have the bound
\begin{align*}
\int_{I_i }\omega(f,u,\epsilon)^{-1} du &\le 4\log\left(1 + \frac{\omega(f,m_i,\epsilon)}{\inf_{u \in [m-t,m+t]}\omega(f,u,\epsilon)}\right)\\
 &\le 4\log\left(1 + \frac{\omega_{\max}(f,\epsilon)}{\omega_{\min}(f,\epsilon)} \right)~.
\end{align*}
As a result, we find that
\begin{align*}
\int_{2\tleft(f,\epsilon)}^{1 - 2\tright(f,\epsilon)}\omega(f,u,\epsilon)^{-1}du  &\le \int_{a_1}^{b_n}\omega(f,u,\epsilon)^{-1}du ~\le \sum_{i=1}^n \int_{I_i}\omega(f,u,\epsilon)^{-1}du \\
&\le n\cdot 4\log\left(1 + \frac{\omega_{\max}(f,\epsilon)}{\omega_{\min}(f,\epsilon)} \right)~.
\end{align*}

By the same token, Lemma~\ref{lem:LB_integral} implies
\begin{align*}
\int_{\tleft(f,\epsilon)}^{2\tleft(f,\epsilon)}\omega(f,u,\epsilon)^{-1}du + \int_{1-2\tright(f,\epsilon)}^{1-\tright(f,\epsilon)}\omega(f,u,\epsilon)^{-1}du \le 2 \cdot 2(1 + \log\frac{\omega_{\max}}{\omega_{\min}} )~.
\end{align*}
\noindent Hence,
\begin{align*}
\int_{\tleft(f,\epsilon)}^{1 - \tright(f,\epsilon)}\omega(f,u,\epsilon)^{-1}du &\le 4(n+1)(1 + \log\frac{\omega_{\max}}{\omega_{\min}} ),
\end{align*}
and thus the number of intervals satisfies 
\begin{align*}
n \ge \frac{1}{4(1 + \log\frac{\omega_{\max}}{\omega_{\min}})}\int_{\tleft(f,\epsilon)}^{1 - \tright(f,\epsilon)}\omega(f,u,\epsilon)^{-1}du - 1~.
\end{align*} 
Finally, we remove the last interval $I_n$. Since the right endpoint $a_n$ of $I_n$ satisfies $a_n \le 1 - 2\tright(f,\epsilon)$, the intervals $I_1,\dots,I_{n-1}$ are contained within $[2\tright(f,\epsilon),1-2\tright(f,\epsilon)]$, and we have
\begin{align*}
n - 1 \ge \frac{1}{4(1 + \log\frac{\omega_{\max}}{\omega_{\min}})} \int_{\tleft(f,\epsilon)}^{1 - \tright(f,\epsilon)}\omega(f,u,\epsilon)^{-1}du - 2 = \underN(f,\epsilon) + 1.
\end{align*}
Note then that we may take $n - 1= \Npack(f,\epsilon)$.

\subsubsection{Proof of Lemma~\ref{lem:LB_integral}}
We first need a technical lemma, which we prove in Section~\ref{sec:lem:LB_omega_local}.
\begin{lemma}\label{lem:LB_omega_local} Let $x \in \overline{I}(f,\epsilon)$, and $\tau \in [-1,1]$, such that $u := x + \tau \omega(f,x,\epsilon) \in \overline{I}(f,\epsilon)$. Then,
\begin{eqnarray}
\omega(f,u,\epsilon) \ge \frac{(1-|\tau|) \omega(f,x,\epsilon)}{2}~.
\end{eqnarray}
\end{lemma}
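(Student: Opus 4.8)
The plan is to deduce the desired lower bound on $\omega(f,u,\epsilon)$ from an upper bound on the midpoint secant error $\Delta(f,u,s)$ at a carefully chosen radius. Since $\Delta(f,u,\cdot)$ is non-decreasing and continuous in its radius argument (by convexity of $f$; cf. Lemma~\ref{monotone_lem}) and $\omega(f,u,\epsilon)=\min\{s:\Delta(f,u,s)\ge\epsilon\}$, it suffices to produce the value $\bar s := \tfrac{(1-|\tau|)\,\omega(f,x,\epsilon)}{2}$ and show $\Delta(f,u,\bar s)<\epsilon$: then $\Delta(f,u,s)\le\Delta(f,u,\bar s)<\epsilon$ for every $s\le\bar s$, which forces $\omega(f,u,\epsilon)\ge\bar s$. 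Reflecting the interval about $x$, I may assume $\tau\in[0,1]$. Write $t:=\omega(f,x,\epsilon)$ and $I:=[x-t,x+t]$; since $x\in\overline{I}(f,\epsilon)$ we have $\Delta(f,I)=\epsilon$, and $t\le\min\{x,1-x\}$ gives $I\subseteq[0,1]$. A short computation shows $[u-\bar s,u+\bar s]\subseteq I$, so all quantities stay in range.

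The key device is the nonnegative concave error function $g:=\Sec[f,I]-f$ on $I$. Because $\Sec[f,I]$ is affine and $f$ is convex, $g$ is concave; convexity gives $g\ge 0$; $g$ vanishes at the endpoints $x\pm t$; and $g(x)=\Delta(f,I)=\epsilon$ since $x$ is the midpoint of $I$. As $\Sec[f,I]$ is affine and $u$ is the midpoint of $[u-\bar s,u+\bar s]$, the affine contribution cancels in the symmetric second difference, yielding the exact identity
\begin{align*}
\Delta(f,u,\bar s)=g(u)-\tfrac12\big(g(u-\bar s)+g(u+\bar s)\big).
\end{align*}
Thus the entire problem reduces to upper bounding a second difference of the concave function $g$.

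I would then bound this second difference using three elementary consequences of concavity, each obtained by comparing $g$ to a chord joining two points of its graph. First, the chord through $(x-t,0)$ and $(x,\epsilon)$, extended to $u$, gives the \emph{sharp} ceiling $g(u)\le(1+\tau)\epsilon$. Second, the chord from $(u,g(u))$ to the right endpoint $(x+t,0)$ gives $g(u+\bar s)\ge\tfrac12 g(u)$, because $u+\bar s$ is the midpoint of $[u,x+t]$. Third, the chord from the left endpoint $(x-t,0)$ to $(u,g(u))$ gives $g(u-\bar s)\ge\tfrac{1+3\tau}{2(1+\tau)}\,g(u)$. Substituting the latter two lower bounds into the identity and simplifying collapses everything to $\Delta(f,u,\bar s)\le\tfrac{g(u)}{2(1+\tau)}$, whereupon the ceiling $g(u)\le(1+\tau)\epsilon$ yields $\Delta(f,u,\bar s)\le\epsilon/2<\epsilon$, which is all that is needed.

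The main obstacle is calibrating $\bar s$ and, relatedly, realizing that the crude global bound $g\le 2\Delta(f,I)=2\epsilon$ supplied by Lemma~\ref{lem:mainLemmaSimple} is too weak near $x$: for small $\tau$ it would leave a second difference as large as $\tfrac54\epsilon$, overshooting the target. The refinement that rescues the argument is the point-specific ceiling $g(u)\le(1+\tau)\epsilon$, which encodes that $u$ cannot sit near the peak of $g$ while $g(x)$ is pinned at $\epsilon$; with this in hand the three chord estimates combine cleanly and uniformly, with no case analysis. The remaining steps — checking $[u-\bar s,u+\bar s]\subseteq I$ and the final arithmetic simplification — are routine.
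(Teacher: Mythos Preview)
Your argument is correct and complete: the identity $\Delta(f,u,\bar s)=g(u)-\tfrac12\big(g(u-\bar s)+g(u+\bar s)\big)$ is valid, the three chord estimates follow from concavity of $g$ exactly as you say, and the arithmetic collapses to $\Delta(f,u,\bar s)\le g(u)/(2(1+\tau))\le\epsilon/2<\epsilon$. Since $[u-\bar s,u+\bar s]\subseteq I\subseteq[0,1]$ and $u\in\overline I(f,\epsilon)$, monotonicity of $\Delta(f,u,\cdot)$ then forces $\omega(f,u,\epsilon)>\bar s$.

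Your route is genuinely different from the paper's. The paper works at the \emph{larger} radius $(1-\tau)t$, shows $\widetilde\epsilon:=\Delta(f,u,(1-\tau)t)\le 2\epsilon$ via Lemma~\ref{secant_approx_grows} together with the global bound $g\le 2\Delta(f,I)$ of Lemma~\ref{lem:mainLemmaSimple}, identifies $\omega(f,u,\widetilde\epsilon)=(1-\tau)t$ through Lemma~\ref{eps_f_u_lem}, and then halves using the change-of-scale Lemma~\ref{unif_continuity_omega}, with a small case split on whether $\widetilde\epsilon\le\epsilon$. You instead go directly to the target radius $\bar s=\tfrac12(1-\tau)t$ and replace the coarse ceiling $g\le 2\epsilon$ by the sharper pointwise ceiling $g(u)\le(1+\tau)\epsilon$; this is exactly what lets the three chord bounds close without any case analysis and without invoking Lemma~\ref{unif_continuity_omega}. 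The trade-off: the paper's proof is more modular (it reuses the already-developed lemmas almost verbatim), while yours is more self-contained and elementary, and incidentally yields the slightly stronger intermediate bound $\Delta(f,u,\bar s)\le\epsilon/2$.
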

We shall now establish $\int_{m}^{m+t}\omega(f,u,\epsilon)^{-1}du \le 2\left(1 + \log\frac{t}{\omega_{0}}\right )$; the bound on the integral over $[m-t,m]$ is analogous. We can write $u \in [m,m+t]$ as $u = m + \tau t$, where $t=\omega(f,m,\epsilon)$ and $\tau \in [0,1]$. Now, set $\omega_{\min} = \min_{u \in [m,m+1]}(f,u,\epsilon)$. Using Lemma~\ref{lem:LB_omega_local}, we can integrate
\begin{eqnarray*}
\int_{m}^{m+t}\omega(f,u,\epsilon)^{-1}du &=& t \int_{ 0}^{1}\omega(f,m + \tau t,\epsilon)^{-1}d\tau \\
&\overset{(a)}{\le}& t\int_{0}^{1}\min\left\{\frac{2}{(1-\tau)t},\omega_{0}^{-1}\right\}d\tau\\
&=& \int_{0}^{1}\min\left\{2/\tau,t/\omega_{0}\right\}d\tau \\
&=& \int_{ 0}^{2\omega_{0}/t}t/\omega_{0} d\tau' + \int_{2\omega_{0}/t}^1 2/\tau' \ d\tau'\\
&=& 2 + 2 \log(1 \wedge t/(2\omega_{0}))~,
\end{eqnarray*}
where $(a)$ is precisely Lemma~\ref{lem:LB_omega_local}. Lastly, we can bound $ \log(1 \wedge t/(2\omega_{0})) \le \log(t / \omega_{0})$, since $\omega_{0} \le t = \omega(f,m,\epsilon)$. 

\subsection{Proof of Noisy Lower Bound, Theorem~\ref{thm:lowerbound_noisy}}\label{sec:proof_of_lowerbound_noisy} It suffices to prove the theorem with $\underN$ replaced by $\Npack$, since $\Npack(f,\cdot) \ge \Npack(f,\cdot)$; the case where $\Npack(f,2\epsilon)  = 0$ is addressed at the end of the section. Let $\Alg$ be any algorithm satisfying the correctness guarantee~\eqref{eq:correctness} for some $\epsilon > 0$ and $\delta \in (0,1/3)$. For $g \in \class$, let $\Pr_{g}$ denote the law under $g$ and $\Alg$. 
Consider the local alternative class $\mathcal{G}_{\fst,2\epsilon} \subset \class$ and intervals $\calI_{\fst,2\epsilon} := \{I_i\}_{i=1}^{\Npack(\fst,2\epsilon)}$, where $\Npack(\fst,\cdot)$ and $G_{\fst,\cdot}$ are defined Theorem~\ref{thm:packing} and~\eqref{Local_Alternatives_Eq}, respectively.~Let $\stopt_i$ denote the random variable corresponding to the number of times $\Alg$ samples in the interior of $\calI_i$, and observe that since the intervals in $i$ have disjoint interiors, the stopping time $\stopt$ of $\Alg$ satisfies $\sum_{i}\stopt_i \le \stopt$. 

We can reduce to a multiple hypothesis testing problem by recalling that, for $h \ne g \in \calG_{\fst,2\epsilon}$, $\|h -g \|_{\infty} \in [2\epsilon,2\cdot 2\epsilon]$. Hence, for $g \in \calG_{\fst,2\epsilon}$, the events $\calA_g:= \{\|\fhat - g\|_{\infty} < \epsilon\}$ are pairwise disjoint. Further, by~\eqref{eq:correctness}, one has $\Pr_{g}[\calA_g] \ge 1 -\delta,\forall g \in \calG_{2\epsilon,\fst}$. We also recall Birge's inequality:
\begin{lemma}[Birge's Inequality, Theorem 4.21 in~\cite{boucheron2013concentration}] Let $\Pr_0,\Pr_1,\dots,\Pr_n$ denote a family of probability distributions on a space $(\Omega,\mathcal{F})$, and let $\calA_0,\calA_1,\dots,\calA_n$ denote pairwise disjoint events. If $p:= \min_{i} \Pr_i(A_i) \ge 1/(n+1)$, then
\begin{eqnarray}
\frac{1}{n}\sum_{i=1}^n \KL(\Pr_i;\Pr_0) \ge \kl(p, \frac{1 - p}{n}),
\end{eqnarray}
where $\kl(a,b) := a \log \frac{a}{b} + (1-a) \log \frac{1-a}{1-b}$. 
\end{lemma}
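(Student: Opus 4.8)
The plan is to reduce the full statement to an inequality between binary Kullback--Leibler divergences, by combining three standard facts: the data-processing inequality for KL divergence, the joint convexity of $\kl$, and a monotonicity argument in which the hypothesis $p \ge 1/(n+1)$ plays the decisive role.

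First I would apply the data-processing inequality with the binary ``test'' $\1\{\cdot \in \calA_i\}$ for each $i \in \{1,\dots,n\}$. Pushing $\Pr_i$ and $\Pr_0$ forward under this map yields Bernoulli laws with parameters $\Pr_i(\calA_i)$ and $\Pr_0(\calA_i)$, and since KL divergence cannot increase under a measurable map,
\begin{align*}
\KL(\Pr_i;\Pr_0) \ge \kl\big(\Pr_i(\calA_i),\,\Pr_0(\calA_i)\big).
\end{align*}
Averaging over $i$ and invoking the joint convexity of $(a,b)\mapsto \kl(a,b)$ (which follows from the log-sum inequality) together with Jensen's inequality gives
\begin{align*}
\frac1n\sum_{i=1}^n \KL(\Pr_i;\Pr_0) \ge \kl(\bar p,\bar q), \quad \bar p := \frac1n\sum_{i=1}^n \Pr_i(\calA_i),\ \ \bar q := \frac1n\sum_{i=1}^n \Pr_0(\calA_i).
\end{align*}

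Next I would record the two crude bounds that control the averaged arguments. From $\Pr_i(\calA_i) \ge p$ for every $i$ we obtain $\bar p \ge p$. From the pairwise disjointness of the $\calA_i$ (in particular their disjointness from $\calA_0$) we get $\sum_{i=1}^n \Pr_0(\calA_i) = \Pr_0\big(\bigcup_{i=1}^n \calA_i\big) \le 1 - \Pr_0(\calA_0) \le 1 - p$, and hence $\bar q \le (1-p)/n$.

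Finally, the crux is to push $\kl(\bar p,\bar q)$ down to $\kl(p,(1-p)/n)$ using monotonicity of $\kl$ in each argument. Here the hypothesis $p \ge 1/(n+1)$ is exactly equivalent to $p \ge (1-p)/n$, which places us in the regime $\bar p \ge p \ge (1-p)/n \ge \bar q$. On this regime $\kl(\cdot,\bar q)$ is nondecreasing (its first-argument partial derivative is nonnegative once the first argument exceeds the second), so $\kl(\bar p,\bar q) \ge \kl(p,\bar q)$; and $\kl(p,\cdot)$ is nonincreasing on $(0,p)$, so $\kl(p,\bar q) \ge \kl(p,(1-p)/n)$. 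Chaining these yields the claim. I expect this last monotonicity bookkeeping to be the main obstacle: one must verify the signs of both partial derivatives of $\kl$ and confirm that $p \ge 1/(n+1)$ indeed forces $p \ge (1-p)/n$, so that both monotonicity steps act in the favorable direction; the data-processing and convexity steps are otherwise routine.
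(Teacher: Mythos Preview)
Your proof is correct. The paper itself does not prove this lemma; it is stated as a citation to Theorem~4.21 of \cite{boucheron2013concentration} and used as a black box in the lower-bound arguments of Section~\ref{sec:proof_of_lowerbound_noisy}. Your argument---data-processing inequality to reduce to binary $\kl$, joint convexity to pass to the averages $(\bar p,\bar q)$, and then the two monotonicity steps enabled by the equivalence $p \ge 1/(n+1) \iff p \ge (1-p)/n$---is precisely the standard proof found in that reference, so there is no meaningful difference in approach to report.
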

To apply Birge's inequality, we first compute $\KL(\Pr_{g},\Pr_{h})$ for any $g,h \in \calG_{2\epsilon,f}$ such that $g(x) = h(x)$ for all $x \in [0,1] \setminus \Int(I_i)$, where $I_i \in \calI_{\fst,2\epsilon}$. Let $\KL(g(x),h(x))$ denote the $\KL$ between $\mathcal{N}(g(x),\sigma^2)$ and $\mathcal{N}(h(x),\sigma^2)$, which is equal to $(g(x)-h(x))^2/2\sigma^2$. Then
\begin{align*}
\KL(\Pr_{g};\Pr_{h}) &= \Exp_{g}[\sum_{s=1}^T \KL(g(X_s),h(X_s))] \\
&\overset{(i)}{=} \Exp_{g}[\sum_{s: X_s \in \Int(I_i)} \KL(g(X_s),h(X_s))]\\
&= \frac{1}{2\sigma^2}\Exp_{g}[\sum_{s: X_s \in \Int(I_i)} (g(X_s)-h(X_s))^2 ]\\
&\le \frac{1}{2\sigma^2}\Exp_{g}[\stopt_i]\cdot \sup_{x \in I_i}(g(x) - h(x))^2\\
&\overset{(ii)}{\le} \Exp_{g}[\stopt_i] \cdot 8\epsilon^2 / \sigma^2,
\end{align*}
where $(i)$ uses the fact that $g$ and $h$ differ only on $I_i$, $(ii)$ uses the fact that, on $I_i$, one of $\{g,h\}$ is equal to $\fst$, one is equal to $\Sec[\fst,I_i](x)$, and thus by Lemma~\ref{lem:mainLemmaSimple}, we have that 
\begin{align*}
\left|\max_{x \in I^{\epsilon}_i} \Sec[\fst,I^{\epsilon}_i](x) - \fst(x) \right|\le 2\Delta(\fst,I_i) =  2\cdot 2\epsilon~.
\end{align*}

\textbf{First part of Theorem~\ref{thm:lowerbound_noisy}:} For each $i \in [\Npack(f,2\epsilon)]$, let $g^{(i)}$ denote the alternative corresponding to the vector $\bin^{(i)}_j := \I(i = j)$ in~\eqref{Local_Alternatives_Eq}. Hence, $\fst$ and $g^{(i)}$ differ only on $I_i$, and thus 
\begin{align*}
\KL(\Pr_{\fst},\Pr_{g^{(i)}}) \le   \Exp_{\fst}[\stopt_i] \cdot 8\epsilon^2 / \sigma^2.
\end{align*}
Birge's inequality with $n = 1$, $\Pr_1 = \Pr_{\fst}$ and $\Pr_0 = \Pr_{g^{(i)}}$, and $\calA_1 = A_{\fst}$ and $\calA_0 = \calA_{g^{(i)}}$  implies
\begin{align*}
\frac{8\epsilon^2}{\sigma^2} \cdot \Exp_{\fst}[\stopt_i] &\ge \Exp_{\fst}[\sum_{s=1}^T \KL(\fst(X_s),g^{(i)}(X_s))] \ge \kl(1-\delta,\delta)~.
\end{align*}
We rearrange to get $\Exp_{\fst}[\stopt_i]  \ge \sigma^2\kl(1-\delta,\delta)/8\epsilon^2$, and sum over $i \in [\Npack(\fst,\epsilon)]$ to obtain $\Exp_{\fst}[\stopt] \gtrsim \frac{\sigma^2}{\epsilon^2}\Npack(\fst,2\epsilon)\cdot \kl(1-\delta,\delta) \gtrsim \frac{\sigma^2}{\epsilon^2}\Npack(\fst,2\epsilon) \log(1/\delta)$, where the last inequality holds for $\delta \in (0,1/3)$.

\textbf{Second part of Theorem~\ref{thm:lowerbound_noisy}: }
Let $n = \Npack(\fst,2\epsilon)$, and recall the functions  $g_{\bin} \in \mathcal{G}_{\fst,2\epsilon}$ defined in Equation~\ref{Local_Alternatives_Eq}, where $\bin \in \{0,1\}^{n}$. It will be convenient to introduce the notation $\bin \oplus i \in \{0,1\}^{n}$ to denote the vector that agrees with $\bin$ except for flipping the $i$-th bit. Since $g_{\bin}$ and $g_{\bin \oplus i}$ differ on $I_i$ and nowhere else, we have $\calA_{g_{\bin}} \cap \calA_{g_{\bin \oplus i}} = \emptyset $. Again, by correctness, $\Pr_{g_\bin}[\calA_{g_{\bin}}] \ge 1 - \delta \ge 1/2 \ge 1/(n+1)$. Hence, applying Birge's inequality with $\Pr_{0} = \Pr_{g_\bin}$, $\Pr_{i} = \Pr_{g_{\bin \oplus i}}$, and the disjoint events $\calA_0 = \calA_{g_{\bin}}$ and $\calA_{i} = \calA_{g_{\bin\oplus i}}$, we have that for any $\bin \in \{0,1\}^{n}$,
\begin{align*}
\kl(1-\delta,\delta/n) \le \frac{1}{n}\sum_{i=1}^n \KL(\Pr_{i};\Pr_0) \le \frac{8\epsilon^2}{\sigma^2} \cdot \frac{1}{n}\sum_{i=1}^n \Exp_{g_{\bin\oplus i}}[\stopt_i],
\end{align*}
where the last inequality uses the $\KL$-computation above, and the fact that $g_{\bin \oplus i}$ and $g_{\bin}$ differ only on $I_i$. Hence, 
\begin{align*}
\frac{n\sigma^2\kl(1-\delta,\delta/n) }{8\epsilon^2} &\le \Exp_{\bin \unifsim \{0,1\}^n} \sum_{i=1}^n \Exp_{g_{\bin \oplus i}}[\stopt_i]  ~=~ \sum_{i=1}^n\left( \Exp_{\bin \unifsim \{0,1\}^n}  \Exp_{g_{\bin \oplus i}}[\stopt_i]\right)  \\
&= \sum_{i=1}^n \left(\Exp_{\bin \unifsim \{0,1\}^n}  \Exp_{g_{\bin}}[\stopt_i]\right)~=\Exp_{\bin \unifsim \{0,1\}^n}  \sum_{i=1}^n \Exp_{g_{\bin}}[\stopt_i]~.
\end{align*}
Bounding $\sum_{i=1}^n \Exp_{g_{\bin}}[\stopt_i]~ \le \Exp_{g_{\bin}}[\stopt]$ and $\kl(1-\delta,\delta/n) \gtrsim \log(n/\delta)$ concludes the proof.

\textbf{Lower bound when $\sigma$ is small:}
To conclude, we need to show that even when $\sigma$ is arbitrarilyy small (even zero), we still have the bounds $\Exp_{g}[\stopt] \gtrsim \Npack(\fst,2\epsilon)$ for every $g \in \calG(\fst,2\epsilon)$. To this end, fix $g\in \calG_{\fst,2\epsilon}$; we show $\Exp[\stopt_i] \ge 1/3$. Let $h$ be the alternative to $g$ in $\calG_{\fst,2\epsilon}$ which differs only on $I_i$, and let $\calB_i$ denote the event that $\Alg$ never samples in $I_i$. Note then that for any event $\calA$, $\Pr_{g}[\calA \cap \calB_i] = \Pr_{h}[\calA \cap \calB_i]$. Hence,
\begin{align*}
2\delta &\ge \Pr_g[\calA_g^c] + \Pr_h[\calA_h^c] ~\ge \Pr_g[\calA_g^c \cap \calB_i] + \Pr_h[\calA_h^c \cap \calB_i] \\
&\ge \Pr_{g}[(\calA_g^c \cup \calA_h^c)^c \cap \calB_i] =  \Pr_{g}[(\calA_g \cap \calA_h)^c \cap \calB_i] =  \Pr_g[\calB_i]~,
\end{align*}
where we used that $\calA_g \cap \calA_h = \emptyset$. Hence $\Exp[\stopt_i] \ge 1- \Pr_g[\calB_i] \ge 1 - 2\delta \ge 1/3$. 

\textbf{Lower bound when $\Npack(f,2\epsilon)  = 0$}. When $\Npack(f,2\epsilon) < 1$, we can consider the single alternative function $\widetilde{f}(x) = f(x) + 2\epsilon$. Since $|\widetilde{f}(x) - f(x)|  = 2\epsilon$ for all $x \in [0,1]$, the above arguments show that one needs at least $\gtrsim \max\{1, \frac{\sigma^2}{\epsilon^2} \log(1/\delta)\}$ samples to distinguish between $\widetilde{f}$ and $f$. 
\subsection{Proof of Proposition~\ref{prop:oracle}}
Recall that the construction of the packing in Theorem~\ref{thm:packing} in Section~\ref{sec:packing_proof} is constructed with $n = \Npack(f,\epsilon) + 1$ intervals of the form $\{[a_i,b_i]\}_{i = 1}^{n}$ with $\Delta(f,\epsilon) = \epsilon$. Define the interval $[a_0,b_0] = [0,a_1]$, and $[a_{n+1},b_{n+1}] = [b_n,1]$. The following fact is straightforward to verify using the construction in Section~\ref{sec:packing_proof}:
\begin{fact}\label{fact:oracle} The intervals $\{[a_i,b_i]\}_{i \in [n+1]\cup\{0\}}$ cover $[0,1]$, and satisfy $\Delta(f,[a_i,b_i]) \le \epsilon$.
\end{fact}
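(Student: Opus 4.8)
The plan is to verify the two assertions separately, reading both off the explicit construction of the packing intervals $\{[a_i,b_i]\}_{i=1}^{n}$ in Section~\ref{sec:packing_proof} together with the definitions of $\tleft$ and $\tright$. For the covering claim I would first recall from the construction that consecutive packing intervals abut, $b_i=a_{i+1}$, so that $\bigcup_{i=1}^{n}[a_i,b_i]=[a_1,b_n]$; prepending $[a_0,b_0]=[0,a_1]$ and appending $[a_{n+1},b_{n+1}]=[b_n,1]$ then produces $[0,a_1]\cup[a_1,b_n]\cup[b_n,1]=[0,1]$. This step is pure bookkeeping.

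For the bound $\Delta(f,[a_i,b_i])\le\epsilon$, the interior intervals $1\le i\le n$ are immediate: each $t_i$ is defined as a supremum over $\{t:\Delta(f,b_{i-1}+t,t)\le\epsilon\}$, so continuity gives $\Delta(f,[a_i,b_i])=\Delta(f,b_{i-1}+t_i,t_i)\le\epsilon$ (in fact with equality for all but possibly the terminal interval). The two end intervals are where the definitions of $\tleft,\tright$ enter, and I would handle them by converting each into the $\Delta(f,x,t)$ form so that its half-width matches the relevant approximation point. For $[a_0,b_0]=[0,a_1]$, its midpoint is $a_1/2$ and its half-width $a_1/2$, so $\Delta(f,[0,a_1])=\Delta(f,a_1/2,a_1/2)$; since the construction gives $a_1=2\tleft(f,\epsilon)$, this equals $\Delta(f,\tleft(f,\epsilon),\tleft(f,\epsilon))=\epsilon$ by the definition of $\tleft$. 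For $[a_{n+1},b_{n+1}]=[b_n,1]$, writing $s:=\tfrac{1-b_n}{2}$ the midpoint is $1-s$, so $\Delta(f,[b_n,1])=\Delta(f,1-s,s)$; the stopping condition $b_n\ge 1-2\tright(f,\epsilon)$ gives $s\le\tright(f,\epsilon)$, and the reflected monotonicity of $t\mapsto\Delta(f,1-t,t)$ from Lemma~\ref{monotone_lem} yields $\Delta(f,1-s,s)\le\Delta(f,1-\tright(f,\epsilon),\tright(f,\epsilon))=\epsilon$.

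I expect essentially no obstacle here, since the Fact is a transcription of the construction; the paper rightly calls it straightforward. The only genuine care is at the endpoints: one must (i) correctly rewrite each end interval in the $\Delta(f,x,t)$ notation so that its half-width is read against $\tleft$ or $\tright$, and (ii) invoke monotonicity of $\Delta$ in the half-width — Lemma~\ref{monotone_lem} directly for the left end and its reflection $\tilde f(x)=f(1-x)$ for the right end — so that the bound survives even if $a_1\le 2\tleft(f,\epsilon)$ or $1-b_n\le 2\tright(f,\epsilon)$ hold only with inequality rather than equality.
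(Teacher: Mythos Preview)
Your proposal is correct and is exactly the straightforward verification the paper alludes to; the paper does not write out a proof, so your argument is the natural one. One minor point: for the right end interval you invoke a ``reflected'' version of Lemma~\ref{monotone_lem}, but that lemma already states directly that $t\mapsto\Delta(f,x-t,t)$ is non-decreasing, so taking $x=1$ gives monotonicity of $t\mapsto\Delta(f,1-t,t)$ without any reflection; and the hedging about $a_1\le 2\tleft(f,\epsilon)$ possibly holding only with inequality is unnecessary since the construction in Section~\ref{sec:packing_proof} records $a_1=2\tleft(f,\epsilon)$ exactly.
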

Let $\calX := \{a_i,b_i, \frac{a_i+b_i}{2}\}_{i \in [n+1] \cup \{0}$. We collect $\max\{1, \frac{8\sigma^2}{\epsilon^2}\log (|\calX|/2\delta))\}$-samples at each $x \in \calX$, and define $\fhat(x)$ to denote the empirical mean of these samples. We then define our test function to be
\begin{align*}
\psi := \I(\sup_{x \in \calX} |\fhat(x) -\fst(x)| \ge \epsilon/2)~.
\end{align*}
It now suffices to show that $\Pr_{\fst}[\psi \ne 0] \le \delta/2$ and $\Pr_{f}[\psi \ne 1] \le \delta/2$ for $f \in \class$ satisfying $\|f - \fst\|_{\infty} \ge 10 \epsilon$.
By standard sub-gaussian concentration, 
\begin{align}\label{eq:oracle_whp}
\forall f, \Pr_{f}[\sup_{x \in \calX} |\fhat(x) -f(x)| \ge \epsilon/2] \le \delta/2,
\end{align}
which immediately implies that $\Pr_{\fst}[\psi \ne 0] \le \delta/2$. To prove the other direction, it suffices to prove the following lemma:
\begin{lemma}\label{lem:oracle_covering_lem} If $f \in \class$ satisfies $\|f - \fst\|_{\infty} \ge 9\epsilon$, then there exists an $x \in \calX$ such that $|f(x) - \fst(x)| >  \epsilon$.
\end{lemma}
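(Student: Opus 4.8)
The plan is to prove the contrapositive: assuming $|f(x) - \fst(x)| \le \epsilon$ at every grid point $x \in \calX$, I will show $\|f - \fst\|_{\infty} \le 7\epsilon$, which is in particular strictly less than $9\epsilon$. Fix an arbitrary $y \in [0,1]$. By Fact~\ref{fact:oracle} the covering intervals tile $[0,1]$, so $y$ lies in some $I = [a,b]$ with midpoint $m = (a+b)/2$; the endpoints $a,b$ and the midpoint $m$ all belong to $\calX$, and $\Delta(\fst, I) \le \epsilon$. By symmetry (replacing the right half-secant argument below with the left one) I may assume $y \in [a,m]$.

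For the upper bound $f(y) - \fst(y) \le 3\epsilon$ I would combine three ingredients: convexity of $f$ gives $f(y) \le \Sec[f,I](y)$; matching at the endpoints gives $\Sec[f,I](y) \le \Sec[\fst,I](y) + \epsilon$, since the secant is a convex combination of the endpoint values, each within $\epsilon$; and the near-linearity of $\fst$, via Lemma~\ref{lem:mainLemmaSimple} together with $\Delta(\fst,I)\le\epsilon$, gives $\Sec[\fst,I](y) \le \fst(y) + 2\epsilon$. Chaining these yields $f(y) \le \fst(y) + 3\epsilon$.

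The lower bound is the crux, and the naive endpoint-only argument fails because a convex function may dip arbitrarily far below its endpoint values on the half-interval $[a,m]$ (a downward ``V'' is convex). I would instead exploit that $y$ lies to the \emph{left} of $[m,b]$: convexity of $f$ implies that $f$ dominates the secant line $L$ through $(m,f(m))$ and $(b,f(b))$ when that line is extrapolated leftward, so $f(y) \ge L(y)$. I would then compare $L$ with the analogous secant $S := \Sec[\fst,[m,b]]$ of $\fst$: because $f$ and $\fst$ agree up to $\epsilon$ at $m$ and $b$, a short computation in which the geometric factor $(b-y)/(b-m) \le 2$ controls the leftward extrapolation gives $L(y) \ge S(y) - 3\epsilon$. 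Finally I would bound $S(y)$ from below by $\fst(y)$: writing $\ell := \Sec[\fst,I]$ for the full secant of $\fst$ on $I$, the near-linearity bound sandwiches $\fst$ between $\ell - 2\epsilon$ and $\ell$ throughout $I$, and computing $S - \ell$ explicitly (again the extrapolation factor is at most $2$) shows $S(y) \ge \ell(y) - 4\epsilon \ge \fst(y) - 4\epsilon$. Combining the three estimates gives $f(y) \ge \fst(y) - 7\epsilon$.

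Together the two directions yield $|f(y) - \fst(y)| \le 7\epsilon$ for every $y \in [0,1]$, hence $\|f-\fst\|_{\infty} \le 7\epsilon < 9\epsilon$, which contradicts the hypothesis and proves the lemma. The main obstacle is precisely the lower bound: convexity gives no pointwise lower bound from nearby sampled values, so the argument must extrapolate a secant from the far side of the interval and then convert it back into a statement about $\fst(y)$ using the controlled secant bias of $\fst$ on $I$ --- crucially leaning on $\Delta(\fst,I)\le\epsilon$ and never on $\Delta(f,I)$, since $f$ is an arbitrary convex alternative whose curvature we do not control.
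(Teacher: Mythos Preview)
Your proof is correct and in fact yields a slightly better constant ($7\epsilon$) than the paper's $9\epsilon$, but it takes a genuinely different route. The paper handles both directions at once via a single symmetric decomposition
\[
|f(z)-\fst(z)| \le |\Sec[f,I](z)-\Sec[\fst,I](z)| + 2\Delta(f,I) + 2\Delta(\fst,I),
\]
and then---contrary to your remark that $\Delta(f,I)$ is uncontrollable---rewrites $2\Delta(f,I)+2\Delta(\fst,I) \le 2|\Delta(f,I)-\Delta(\fst,I)| + 4\Delta(\fst,I)$ and bounds the difference $|\Delta(f,I)-\Delta(\fst,I)|$ using the fact that the \emph{midpoint} $m$ also lies in $\calX$. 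So the paper's trick is precisely to control $\Delta(f,I)$ indirectly through the three grid values at $a,m,b$. Your approach instead treats the two directions asymmetrically: the upper bound via the full secant (as in the paper), and the lower bound via leftward extrapolation of the half-interval secant through $(m,f(m)),(b,f(b))$, combined with the sandwich $\ell-2\epsilon\le\fst\le\ell$ on $I$. This is a legitimate and more geometric alternative; it avoids ever writing down $\Delta(f,I)$, at the cost of a case split and a longer lower-bound computation, while the paper's argument is shorter and symmetric but lands exactly on the stated constant $9\epsilon$.
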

Indeed, by Lemma~\ref{lem:oracle_covering_lem}, the triangle inequality, and~\eqref{eq:oracle_whp} we have
\begin{align*}
\Pr_{f}[\psi \ne 1] &= \Pr_{f}[\sup_{x \in \calX} |\fhat(x) -\fst(x)| \le \epsilon/2] \le \Pr_{f}[\sup_{x \in \calX} |\fhat(x) -f(x)| \le \epsilon/2] \le \delta/2~.
\end{align*}
\begin{proof}[Proof of Lemma~\ref{lem:oracle_covering_lem}] We prove the contrapositive. Suppose that $\sup_{x \in \calX} |f(x) - \fst(x)| < \epsilon$, and let $z \in [0,1]$. Then $z \in [a_i,b_i]$ for some $i \in \{0,\dots,n+1\}$. Let $I = [a_i,b_i]$ and $m_i = (b_i+a_i)/2$. We then have that 
\begin{align*}
|f(z)-\fst(z)| &\le |\Sec[f,I](z) - \Sec[\fst, I](z)| + |f(z) - \Sec[f,I](z)| + |\fst(z) - \Sec(\fst(z))|\\
&\overset{(i)}{\le} |\Sec[f,I](z) - \Sec[\fst, I](z)| + 2\Delta(f,I) + 2\Delta(\fst,I) \quad (\text{Lemma~\ref{lem:mainLemmaSimple}}) \\
&\le |\Sec[f,I](z) - \Sec[\fst, I](z)| + 2|\Delta(f,I) - \Delta(\fst,I)| + 4\Delta(\fst,I)~\\
&\le |\Sec[f,I](z) - \Sec[\fst, I](z)| + 2|\Delta(f,I) - \Delta(\fst,I)| + 4\epsilon \quad \text{(Fact~\ref{fact:oracle})}.
\end{align*}
Lastly, we observe that $|\Delta(f,I) - \Delta(\fst,I)| \le  |f(m_i) - \fst(m_i)| + |\Sec[f,I](m_i) - \Sec[\fst, I](m_i)| \le \epsilon  + |\Sec[f,I](m_i) - \Sec[\fst, I](m_i)|$. Moreover, for all $t \in I$ (in particular $t = z,m_i$), we have $|\Sec[f,I](t) - \Sec[\fst, I](t)| \le \max\{|f(a_i) - \fst(a_i)|,|f(b_i) - \fst(b_i)|\}$, which is $< \epsilon$ by assumption. Thus, we can bound $|\Sec[f,I](z) - \Sec[\fst, I](z)| + 2|\Delta(f,I) - \Delta(\fst,I)| < 5\epsilon$. Putting things together, $|f(z)-\fst(z)| < 9\epsilon$, as needed.
\end{proof}

\label{sec:packing_proof}

\section{Structural Results about Convex Functions\label{l_infty_lemma}}
In this section, we introduce structural tools regarding convex functions, and use these tools to concludes the proof of the technical lemmas used above.  The first guarantee is that the error of secant approximation is \emph{monotone} in the following sense:
\begin{lemma}\label{secant_approx_grows} For any $x\in [a,b]\subset [c,d]$, $\Sec[f,[a,b]](x) \le \Sec[f,[c,d]](x)$. 
\end{lemma}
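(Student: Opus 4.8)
The plan is to reduce the claim to the elementary fact, already recorded in the preliminaries, that a convex function never exceeds its own secant line, combined with the observation that an affine function dominating another at both endpoints of an interval dominates it throughout. Everything here is linear algebra over the reals once convexity has been invoked at the endpoints.

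First I would note that, viewed as functions of $x$, both $\Sec[f,[a,b]]$ and $\Sec[f,[c,d]]$ are affine. Evaluating the smaller secant at the two endpoints of $[a,b]$ gives its exact values there, namely $\Sec[f,[a,b]](a) = f(a)$ and $\Sec[f,[a,b]](b) = f(b)$. Since $[a,b] \subseteq [c,d]$, the points $a$ and $b$ both lie in $[c,d]$, so the secant-never-underestimates property for the convex function $f$ on the larger interval yields $\Sec[f,[c,d]](a) \ge f(a)$ and $\Sec[f,[c,d]](b) \ge f(b)$. Combining the two displays, the affine difference $g(x) := \Sec[f,[c,d]](x) - \Sec[f,[a,b]](x)$ satisfies $g(a) \ge 0$ and $g(b) \ge 0$.

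Finally I would conclude by writing an arbitrary $x \in [a,b]$ as a convex combination $x = (1-\lambda)a + \lambda b$ with $\lambda \in [0,1]$; since $g$ is affine, $g(x) = (1-\lambda)g(a) + \lambda g(b) \ge 0$, which is precisely the desired inequality $\Sec[f,[a,b]](x) \le \Sec[f,[c,d]](x)$ for all $x \in [a,b]$.

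There is essentially no obstacle here: the entire content is the pointwise bound $\Sec[f,I](z) \ge f(z)$ for convex $f$, applied at the two endpoints $z \in \{a,b\}$, and the remainder is the routine fact that an endpoint inequality between affine functions propagates across the whole interval. If one wished to avoid even invoking the endpoint convexity bound explicitly, the same conclusion follows directly from the three-chords (monotone-slopes) characterization of convexity, but reusing the already-stated secant bound keeps the argument shortest.
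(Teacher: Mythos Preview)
Your proof is correct and takes a genuinely different route from the paper's. You compare the two secants directly at the endpoints $a$ and $b$ of the smaller interval, using the already-recorded fact that $\Sec[f,[c,d]](z)\ge f(z)$ for $z\in[c,d]$, and then propagate the inequality across $[a,b]$ by affinity of the difference. The paper instead first reduces to the one-sided case ($a=c$ or $b=d$) and then shows that $t\mapsto \Sec[f,[a,a+t]](x)$ is non-decreasing, by writing it as $\varphi(t)/t$ for a convex $\varphi$ with $\varphi(0)=0$ and invoking the auxiliary Lemma~\ref{zero_cvx_lemma} (that such $\varphi$ satisfy $\varphi(ct)\le c\,\varphi(t)$).

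Your argument is shorter and requires no auxiliary lemma or case split; it uses only the single convexity inequality already stated in Section~\ref{sec:cvx_prelim}. The paper's route, while slightly more involved, yields as a byproduct the monotonicity of the one-sided secant map, and its auxiliary Lemma~\ref{zero_cvx_lemma} is reused elsewhere (in the proof of Lemma~\ref{unif_continuity_omega}), which may explain the choice of presentation.
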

Next, we state a generalization of Lemma~\ref{lem:mainLemmaSimple}:
\begin{lemma}\label{somelemma}
Let $f:[0,1]\to\R$ be convex. For any $x,z \in (t_1,t_2) \subset [0,1]$, one has that
\begin{align*}
\Sec[f,[t_1,t_2]](x) - f(x) \ge \{\Sec[f,[t_1,t_2]](z) - f(z)\}  \cdot \min \left\{\frac{t_2-x}{t_2-z} , \frac{x-t_1}{z-t_1}\right\}~.
\end{align*}
\end{lemma}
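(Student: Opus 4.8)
The plan is to work with the \emph{secant gap} $g(x) := \Sec[f,[t_1,t_2]](x) - f(x)$ on the interval $[t_1,t_2]$, and to exploit the fact that for convex $f$ this gap is a nonnegative concave function that vanishes at both endpoints. This reduces the statement to a routine chord estimate for concave functions, and the only real ``work'' is recognizing $g$ as the right object to study.

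First I would record three elementary properties of $g$. Since $\Sec[f,[t_1,t_2]]$ is affine and $-f$ is concave, $g$ is concave on $[t_1,t_2]$. By the observation (established when the secant approximation is introduced) that for convex functions the secant never underestimates $f$, we have $g \ge 0$ on $[t_1,t_2]$; in particular $g(z) \ge 0$, which will be used crucially at the end. Finally, because the secant interpolates $f$ at the two endpoints, $g(t_1) = g(t_2) = 0$.

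Next I would apply concavity against two chords anchored at $z$. On $[t_1,z]$, the chord joining $(t_1,g(t_1)) = (t_1,0)$ to $(z,g(z))$ lies below the graph of the concave function $g$, giving $g(x) \ge g(z)\,\tfrac{x-t_1}{z-t_1}$ for every $x \in [t_1,z]$. Symmetrically, on $[z,t_2]$ the chord joining $(z,g(z))$ to $(t_2,g(t_2)) = (t_2,0)$ yields $g(x) \ge g(z)\,\tfrac{t_2-x}{t_2-z}$ for every $x \in [z,t_2]$.

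To conclude I would combine the two one-sided estimates. If $x \le z$, then $g(x) \ge g(z)\,\tfrac{x-t_1}{z-t_1} \ge g(z)\min\bigl\{\tfrac{t_2-x}{t_2-z},\tfrac{x-t_1}{z-t_1}\bigr\}$, where the last step uses $g(z)\ge 0$ together with the trivial fact that a minimum is at most either of its arguments; the case $x \ge z$ is handled identically using the other chord. This is exactly the claimed inequality. The main obstacle is therefore not technical but conceptual: once one passes to the concave gap $g$, the result is immediate, and the only points requiring care are invoking $g(z)\ge 0$ to pass from a specific branch down to the minimum, and noting that the minimum automatically selects whichever linear interpolant is active on the relevant side of $z$.
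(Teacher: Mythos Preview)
Your proof is correct and follows essentially the same route as the paper: both subtract off the secant (you by defining the gap $g$, the paper by assuming $f(t_1)=f(t_2)=0$ and $t_1=0,\;t_2=1$) and then extract the two one-sided linear bounds from convexity, case-splitting on $x\le z$ versus $x\ge z$. The only cosmetic difference is that the paper derives those bounds via a subgradient inequality, whereas you invoke the chord-above-graph property of the concave gap $g$ directly; these are equivalent, and your packaging is arguably a bit cleaner.
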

We observe that Lemma~\ref{lem:mainLemmaSimple} follows as a corollary by choosing $t_1 = \xl$, $t_2 = \xr$, and $x = \xm$, and considering the maximum over $z \in [\xl,\xr]$. 

The remainder of the section is organized as follows. In Section~\ref{sec:lem:someLemma}, we prove Lemma~\ref{somelemma} and in Section~\ref{sec:ModulusUB}, we prove Lemma~\ref{ModulusUB}. We then introduce further technical lemmas in Section~\ref{sec:addition_lemmas}, which we use to prove Proposition~\ref{LambdaScaleProp} in Section~\ref{sec:LambdaScalePropProof}, and Lemma~\ref{lem:LB_omega_local} in~\ref{sec:lem:LB_omega_local}. The proof of Lemma~\ref{secant_approx_grows} is given in Section~\ref{sec:secant_approx_grows}.

\subsection{Proof of Lemma~\ref{somelemma}\label{sec:lem:someLemma}}  Note that adding an affine function to $f$ does not change the value of $\Sec[f,[t_1,t_2]](x) - f(x)$. 
Thus, we may assume that $f(t_1) = f(t_2) = 0$.
Without loss of generality, we may also take $t_1 = 0$ and $t_2 = 1$. With these simplifications, $\Sec[f,[0,1]]  = \Sec[f,[t_1,t_2]] = 0$, and hence our goal is show that
	\begin{align*}
	-f(x) \ge \begin{cases} \frac{x}{z} \cdot (-f(z)) & 0 \le x \le z \le 1 \\
	\frac{1-x}{1-z}\cdot (-f(z)) & 0 \le z \le x \le 1
	\end{cases}~,
	\end{align*}
	or equivalently, that
	\begin{align*}
	f(z) \ge \begin{cases} \frac{z}{x} \cdot f(x) & 0 \le x \le z \le 1 \\
	\frac{1-z}{1-x} \cdot f(x) & 0 \le z \le x \le 1 
	\end{cases}~.
	\end{align*}
	To this end, fix a subgradient $g \in \partial f(x)$ and some $z \ge x$. By the definition of the subgradient, it holds that
	\begin{align*}
	f(x) + g(t-x) \le f(t) \quad \forall t \in [0,1]~.
	\end{align*}
	By choosing $t = 0$ and $t = z$ in the above display, we verify that (a) $f(x) + g(0-x) \le 0$, and (b) $f(x) + g(z - x) \le f(z)~$. Combining $(a)$ and $(b)$, and noting that $z \ge x$, we find
	\begin{equation}
	\begin{aligned}\label{zgex_ineq}
	f(z) &\overset{(b)}{\ge} f(x) + g(z-x) ~\overset{(a)}{\ge} f(x) + \frac{1}{x}f(x)(x-z) \\
	&= f(x)\left( 1 + \frac{z-x}{x}\right) \quad = \frac{x}{z} \cdot f(x)~,
	\end{aligned}
	\end{equation}
	as needed. On the other hand, suppose $x \ge z$. Noting that the function  $\widetilde{f}(t) = f(1-t)$ is convex and satisfies $\widetilde{f}(0) =  \widetilde{f}(1) = 0$, we have
	\begin{align*}
	 f(z) = \widetilde{f}(1-z) \overset{\eqref{zgex_ineq}}{\ge} \left(\frac{(1-z)}{(1-x)}\right)\widetilde{f}(1-x) = \left(\frac{(1-z)}{(1-x)}\right)f(x)~.
	\end{align*}

\subsection{Proof of Lemma~\ref{ModulusUB}\label{sec:ModulusUB}}
For any $\tau = [-t,t]$, we have
\begin{align}
\quad&\Sec[f,[x-t,x+t]](x+\tau) - f(x+\tau) \nonumber \\
\overset{(\text{Lemma~\ref{somelemma}})}{\ge}& \min\{\frac{t-\tau}{t},\frac{t + \tau}{t}\}\cdot( \Sec[f,[x-t,x+t]](x) - f(x)) \nonumber\\
=& \min\{1 - \frac{\tau}{t},1 + \frac{\tau}{t}\}\cdot \Delta(f,x,t) ~=~  (1 - \frac{|\tau|}{t})\cdot \epsilon \label{eq:approximation_perturbed_LB}~.
\end{align}
First suppose that $\tau \geq 0$, then
\begin{align*}
\Delta(f,x+\tau,t+\tau) &= \Sec[f,[x-t,x+t+2\tau]](x+\tau) - f(x+\tau) \\
&\overset{(\text{Lemma}~\ref{secant_approx_grows})}{\ge} \Sec[f,[x-t,x+t]](x+\tau) - f(x+\tau) \\
&\overset{\eqref{eq:approximation_perturbed_LB}}{\ge} \epsilon (1 - \frac{|\tau|}{t}).
\end{align*}
On the other hand, if $\tau \leq 0$, then similarly we have
\begin{align*}
\Delta(f,x+\tau,t+|\tau|) = \Sec[f,[x-t+2\tau,x+t]](x+\tau) - f(x+\tau) \geq  \epsilon (1 - \frac{|\tau|}{t}). 
\end{align*}
By definition, and combining the above pieces, we get
\begin{align*}
\omega(f,x+\tau,\epsilon(1 - \frac{|\tau|}{t})) = \inf\{s : \Delta(f,x+\tau,s) \geq \epsilon(1 - \frac{|\tau|}{t}) \} \leq t + |\tau|.
\end{align*}

\subsection{Additional Structural Lemmas \label{sec:addition_lemmas}}
Before continuing, we state three additional structural results that we shall need throughout. First, we observe that the following secant approximation functions are monotone:
\begin{lemma}\label{monotone_lem}
	For any convex function $f:[0,1] \to \R$, the functions $t\mapsto\Delta(f,x,t)$, $t\mapsto\Delta(f,x+t,t)$ and $t\mapsto\Delta(f,x-t,t)$ (defined on the appropriate domains) are all non-decreasing in $t \ge 0$. 
\end{lemma}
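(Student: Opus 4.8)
The plan is to reduce all three monotonicity claims to one elementary fact about convex functions: writing $\slope[a,b] := (f(b)-f(a))/(b-a)$ for the chord slope, if $a_1 \le a_2$ and $b_1 \le b_2$ with $a_i < b_i$, then $\slope[a_1,b_1] \le \slope[a_2,b_2]$; that is, a chord's slope never decreases as we push either endpoint to the right. This is just the standard three-slopes inequality (a restatement of the subgradient bounds already used in the proof of Lemma~\ref{somelemma}), applied by sliding one endpoint at a time. Each of the three functions will have its increment expressed as a nonnegative multiple of a difference of two such chord slopes.

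For the symmetric modulus $g(t) := \Delta(f,x,t) = \Sec[f,[x-t,x+t]](x) - f(x)$ the midpoint $x$ is fixed and the interval only grows, so for $s \le t$ we have $[x-s,x+s]\subseteq[x-t,x+t]$ with $x$ interior to both; Lemma~\ref{secant_approx_grows} then immediately gives $\Sec[f,[x-s,x+s]](x) \le \Sec[f,[x-t,x+t]](x)$, and subtracting $f(x)$ yields $g(s)\le g(t)$. Equivalently, $g(t)-g(s) = \tfrac{t-s}{2}\bigl(\slope[x+s,x+t]-\slope[x-t,x-s]\bigr) \ge 0$ by chord monotonicity, since $[x-t,x-s]$ lies entirely to the left of $[x+s,x+t]$.

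The other two cases are the substantive ones, because the evaluation point now moves with $t$, so Lemma~\ref{secant_approx_grows} no longer applies directly (indeed a naive "grow the interval, then slide the midpoint" decomposition fails, as the secant-error profile on a fixed interval need not be monotone on its left half). I would instead compute the increment head-on. For $h(t) := \Delta(f,x+t,t) = \tfrac{f(x)+f(x+2t)}{2}-f(x+t)$ and $s\le t$,
\[
h(t)-h(s) = \tfrac{1}{2}\bigl(f(x+2t)-f(x+2s)\bigr) - \bigl(f(x+t)-f(x+s)\bigr) = (t-s)\bigl(\slope[x+2s,x+2t]-\slope[x+s,x+t]\bigr),
\]
which is nonnegative because $x+2s\ge x+s$ and $x+2t\ge x+t$ force $\slope[x+2s,x+2t]\ge\slope[x+s,x+t]$. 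The third function $k(t):=\Delta(f,x-t,t)=\tfrac{f(x-2t)+f(x)}{2}-f(x-t)$ is the mirror image, giving $k(t)-k(s)=(t-s)\bigl(\slope[x-t,x-s]-\slope[x-2t,x-2s]\bigr)\ge 0$ since $[x-t,x-s]$ dominates $[x-2t,x-2s]$ endpointwise.

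The only delicate point is the chaining in the chord-monotonicity step: to pass from $\slope[x+s,x+t]$ to $\slope[x+2s,x+2t]$ I would first slide the right endpoint rightward (valid since $x+s<x+t\le x+2t$) and then the left endpoint (valid since $x+s\le x+2s<x+2t$), using $s<t$ to keep every intermediate interval nondegenerate and treating $s=t$ as trivial. I expect this endpoint-ordering bookkeeping, together with the observation that Lemma~\ref{secant_approx_grows} genuinely does not suffice for $h$ and $k$, to be the only real subtlety; everything else is a direct computation.
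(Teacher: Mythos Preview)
Your proposal is correct. For the symmetric modulus $t\mapsto\Delta(f,x,t)$ you use Lemma~\ref{secant_approx_grows} exactly as the paper does. For $t\mapsto\Delta(f,x+t,t)$ and its mirror, the paper instead reduces to the differentiable case by density of $C^1$ convex functions in $\class$ and then computes $\frac{d}{dt}\Delta(f,x+t,t)=f'(x+2t)-f'(x+t)\ge 0$. Your finite-difference identity $h(t)-h(s)=(t-s)\bigl(\slope[x+2s,x+2t]-\slope[x+s,x+t]\bigr)$ is the discrete analogue of that derivative computation, and the chord-slope monotonicity you invoke is precisely the integrated form of $f'$ being non-decreasing. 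So the underlying idea is the same; the main difference is that your argument is self-contained and avoids the density step, at the cost of the endpoint-ordering bookkeeping you already flagged. Both approaches are short; yours is marginally more elementary.
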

\begin{proof}
The mononoticity of $t \mapsto \Delta(f,x,t)$ is a consequence of the monotonicity of secant approximations, Lemma~\ref{secant_approx_grows}. 
Here, we will prove that $t\mapsto\Delta(f,x+t,t)$ is non-decreasing; that $t\mapsto\Delta(f,x-t,t)$ is non-decreasing will follow by a similar argument. 
Write $\Delta(f,x+t,t) = \frac{f(x)+f(x+2t)}{2} - f(x+t)$. Since continuously differentiable convex functions are dense in class, we may assume $f'$ exists. Thus, $\frac{d}{dt} \Delta(f,x+t,t) = 2\cdot \frac{1}{2}f'(x+2t) - f'(x+t) = f'(x+2t) - f'(x+t)$, which is nonnegative by convexity.
\end{proof}

The next result states that the continuity modulus can be regarded as the inverse function of the secant error function $\Delta$, in the following sense:
\begin{lemma}\label{eps_f_u_lem} For any $\epsilon > 0$, and $x \in [\tleft(f,\epsilon),1 - \tright(f,\epsilon)]$, 
$\omega(f,x,\epsilon)$ is equal to the unique $t \in [0,x\wedge(1-x)]$ satisfying $\Delta(f,x,t) = \epsilon$. 
\end{lemma}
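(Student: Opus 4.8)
The plan is to fix $x$ and analyze the single-variable function $g(t) := \Delta(f,x,t) = \tfrac12\bigl(f(x-t)+f(x+t)\bigr) - f(x)$ on the interval $[0,\,x\wedge(1-x)]$, showing that the equation $g(t)=\epsilon$ has exactly one solution and that this solution coincides with $\omega(f,x,\epsilon)$. First I would record the elementary structural properties of $g$: one has $g(0)=0$; $g$ is continuous, since $f$ is continuous; and $g$ is non-decreasing by Lemma~\ref{monotone_lem}. These facts already identify $\omega(f,x,\epsilon) = \min\{t : g(t)\ge\epsilon\}$ as the left endpoint of the (closed) up-set $\{t : g(t)\ge\epsilon\}$, so the whole statement reduces to showing that this up-set is nonempty and that $g$ crosses the level $\epsilon$ exactly once.

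The second step is to verify that $g$ attains the value $\epsilon$ somewhere in its domain, which is precisely where the hypothesis $x\in[\tleft(f,\epsilon),\,1-\tright(f,\epsilon)]$ is used. Assuming without loss of generality that $x\le 1/2$ (the case $x>1/2$ is symmetric, with $\tright$ replacing $\tleft$), the right endpoint of the domain is $t=x$, and $g(x)=\Delta(f,x,x)$ is of the form $\Delta(f,t,t)$ evaluated at $t=x$. Since $t\mapsto\Delta(f,t,t)$ is non-decreasing by Lemma~\ref{monotone_lem} (applied with base point $0$) and continuous, the set $\{t\le 1/2:\Delta(f,t,t)\ge\epsilon\}$ is a closed up-set whose infimum $\tleft(f,\epsilon)$ lies in it; hence $\Delta(f,x,x)\ge\Delta(f,\tleft(f,\epsilon),\tleft(f,\epsilon))\ge\epsilon$ because $x\ge\tleft(f,\epsilon)$. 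Together with $g(0)=0<\epsilon$ and continuity, the intermediate value theorem yields $t^*\in(0,\,x\wedge(1-x)]$ with $g(t^*)=\epsilon$; minimality plus continuity then force $g(\omega(f,x,\epsilon))=\epsilon$ and $\omega(f,x,\epsilon)=t^*$.

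The remaining and most delicate step is uniqueness, equivalently strict monotonicity of $g$ wherever it is positive. Suppose $g(t_1)=g(t_2)=\epsilon>0$ with $t_1<t_2$; since $g$ is non-decreasing it is identically $\epsilon$ on $[t_1,t_2]$. Let $\ell_1 = \Sec[f,[x-t_1,x+t_1]]$ and $\ell=\Sec[f,[x-t_2,x+t_2]]$ denote the two (affine) chords. By Lemma~\ref{secant_approx_grows}, $\ell_1\le\ell$ on $[x-t_1,x+t_1]$, while $\ell_1(x)-f(x)=\Delta(f,x,t_1)=\epsilon=\Delta(f,x,t_2)=\ell(x)-f(x)$ gives $\ell_1(x)=\ell(x)$; an affine function attaining its maximum at the interior point $x$ must be constant, so $\ell_1\equiv\ell$. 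Consequently $f(x\pm t_1)=\ell_1(x\pm t_1)=\ell(x\pm t_1)$, i.e. $f$ touches its chord $\ell$ at the interior point $x-t_1$ of $[x-t_2,x+t_2]$. The key rigidity fact I would then invoke is that a convex function which meets one of its chords at an interior point coincides with that chord on the entire interval (proved by writing the contact point as a convex combination and using $f\le\ell$ there), giving $f\equiv\ell$ on $[x-t_2,x+t_2]$ and hence $g(t_2)=\ell(x)-f(x)=0$, contradicting $\epsilon>0$. This establishes uniqueness and completes the identification $\omega(f,x,\epsilon)=t^*$. I expect this rigidity argument to be the main obstacle, since the monotonicity lemmas supply only a non-strict inequality and the strictness must be extracted from the fact that $\Delta(f,x,t)>0$ certifies that $f$ is not affine on $[x-t,x+t]$.
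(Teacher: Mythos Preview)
Your argument is correct. The existence half---using monotonicity of $t\mapsto\Delta(f,t,t)$ to get $\Delta(f,x,x)\ge\Delta(f,\tleft,\tleft)\ge\epsilon$, then the intermediate value theorem---is exactly what the paper does.

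For uniqueness, however, you take a different and longer route than the paper. The paper observes directly that $g(t)=\tfrac12\bigl(f(x-t)+f(x+t)\bigr)-f(x)$ is \emph{convex} in $t$ (it is a sum of two convex functions of $t$), with $g(0)=0$. By Lemma~\ref{zero_cvx_lemma} this forces $g(t)/t$ to be non-decreasing, so once $g(t_0)=\epsilon>0$ one has $g(t)\ge (t/t_0)\epsilon>\epsilon$ for all $t>t_0$; uniqueness is then a one-line consequence. Your approach instead avoids noticing the convexity of $g$ and works geometrically: you compare the two chords via Lemma~\ref{secant_approx_grows}, deduce they coincide because their affine difference vanishes at an interior point, and then invoke the rigidity fact that a convex function meeting its chord at an interior point is affine on the whole interval. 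This is sound and self-contained, but it is more work than simply exploiting the convexity of $g$, which is already implicit in the paper's toolkit (cf.\ the proof of Lemma~\ref{unif_continuity_omega}, where the same observation about $g$ is used). The payoff of your method is that it never leaves the secant-comparison framework; the cost is a paragraph where one line suffices.
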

	\begin{proof} We may assume without loss of generality that $x \in [\tleft(f,\epsilon),1/2]$, and that $\min\left\{t \in \left[0,\min\left\{x,1-x\right\}\right]: \Delta\left(f,x, t\right) \ge   \epsilon   \right\}$. We first show that $\Delta(f,x,\omega(f,x,\epsilon)) = 0$. Suppose otherwise. Then by continuity of $\Delta(f,x,\cdot)$, it must be the case that $x = \omega(f,x,\epsilon)$ and $\Delta(f,x,x) < \epsilon$. But a contradiction arises from  $\epsilon~\le~\Delta(f,\tleft(\epsilon),\tleft(\epsilon))$ $\le\Delta(f,x,x) $, where the first (resp. second) inequality uses continuity (resp. montonicity,  Lemma~\ref{monotone_lem}) of  $t\mapsto \Delta(f,t,t)$.

	Now we show that $\omega(f,x,\epsilon)$ is equal to the unique $t \in [0,x\wedge(1-x)]$ satisfying $\Delta(f,x,t) = \epsilon$. Since $t \mapsto \Delta(f,x,t)$ is monotone, it suffices to show that $t \mapsto \Delta(f,x,t)$ is strictly increasing from the left at $t = \omega(f,x,\epsilon)$. This is follows because $t \mapsto \Delta(f,x,t)$ is convex, strictly positive at $t = \omega(f,x,\epsilon)$, and $\Delta(f,x,0)$. 
	\end{proof}
	Lemmas~\ref{monotone_lem} and~\ref{eps_f_u_lem} are used in Section~\ref{sec:packing_proof}, which proves the packing given in Theorem~\ref{thm:packing}.
	Next, we have a  `change-of-scale' lemma, whose proof is at the heart of Proposition~\ref{LambdaScaleProp} and Lemma~\ref{lem:LB_omega_local}:
\begin{lemma}\label{unif_continuity_omega} 
	For any $0 < \epsilon' \le \epsilon$ and $x \in [\tleft(f,\epsilon),1-\tright(f,\epsilon)]$, we have
	\begin{align*}
	\frac{\epsilon'}{\epsilon}\omega(f,x,\epsilon) \le \omega(f,x,\epsilon') \le \omega(f,x,\epsilon)~.
	\end{align*}
\end{lemma}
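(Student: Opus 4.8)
The plan is to reduce everything to two elementary facts about the scalar function $g(t) := \Delta(f,x,t) = \tfrac{1}{2}\bigl(f(x-t)+f(x+t)\bigr) - f(x)$: that it is \emph{convex} in $t$ and satisfies $g(0)=0$. Convexity holds because $t \mapsto f(x\pm t)$ are convex (compositions of the convex $f$ with affine maps), so their average minus the constant $f(x)$ is convex; this is the same convexity already invoked in the proof of Lemma~\ref{eps_f_u_lem}. And $g(0) = f(x)-f(x) = 0$. Before using these, I would record that shrinking the target bias moves the boundary points inward: since $t\mapsto\Delta(f,t,t)$ is non-decreasing (Lemma~\ref{monotone_lem}), we have $\tleft(f,\epsilon') \le \tleft(f,\epsilon)$ and $\tright(f,\epsilon') \le \tright(f,\epsilon)$ for $\epsilon' \le \epsilon$. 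Hence any $x \in [\tleft(f,\epsilon),1-\tright(f,\epsilon)]$ also lies in $[\tleft(f,\epsilon'),1-\tright(f,\epsilon')]$, so Lemma~\ref{eps_f_u_lem} applies at \emph{both} scales. Writing $t := \omega(f,x,\epsilon)$ and $t' := \omega(f,x,\epsilon')$, Lemma~\ref{eps_f_u_lem} then supplies the exact equalities $g(t) = \epsilon$ and $g(t') = \epsilon'$.

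The right-hand inequality $\omega(f,x,\epsilon') \le \omega(f,x,\epsilon)$, i.e.\ $t' \le t$, is immediate from monotonicity of $g$: since $g$ is non-decreasing and $g(t') = \epsilon' \le \epsilon = g(t)$, the least threshold crossing $\epsilon'$ occurs no later than the one crossing $\epsilon$. (Equivalently, this is just monotonicity of $\omega$ in its third argument.)

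For the left-hand inequality I would invoke the standard consequence of convexity that, whenever $g(0)=0$, one has $g(\lambda t) \le \lambda\, g(t)$ for all $\lambda \in [0,1]$. Taking $\lambda = t'/t \in (0,1]$ — which is a valid choice precisely because $t' \le t$ was just established — gives $\epsilon' = g(t') \le \tfrac{t'}{t}\, g(t) = \tfrac{t'}{t}\,\epsilon$. Rearranging yields $t' \ge \tfrac{\epsilon'}{\epsilon}\, t$, which is exactly $\omega(f,x,\epsilon') \ge \tfrac{\epsilon'}{\epsilon}\,\omega(f,x,\epsilon)$.

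I do not anticipate a serious obstacle: the content is the single convexity estimate $g(\lambda t) \le \lambda g(t)$, which converts the additive rescaling of the bias $\epsilon \mapsto \epsilon'$ into a multiplicative bound on $\omega$. The only points requiring care are (i) verifying that Lemma~\ref{eps_f_u_lem} is genuinely applicable at the smaller scale $\epsilon'$, which the monotonicity-of-endpoints observation handles, and (ii) keeping track that $g(0)=0$ is what makes the ratio $g(t)/t$ non-decreasing and hence drives the left inequality.
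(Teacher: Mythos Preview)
Your proposal is correct and follows essentially the same route as the paper: both define the scalar map $g(t)=\Delta(f,x,t)$, note it is convex with $g(0)=0$, invoke the inequality $g(\lambda t)\le \lambda\,g(t)$ (the paper packages this as Lemma~\ref{zero_cvx_lemma}), and combine it with $g(\omega(f,x,\epsilon))=\epsilon$, $g(\omega(f,x,\epsilon'))=\epsilon'$ from Lemma~\ref{eps_f_u_lem} to obtain both inequalities. Your explicit check that $x\in[\tleft(f,\epsilon'),1-\tright(f,\epsilon')]$ so that Lemma~\ref{eps_f_u_lem} applies at the smaller scale is a point the paper leaves implicit, but otherwise the arguments coincide.
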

\begin{proof} Fix $0 < \epsilon' \le \epsilon$, $x \in [\tleft(f,\epsilon),1 - \tright(f,\epsilon)]$, which implies that $\omega(f,x,\epsilon) \le x \wedge (1-x)$. Let $\phi(t) :=  \Delta(f,x,t) = (f(x-t)+f(x+t))/2 -f(x)$, which is defined for $t \in [0, x \wedge (1-x)]$, convex, and satisfies $\phi(0)=0$. A standard computation (Lemma~\ref{zero_cvx_lemma}) shows that, for any $t' \le t$, one has $\phi(t') \le \frac{t'}{t} \phi(t)$. Hence, 
\begin{align*}
\epsilon' \overset{(i.a)}{=} \phi(\omega(f,x,\epsilon')) \overset{(ii)}{\ge} \frac{\omega(f,x,\epsilon')}{\omega(f,x,\epsilon)} \phi(\omega(f,x,\epsilon)) \overset{(i.a)}{=} \epsilon\cdot \frac{\omega(f,x,\epsilon')}{\omega(f,x,\epsilon)}~,
\end{align*}
where $(i.a)$ and $(i.b)$ are by Lemma~\ref{eps_f_u_lem}, and $(ii)$ uses the fact that $\omega(f,x,\epsilon') \le \omega(f,x,\epsilon)$ (this is immediate from the definition of $\omega$).
\end{proof}
\subsection{Proof of Proposition~\ref{LambdaScaleProp}\label{sec:LambdaScalePropProof}}
	Let $c \in (0,1)$, and observe that $\tleft(f,c\epsilon) \le \tleft(f,\epsilon)$, and similarly for $\tright$. Thus, 
	\begin{align*}
	& \Lamavg(f, c\epsilon) = \int_{\tleft(f,c\epsilon)}^{1-\tright(f,c\epsilon)} \omega^{-1}(f,x,c\epsilon)dx\\
	&= \int_{\tleft(f,c\epsilon)}^{\tleft(f,\epsilon)} \omega^{-1}(f,x,c\epsilon)dx \\
	&+ \int_{\tleft(f,\epsilon)}^{1-\tright(f,\epsilon)} \omega^{-1}(f,x,c\epsilon)dx + \int_{1-\tright(f,\epsilon)}^{1-\tright(f,c\epsilon)} \omega^{-1}(f,x,c\epsilon)dx~.
	\end{align*}
	By Lemma~\ref{unif_continuity_omega}, we have
	\begin{align*}
	[c,1]\int_{\tleft(f,\epsilon)}^{1-\tright(f,\epsilon)} \omega^{-1}(f,x,c\epsilon)dx \ni \int_{\tleft(f,\epsilon)}^{1-\tright(f,\epsilon)} \omega^{-1}(f,x,\epsilon)dx = \Lamavg(f, \epsilon)~.
	\end{align*}
	Next, let $x \in [\tleft(f,c\epsilon),\tleft(f,\epsilon)]$; we show that $\omega(f,x,c\epsilon) \ge cx$. Indeed, let  $\epsilon^*:= \Delta(f,x,x)$. 
	Since  $\tleft(f,\epsilon^*) \le x \le \tleft(f,\epsilon)$ and $\tleft(f,\cdot)$ is monotone, we have $\epsilon^* \le \epsilon$.
	By definition, both $\omega(f,x,c\epsilon) \le x$  and $\omega(f,x,\epsilon^*) \le x$. Hence, Lemma~\ref{unif_continuity_omega} and the bound $\epsilon_* \le \epsilon$ imply
	\begin{align*}
	\omega(f,x,c\epsilon) \ge \frac{c\epsilon}{\epsilon^*} \omega^{-1}(f,x,\epsilon^*) \ge x\cdot\frac{c\epsilon}{\epsilon^*} \ge cx,~
	\end{align*}
	as needed. Hence,
	\begin{eqnarray*}
	 \int_{\tleft(f,c\epsilon)}^{\tleft(f,\epsilon)} \omega^{-1}(f,x,c\epsilon)dx \in [1,\frac{1}{c}]\int_{\tleft(f,c\epsilon)}^{\tleft(f,\epsilon)}\frac{dx}{x} = [1,\frac{1}{c}] \cdot \log\frac{\tleft(f,\epsilon)}{\tleft(f,c\epsilon)}~.
	\end{eqnarray*}
	The case $x \in [1-\tright(f,\epsilon),1-\tright(f,c\epsilon)]$ similarly yields
	\begin{eqnarray*}
	\int_{1-\tright(f,\epsilon)}^{1-\tright(f,c\epsilon)} \omega^{-1}(f,x,c\epsilon)dx \in [1,\frac{1}{c}]\log\frac{\tright(f,\epsilon)}{\tright(f,c\epsilon)}~.
	\end{eqnarray*}
	Putting everything together,
	\begin{eqnarray*}
	\Lambda(f, c\epsilon) \in [1,\frac{1}{c}]\left\{\Lambda(f,\lambda\epsilon) + \log \frac{\tright(f,\epsilon)\tleft(f,\epsilon)}{\tright(f,c\epsilon)\tleft(f,c\epsilon)}) \right\}.
	\end{eqnarray*}

\subsection{Proof of Lemma~\ref{lem:LB_omega_local}\label{sec:lem:LB_omega_local}}

We may assume without loss of generality that $\tau \in [0,1]$. 
For ease of notation, set 
\begin{align*}
t &= \omega(f,x,\epsilon) \qquad u = x + \tau t \qquad \widetilde{\epsilon} = \Delta(f,u,(1-\tau)t)~,
\end{align*} 
noting that $\widetilde{\epsilon}$ is the secant approximation bias on the interval $[u - (1-\tau)t,u+(1-\tau)t]$.
Since $u+(1-\tau)t = x+t$ and $[u - (1-\tau)t,u+(1-\tau)t] \subseteq [x-t,x + t]$, we have that 
\begin{align*}
\widetilde{\epsilon} &= \Delta(f,[u - (1-\tau)t,u+(1-\tau)t]) \\
&\overset{(\text{Lemma}~\ref{secant_approx_grows})}{\leq} \sup_{y \in [x-t,x+t]} \Sec[f,[x-t,x+t]](y) - f(y) \\
&\overset{(\text{Lemma}~\ref{lem:mainLemmaSimple})}{\leq} 2 \Delta(f,[x-t,x+t]) = 2 \epsilon.
\end{align*}

First, if $\widetilde{\epsilon} \leq \epsilon$ then
\begin{eqnarray}
\omega(f,u,\epsilon) \overset{(\text{Lemma}~\ref{unif_continuity_omega})}{\ge} \omega(f,u,\widetilde{\epsilon}) \overset{(\text{Lemma}~\ref{eps_f_u_lem})}{=} (1 - \tau)t.
\end{eqnarray}
On the other hand, if $\epsilon < \widetilde{\epsilon} \leq 2 \epsilon$ then
\begin{align*}
\omega(f,u,\epsilon) \overset{\text{Lemma}~\ref{unif_continuity_omega}}{\ge} \frac{\epsilon}{\widetilde{\epsilon}} \, \omega(f,u,\widetilde{\epsilon}) \geq \frac{1}{2} \, \omega(f,u,\widetilde{\epsilon}) \overset{(\text{Lemma}~\ref{eps_f_u_lem})}{=} \frac{(1 - \tau)t}{2}.
\end{align*}
In either case $\omega(f,u,\epsilon)  \ge \frac{(1 - \tau)t}{2}$, which conclude the proof.

\subsection{Proof of Lemma~\ref{secant_approx_grows}\label{sec:secant_approx_grows}}
We begin with a simple technical lemma.
\begin{lemma}\label{zero_cvx_lemma} Let $\varphi:[0,t] \to 1$ be a convex function satisfying $\varphi(0) = 0$. Then for all $c \in [0,1]$, $\varphi(ct) \le c\varphi(t)$. 
\end{lemma}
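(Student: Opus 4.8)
The plan is to apply the definition of convexity directly, with the single observation that the point $ct$ is a convex combination of the two endpoints $0$ and $t$ of the domain. Concretely, for any $c \in [0,1]$ I would write
\begin{align*}
ct = (1-c)\cdot 0 + c \cdot t,
\end{align*}
which is a legitimate convex combination since the weights $1-c$ and $c$ are nonnegative and sum to one, and the resulting point lies in $[0,t]$.

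The key step is then to invoke the convexity inequality for $\varphi$ at this convex combination:
\begin{align*}
\varphi(ct) = \varphi\bigl((1-c)\cdot 0 + c\cdot t\bigr) \le (1-c)\varphi(0) + c\,\varphi(t).
\end{align*}
Substituting the hypothesis $\varphi(0) = 0$ collapses the first term, leaving $\varphi(ct) \le c\,\varphi(t)$, which is exactly the claim. The boundary cases $c = 0$ (giving $\varphi(0) = 0 \le 0$) and $c = 1$ (giving $\varphi(t) \le \varphi(t)$) are covered automatically by the same inequality.

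There is essentially no obstacle here: the result is an immediate consequence of the definition of a convex function together with the normalization $\varphi(0)=0$, and requires no auxiliary lemmas or limiting arguments. The only thing to be mild about is that the inequality uses convexity on the full interval $[0,t]$, so one should simply note that $ct \in [0,t]$ whenever $c \in [0,1]$ so that $\varphi(ct)$ is well-defined on the stated domain.
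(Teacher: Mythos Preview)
Your proof is correct and takes essentially the same approach as the paper: both write $ct = c\cdot t + (1-c)\cdot 0$, apply the convexity inequality, and use $\varphi(0)=0$ to collapse the extra term. (In fact, your version is written more carefully; the paper's one-line proof has the inequality direction typo'd.)
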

\begin{proof} 
 $c\varphi(t) = c\varphi(t) + (1-c)\varphi(0) \le \varphi(ct + (1-c)0) \varphi(ct)$, where the first equality uses $\varphi(0) = 0$ and the second uses convexity. 
\end{proof}
We are now ready to prove Lemma~\ref{secant_approx_grows}:
\begin{proof}[Proof of Lemma~\ref{secant_approx_grows}] It suffices to prove that this is the case when $a = c$ or $d = b$, since then $\Sec[f,[a,b]](x) \le \Sec[f,[c,b]](x) \le  \Sec[f,[c,d]](x)$. We assume without loss of generality that $a = c$. Then, it suffices to show that, for $t \ge x$, the map $t \mapsto \Sec[f,[a,a+t]](x)$ is non-decreasing. We have
\begin{align*}
\Sec[f,[a,a+t]](x) = \frac{f(a)(a+t-x) + (x-a)f(a+t)}{t} = \frac{\varphi(t)}{t}~,
\end{align*}
where $\varphi(t) = f(a)(a+t-x) + (x-a)f(a+t)$. Since $\varphi$ is convex (sum of affine function and convex function as $x \ge a$, and $t \mapsto f(a+t)$ is convex), and $\varphi(0) = 0$, we conclude by Lemma~\ref{zero_cvx_lemma} that $\frac{\varphi(t)}{t}$ is non-decreasing, as needed. 
\end{proof}





\section{Empirical Results}
In this section, we validate our theoretical results through empirical comparisons of active and passive sampling using simulated data and data drawn from the behavioral literature.
In all experiments, a query at $x \in [0,1]$ results in an observation $y \overset{i.i.d.}{\sim} \mathcal{N}(f(x),\sigma^2)$ where $\sigma$ depends on the experiments and is known to the algorithm. 
We construct our confidence intervals $\phi(t,\delta)$ using \citet[Theorem 8]{kaufmann2016complexity}, scaled by $\sigma$. Further implementation details are described in Section~\ref{sec:implementation}.

\subsection{Piecewise Linear Function}
We begin by comparing the performance of the active and passive methods on a piecewise linear function, $f(x) = \max\{1 - 5x,0\}$, over the domain $x \in [0,1]$. We consider the noise level $\sigma^2 = .01$. 
As discussed in Remark~\ref{rem:piecewiselinear}, theory predicts that, up to logarithmic factors, the error incurred by passive sampling scales as $\|\fhat - \fst\|_{\infty} \sim n^{-1/3}$, whereas active sampling attains the parameteric rate $\|\fhat - \fst\|_{\infty} \sim n^{-1/2}$. As a benchmark, we plot a passive algorithm based on constrainted least squares (see, e.g.~\cite{dumbgen2004consistency}), and also plot the error incurred by sampling according to an ``oracle allocation'', which samples $f$ at the endpoints $\{0,1\}$, as well as the inflection point $x = 0.2$. The implementation details are deferred to the end of the section.

\begin{figure}[H]
\floatbox[{\capbeside\thisfloatsetup{capbesideposition={right,center},capbesidewidth=4cm}}]{figure}[\FBwidth]
{\caption{Comparison of active, passive and oracle performance on $f(x) = \max\{1 - 5x, 0\}$. The $x$-axis is the number of samples taken, and the $y$-axis is $L_\infty$ error. Dotted lines denote a least-squares trendline. A slope of $-p$ suggests a rate of $(\# \mathrm{samples})^{-p}$.}\label{fig:piecewise_linear}}
{\includegraphics[width=0.5\textwidth]{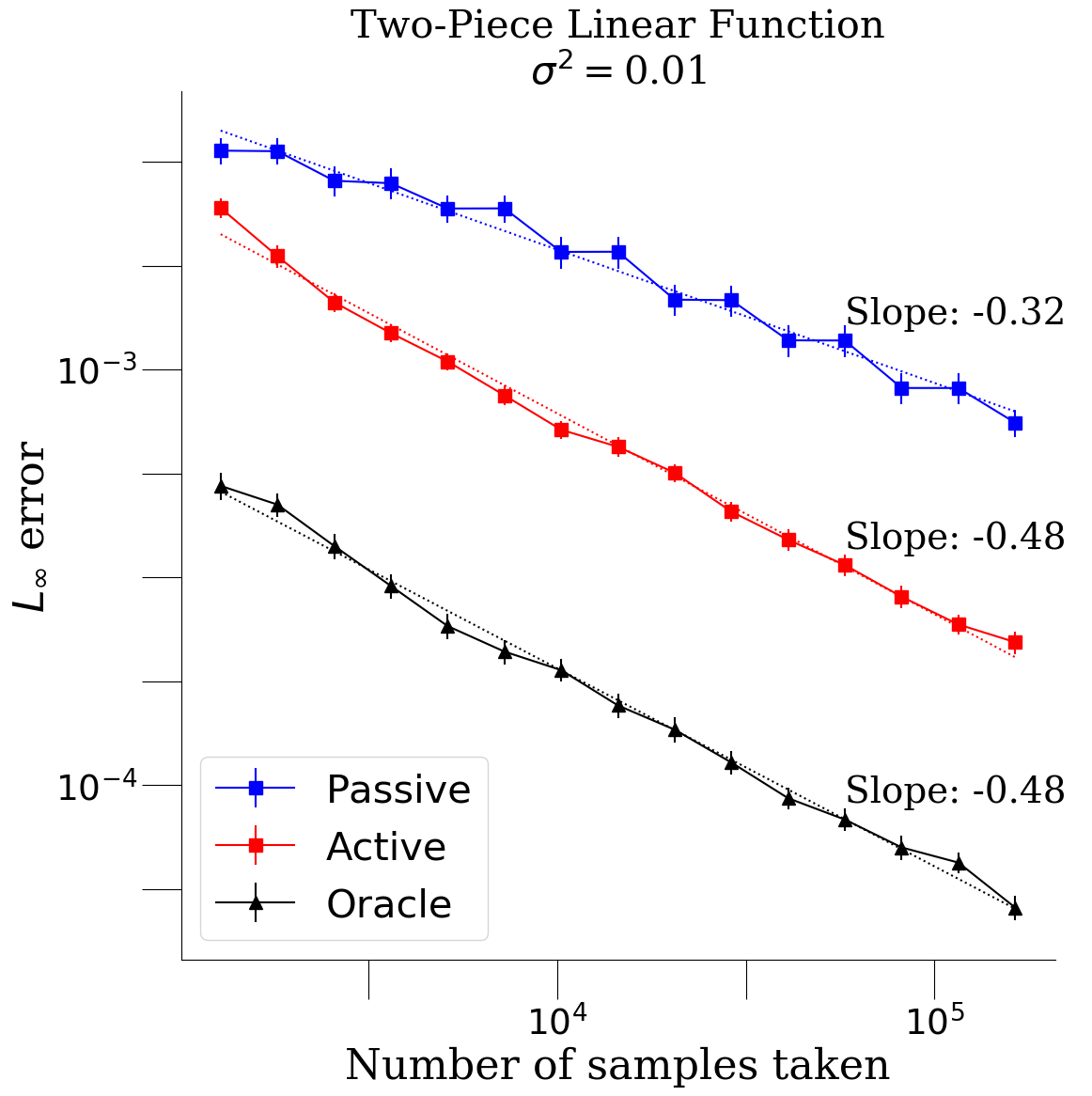}}
\end{figure}

Figure~\ref{fig:piecewise_linear} corroborates our theoretical predictions. The dotted trend lines correspond to a least-squares fit to the logarithm of the $x$ and $y$ coordinates, so that the displayed slopes approximate the exponent in the rate at which the errors decay. In particular, we see that the slope of the line corresponding to passive sampling is close to $-\frac{1}{3}$ indicating a rate approximately equal to $n^{-1/3}$, and the slopes for the active and oracle methods are close to $-\frac{1}{2}$. Observe that the oracle method still significantly outperforms the active sampling algorithm, perhaps explained by the additional $\log(1/\epsilon)$ superfluous locations the active procedure samples at to achieve $\epsilon$-error relative to the oracle method. 

\subsection{Data Derived Function}
Next, we evaluate the performance of our active algorithm on a convex function derived from real data. 250 participants  were asked to choose between a hypothetical reward of \$100 given immediately and a reward of \$115 given at a time $x = 0, 1, 2, \dots, 64$ days in the future (times were randomized and rescaled to be in $[0,1]$). 
We fit a convex function to this data using least squares and sampled from it as above; the function is displayed in Figure~\ref{fig:data_f}.
\begin{figure}[H]
\floatbox[{\capbeside\thisfloatsetup{capbesideposition={right,center},capbesidewidth=4cm}}]{figure}[\FBwidth]
{\caption{Data-derived discount function. Here, the $x$ value corresponds for the days for which the reward is delayed, and $y$ is the fraction of the population who would accept the delay for greater monetary reward.}\label{fig:data_f}}
{\includegraphics[width=0.5\textwidth]{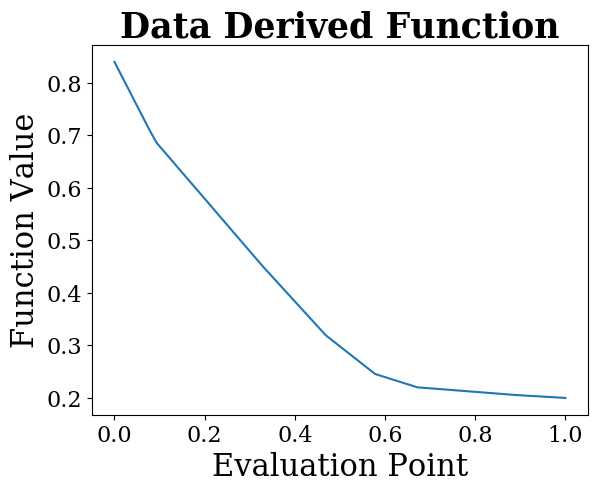}}
\end{figure}

\begin{figure}[H]
\floatbox[{\capbeside\thisfloatsetup{capbesideposition={right,center},capbesidewidth=4cm}}]{figure}[\FBwidth]
{\caption{Comparison of active and passive performance on the function depicted in Figure~\ref{fig:data_f}. The $x$-axis is the number of samples taken, and the $y$-axis is $L_\infty$ error. Dotted lines denote a least-squares trendline. }\label{fig:data_performance}}
{\includegraphics[width=0.5\textwidth]{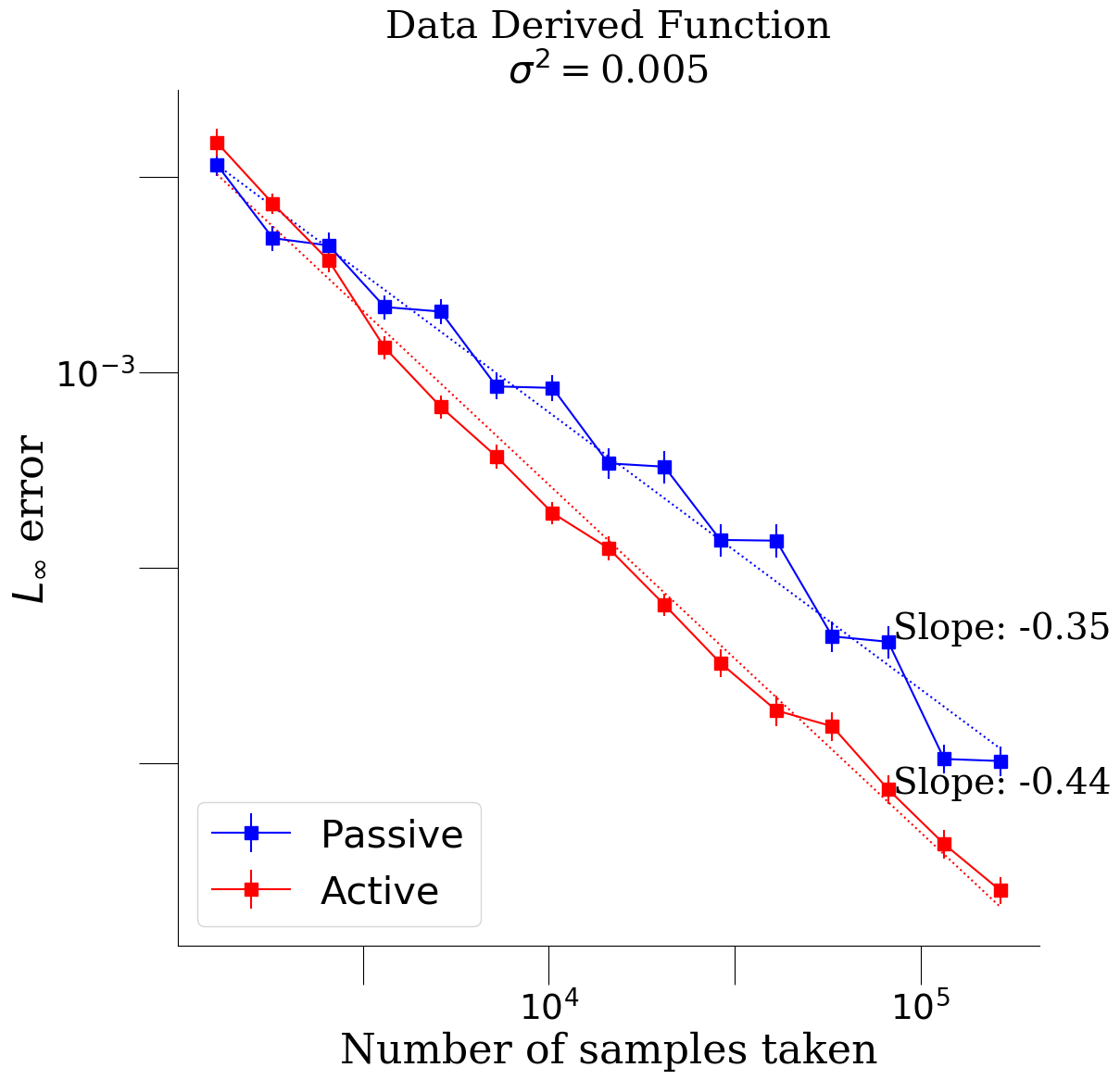}}
\end{figure}
In Figure~\ref{fig:data_performance}, we compare the performance of the active and passive algorithms for the function $f$ depicted in Figure~\ref{fig:data_f}. Again, the dotted trend lines correspond to a least-squares fit to the logarithmic of the $x$ and $y$ coordinates. We find that the passive algorithm appears to obtain a rate of $n^{-1/3}$, and the error of the active algorithm has a scaling closer to that of the parametric rate. 

For insight into why the active algorithm fares better on this $f$, we can examine the oracle allocations, as constructed in Proposition~\ref{prop:oracle}. We find that at higher levels of granularity, $f$ is well approximated by a piecewise linear function, and thus an oracle would only sample at a few, key design points (Figure~\ref{fig:data_oracle_alloc}):
\begin{figure}[H]
\floatbox[{\capbeside\thisfloatsetup{capbesideposition={right,center},capbesidewidth=4cm}}]{figure}[\FBwidth]
{\caption{A plot of the design points correspond to the oracle allocation constructed in Proposition~\ref{prop:oracle}, for granularities $\epsilon \in \{.01,.001\}$. Bar height is equal to $1000$ divided by the number of design points. }\label{fig:data_oracle_alloc}}
{\includegraphics[width=0.5\textwidth]{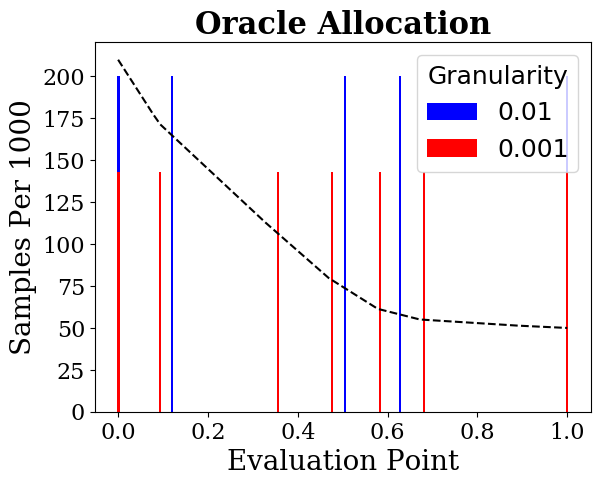}}
\end{figure}
Notice that as the granularity decreases, the oracle allocation refines its design points, dividing the function into regions in which the piecewise-linear approximation holds to a higher accuracy. 
\subsection{Implementation\label{sec:implementation}} The passive algorithm samples at design points $x_t$ along a dyadic sequence $\dyads = (0,1,1/2,1/4,3/4,1/8,\dots)$. The algorithm then estimates $\fst$ using the following constrained least squares problem. 
\begin{align}\label{eq:least_squares}
\fhat \in \arg\min_{f \in \class} \sum_{t =1}^{T} \|F(x_t) - \fhat\|_2^2~.
\end{align}
We found in practice that this least squares minimization performs surprising well in practice, drastically outperforming projections in the $\infty$-norm. We used this observation to modify the implementation of our active algorithm. At each round, the active algorithm alternates between sampling a design points $x \in \{x_{\ell(I^*)},x_{m(I^*)},x_{r(I^*)}\} $ as in Algorithm~\ref{AlgorithmNoise}, and sampling dyadic sequences supported on $I^*$ given by $x_{\ell(I^*)} + |I^*|\dyads$. We then return $\fhat$ using the constrained least-squares problem in~\eqref{eq:least_squares}. In addition, our implementation makes use of the sharper upper- and lower-confidence bounds described in Remark~\ref{remark:splitting}.


\clearpage
\bibliographystyle{plainnat}
\bibliography{manuscript}
\clearpage
\appendix




\section{Additional Remarks}

\begin{remark}[Gap Between Upper and Lower Bounds]\label{rem:gap_upper_lower}
	Let $f$ be a $k$-piecewise linear function $f$. Then there exists a set of $k$ intervals $\calI = \{I_i\}_{1 \le i \le k}$ such that $f$ is linear on each interval; measuring $f$ at $\{\xl,\xr,\xm\}_{I \in \calI}$ would be enough to estimate $f$ with zero error over $[0,1]$.
	Hence, we must have $\underline{N}(f,\epsilon) \lesssim k$.
	On the other hand, $\Lamavg(f,\epsilon) \approx k \log(1/\epsilon)$, and indeed one can show that for a $k$-piecewise linear function, $\log(\omega_{\max}/\omega_{\min}) \approx \log(1/\epsilon)$, yielding the necessary cancelation.
	As with the term $\log\frac{\tleft(f,\epsilon)\tright(f,\epsilon)}{\tleft(f,c\epsilon)\tright(f,c\epsilon)}$, $\log(\omega_{\max}/\omega_{\min})$ scales at most as $\log(1/\epsilon)$ for most reasonable functions, and can be bounded by $\log(\max_{x \in [0,1]}f''(x)/\min_{x \in [0,1]} f''(x))$ for any twice-differentiable function $f$.
	Overall, we conjecture that the true sample complexity lies closer to $\underline{N}(f,\epsilon)$, because in the noiseless setting, one can approximate left- and right-derivatives of $f$ to arbitrary accuracy using just two points. 
	This makes it possible to learn a 2-piecewise linear function with a constant number of function evaluations, rather than the $O(\log(1/\epsilon))$ implied by $\Lambda(f,\epsilon)$.
\end{remark}

\begin{remark}[Sub-Optimality of Non-Uniform Designs]\label{rem:non_unif_subopt} In this remark, we argue that although non-uniform designs can improve upon uniform designs for some functions of interest, in general they provide no benefit. 

To present the argument, we begin with by recalling the proof of Theorem~\ref{thm:passive_lb}. We argued that given $f \in \class$ and $x_0 \in [\tleft(f,2\epsilon), 1 - \tright(f,2\epsilon))]$, then unless a design collects  $\gtrsim (1 + \frac{\sigma^2}{\epsilon^2})\log(1/\delta)$ samples in the interior of the interval $I_{0}:= [x_0-\omega(f,x_0,2\epsilon),x_0+\omega(f,x_0,2\epsilon)]$, the alternative function
\begin{align*}
\widetilde{f}(x) := f(x) + \I(x \in I_0) \cdot \left(\Sec[f,I_0](x) - f(x)\right)
\end{align*}
satisfies $\sup_{g \in \{f,\widetilde{f}\}}\Pr_{\Alg,g}[\|\widehat{f} - g\|_{\infty} \ge \epsilon] \ge \delta$. In Theorem~\ref{thm:passive_lb}, we chose $x_0 = x_* := \arg\min\{ \omega(f,x,2\epsilon) |x\in [\tleft(f,2\epsilon), 1 - \tright(f,2\epsilon))]\}$, and defined $I_*$ analogously. 
By uniformity, the design could not oversample in the interior of $I_*$, and thus the total number of samples collected, up to constants, would need to be
\begin{align*}
\frac{1}{|I_*|}(1 + \frac{\sigma^2}{\epsilon^2})\log(1/\delta) \gtrsim \Lammax(f,2\epsilon)\cdot (1 + \frac{\sigma^2}{\epsilon^2})\log(1/\delta)~.
\end{align*}
Without uniformity, we cannot rule out that the design concentrates its samples in $I_*$. Nevertheless, we can show that there is a function ``similar'' to $f$ which incurs the same lower bound. For ease, assume that $f$ is right differentiable at $x = 1$ and left differentiable at $x = 0$.%
\footnote{In general, $f$ is convex and thus right differentiable at $1 - \eta$ and left differentiable at $\eta$ for all $\eta > 0$. Hence, one can modify $f$ with a linear extension to $[0,\eta]$ and $[1-\eta,1]$ to ensure that this condition holds.}
Letting $\partial_-$ and $\partial_+$ denote the right- and left-derivative, we can define the shift function as 
\begin{align*} \fshift(x) := 
\begin{cases} f(x-t) & x \in [0 \vee t, 1 \wedge 1 + t] \\
f(1 ) + (x - 1 - t)\partial_{-} f(1) & x \ge 1 - t, t < 0, \\
f(0) + (t - x)\partial_{+} f(0) & x \le t, t > 0
\end{cases}.
\end{align*}
We observe that $\fshift$ is convex, and if $x^*$ is as above, then for any $t \in (x^* + \omega(f,x^*,2\epsilon) - 1,   x^* - \omega(f,x^*,2\epsilon))$, one can verify that
\begin{align*}
\omega(\fshift, x^* + t, 2\epsilon) = \omega(f,x^*,2\epsilon)~.
\end{align*}
Then, for any interval $\Ibad = [a,b] \subset [0,1]$ for which $|\Ibad| \ge 2I^*_{\ell}$, there exists a $\tbad \in \R$ such that $I^*_{\leftarrow \tbad} := I^* + \tbad |I^*| \subset \Ibad$. Hence, if  $\Exp[|x_t : x_t \in \Ibad|] \ll \sigma^2 \log(1/\delta)$, then even if the passive design can estimate $f$ correctly, it will fail to distinguish between the shifted function $f_{\leftarrow \tbad}$ and the shifted alternative:
\begin{align*}
\widetilde{f}_{\leftarrow \tbad}(x) := f_{\leftarrow \tbad}(x) +  \I(x \in I^*_{\leftarrow \tbad})(\Sec[f_{\leftarrow \tbad},I^*_{\leftarrow \tbad}](x) - f_{\leftarrow \tbad}(x)).
\end{align*}
As a consequence, any algorithm which is $\delta$-correct for all $f$ shifts $f_{\leftarrow t}$, and alternatives defined above must collect at least $\gtrsim \sigma^2 \log(1/\delta) \cdot \omega(f,x^*,2\epsilon)^{-1}$ samples. The above argument can also be extended to the case where the shift $\tbad$ is chosen at random (as opposed to depending on the design).
\end{remark}
\end{document}